\titleformat{\paragraph}[hang]{\itshape\normalsize\bfseries}{\theparagraph}{1em}{}
\titlespacing*{\paragraph}{0pt}{5pt plus 2pt minus 2pt}{0pt}
\newtheorem{theorem}{Theorem}
\newtheorem{lemma}{Lemma}
\newtheorem{corollary}{Corollary}
\newtheorem{prop}[theorem]{Proposition}%
\newtheorem{fact}{Fact}
\newtheorem{assumption}{Assumption}
\providecommand{\customgenericname}{}
\newcommand{\newcustomtheorem}[2]{%
  \newenvironment{#1}[1]
  {%
   \renewcommand\customgenericname{#2}%
   \renewcommand\theinnercustomgeneric{##1}%
   \innercustomgeneric
  }
  {\endinnercustomgeneric}
}
\newcommand{\cA}{\mathcal{A}}
\newcommand{\cN}{\mathcal{N}}
\newcommand{\cO}{\mathcal{O}}
\newcommand{\cB}{\mathcal{B}}
\newcommand{\cX}{\mathcal{X}}
\newcommand{\cY}{\mathcal{Y}}
\newcommand{\cJ}{\mathcal{J}}
\newcommand{\E}{\mathbb{E}}
\newcommand{\N}{\mathbb{N}}
\newcommand{\R}{\mathbb{R}}
\newcommand{\Z}{\mathbb{Z}}
\DeclareMathOperator*{\argmax}{arg\,max}
\newcommand{\indic}{\mathds{1}}
\renewcommand{\epsilon}{\varepsilon}
\renewcommand{\hat}{\widehat}
\renewcommand{\tilde}{\widetilde}
\renewcommand{\leq}{\leqslant}
\renewcommand{\geq}{\geqslant}
\newcommand{\demi}{\nicefrac{1}{2}}
\newcommand{\eqdef}{:=}
\newcommand{\abs}[1]{\left| #1 \right|}
\newcommand{\normp}[1]{\Vert #1 \Vert_p}
\newcommand{\bnormp}[1]{\bigl\Vert #1 \bigr\Vert_p}
\newcommand{\Bnormp}[1]{\left\Vert #1 \right\Vert_p}
\newcommand{\norm}[1]{\left\lVert#1\right\rVert}
\DeclareMathOperator{\GreedyBox}{GreedyBox}
\DeclareMathOperator{\SGreedyBox}{Stochastic GreedyBox}
\DeclareMathOperator{\GreedyWB}{GreedyWidthBox}
\newcommand{\point}{b}
\newcommand{\Point}{X}
\newcommand{\lleft}{\mathrm{left}}
\newcommand{\rright}{\mathrm{right}}
\newcommand{\width}{\mathrm{width}}
\begin{document}

\title[Adaptive approximation of monotone functions]{Adaptive approximation of monotone functions}


\author*[1]{\fnm{Pierre} \sur{Gaillard}}\email{pierre.gaillard@inria.fr}
\equalcont{These authors contributed equally to this work.}

\author[2,3]{\fnm{S\'{e}bastien} \sur{Gerchinovitz}}\email{sebastien.gerchinovitz@irt-saintexupery.com}
\equalcont{These authors contributed equally to this work.}

\author[4]{\fnm{\'{E}tienne} \sur{de Montbrun}}\email{edemontb@ens-paris-saclay.fr}
\equalcont{These authors contributed equally to this work.}

\affil[1]{\orgdiv{Univ. Grenoble Alpes}, \orgname{INRIA}, \orgname{CNRS}, \orgname{Grenoble INP}, \orgname{LJK},
  \orgaddress{
    \city{Grenoble}, \postcode{38000},
    \country{France}}}
    
\affil[2]{\orgdiv{DEEL}, \orgname{IRT Saint Exup\'{e}ry}, \orgaddress{\street{3 rue Tarfaya}, \city{Toulouse}, \postcode{31400},
    \country{France}}}
    
\affil[3]{\orgdiv{UMR5219}, \orgname{Institut Math\'{e}matiques de Toulouse}, \orgaddress{\street{118 route de Narbonne}, \city{Toulouse}, \postcode{31400},
    \country{France}}}

\affil[4]{
  \orgname{TSE}, \orgaddress{\street{1, Esplanade de l'Universit\'{e}}, \city{Toulouse}, \postcode{31000},
    \country{France}}}
    

\abstract{
    We study the classical problem of approximating a non-decreasing function $f: \cX \to \cY$ in $L^p(\mu)$ norm by sequentially querying its values, for known compact real intervals $\cX$, $\cY$ and a known probability measure $\mu$ on $\cX$. For any function~$f$ we characterize the minimum number of evaluations of $f$ that algorithms need to guarantee an approximation $\hat{f}$ with an $L^p(\mu)$ error below $\epsilon$ after stopping.
    Unlike worst-case results that hold uniformly over all $f$, our complexity measure is dependent on each specific function $f$.
    To address this problem, we introduce $\GreedyBox$, a generalization of an algorithm originally proposed by Novak (1992) for numerical integration. We prove that $\GreedyBox$ achieves an optimal sample complexity for any function $f$, up to logarithmic factors. Additionally, we uncover results regarding piecewise-smooth functions. Perhaps as expected, the  $L^p(\mu)$ error of $\GreedyBox$ decreases much faster for piecewise-$C^2$ functions than predicted by the algorithm (without any knowledge on the smoothness of $f$). A simple modification even achieves optimal minimax approximation rates for such functions, which we compute explicitly. In particular, our findings highlight multiple performance gaps between adaptive and non-adaptive algorithms, smooth and piecewise-smooth functions, as well as monotone or non-monotone functions. Finally, we provide numerical experiments to support our theoretical results.
}

\keywords{$L^p$-approximation, sequential algorithms, numerical integration} 



\maketitle

\section{Introduction}
\label{sec:intro}

Let $\cX,\cY$ be any non-empty compact intervals in $\R$. The problem we consider in this paper is the following. Given any non-decreasing function $f:\cX \to \cY$ that is initially unknown but that a learner can sequentially evaluate at points $x_1, x_2, \ldots \in \cX$ of their choice, how to best estimate $f$ with as few evaluations of $f$ as possible? We will study the $L^p(\mu)$ error as a performance criterion, for some known integer $p \geq 1$ and some known probability measure $\mu$ on $\cX$. More precisely, we will study algorithms that can guarantee an $L^p(\mu)$ error observably below $\epsilon$ after a finite number of evaluations, and will characterize the minimum number of such evaluations to reach this goal, for any non-decreasing function $f$.
Even though this problem is a classical one, finding the best $f$-dependent sample complexity and an algorithm that achieves it is still an open question. 

To make things more formal, we first describe how the learner interacts with the unknown function $f$.

\paragraph{Online protocol.}  Given an accuracy level $\epsilon >0$, the learner first chooses a point $x_1 \in \cX$, then observes $f(x_1)\in \cY$, then chooses $x_2 \in \cX$, then observes $f(x_2)\in \cY$, etc. At each round $t \geq 2$, the point $x_t \in \cX$ is chosen as a measurable function of the whole history $h_{t-1} \eqdef \bigl(x_1,f(x_1),\ldots,x_{t-1},f(x_{t-1})\bigr)$. The process ends after a finite number $\tau_{\epsilon} \geq 1$ of rounds whose value may be determined during the observation process ($\tau_{\epsilon}$ is a stopping time).\footnote{This means that, for any integer $t \geq 1$, whether the inequality $\tau_{\epsilon} \leq t$ is true or not is fully known after observing $h_t$ (in a measurable way).} Finally, after observing the whole history $h_{\tau_{\epsilon}}$, the learner outputs a function $\hat{f}_{\tau_{\epsilon}}:\cX \to \cY$ as a candidate for estimating $f$. We will call \emph{algorithm}\footnote{Throughout the paper our definition of algorithms refers in fact to \emph{adaptive algorithms} that adjust their sequence of points $x_1,\dots,x_t$ to the function $f$ to be approximated based on previous observations. The latter should be contrasted with \emph{non-adaptive algorithms}, for which the sequence of points $(x_t)_{t\geq 1}$ is fixed for all functions $f$'s. } any procedure that, given $\epsilon>0$ and $f$, returns a tuple $\bigl(\tau_\epsilon, (x_t)_{1\leq t\leq \tau_\epsilon },\hat{f}_{\tau_\epsilon}\bigr)$ in $\N^*\times \cX^{\tau_\epsilon} \times (\cX\to \cY)$ satisfying the above online protocol. We only consider deterministic algorithms, except for the integral estimation problem (Section~\ref{sec:integralestimation}) for which randomized algorithms achieve better rates in expectation. 

\paragraph{Learning goal: small number of evaluations with guaranteed \texorpdfstring{$L^p(\mu)$}{Lp(mu)} error.} Let $p \geq 1$ be any positive integer and $\mu$ be any probability measure on $\cX$. The performance of the learner will be evaluated by its \emph{$L^p(\mu)$ error} defined by
\begin{equation}
\Bnormp{\hat{f}_{\tau_\epsilon} - f} \eqdef \left(\int_\cX \big|\hat{f}_{\tau_\epsilon}(x) - f(x)\big|^p d\mu(x)\right)^{1/p} \;.
\label{eq:LpError}
\end{equation}
In all the sequel, an accuracy level $\epsilon > 0$ will be initially given to the learner, who will be required to guarantee that $\normp{\hat{f}_{\tau_\epsilon} - f} \leq \epsilon$ after stopping, for any (initially unknown) non-decreasing function $f:\cX \to \cY$. Given this constraint, the goal of the learner is to make as few evaluations of $f$ as possible, that is, to minimize the stopping time $\tau_\epsilon$. We will also refer to $\tau_\epsilon$ as the \emph{sample complexity} of the algorithm.

\paragraph{Main intuitions and informal presentation of the results.} Before detailing our results in the next sections, we describe the main intuitions in the special case where $p=1$ and $\mu$ is the Lebesgue measure on $\cX = \cY = [0,1]$. The ideas are introduced informally and will be made more precise later.

Imagine that we have already evaluated $f$ at some points $x_1,\ldots,x_t \in [0,1]$. Since we know that $f$ is non-decreasing and bounded between $0$ and $1$, we can deduce that the graph of $f$ is contained inside the $t+1$ adjacent rectangles (or \emph{boxes}) shown in Figure~\ref{fig:mainintuitions}.\footnote{Two boxes are degenerate (and thus not visible) on Figure~\ref{fig:mainintuitions}, since GreedyBox evaluates $f$ at the endpoints $0$ and $1$.} Therefore, estimating $f$ with any function $\hat{f}_t$ whose graph also lies in these $t+1$ adjacent boxes will guarantee an $L^1$ error $\Vert \hat{f}_t - f \Vert_1$ of at most the total area $\xi_t$ of these boxes. We can even achieve $\Vert \hat{f}_t - f \Vert_1  \leq \xi_t/2$ by estimating $f$ with a piecewise-constant function (on each box $B_j = [c_{j-1},c_j] \times \bigl[y_j^-,y_j^+\bigr]$, choose $\hat{f}_t(x) = (y_j^- + y_j^+)/2$) or with a piecewise-affine function (choose $\hat{f}_t$ that linearly interpolates all the observed points $(c_j,f(c_j))$).

Now let $\epsilon > 0$, and suppose that we want to guarantee an $L^1$ error below $\epsilon$ as quickly as possible. Given the above comment, it seems that an ideal choice of the sequence $x_1,x_2,\ldots$ is such that the total area $\xi_t$ of the $t+1$ adjacent boxes at round $t$ falls below $2\epsilon$ for the smallest value of $t$ possible.
We derive lower and upper bounds that support this intuition:
\begin{itemize}[nosep]
    \item Lower bound: if after stopping (at time $\tau_{\epsilon}$) we want to guarantee that $\Vert \hat{f}_{\tau_{\epsilon}} - f \Vert_1 \leq \epsilon$ whatever $f$, then $\tau_{\epsilon}+1$ must be larger than or equal to the minimum number (denoted by $\cN_1(f,2\epsilon)$) of adjacent boxes that contain the graph of $f$ and whose total area is at most of $2 \epsilon$ (see Theorem~\ref{thm:lowerbound}).
    \item A nearly optimal greedy algorithm: of course an optimal choice of $x_1,x_2,\ldots$ is impractical (it would require the full knowledge of the function~$f$). However, a natural algorithm is to choose at each round~$t$ the next point $x_{t+1}$ in the middle of the box with maximum area, so as to greedily reduce the total area of the boxes. See Figure~\ref{fig:mainintuitions} for an illustration. This algorithm, which we call $\GreedyBox$, was suggested in a similar form by \citet{Novak1992}. One of our main contributions is to show that the stopping time $\tau_{\epsilon}$ of $\GreedyBox$ is always at most of the order of the lower bound $\cN_1(f,2\epsilon)$ up to a logarithmic factor in $1/\epsilon$ (see Theorem~\ref{thm:GreedyBoxUB}).
\end{itemize}
Both the lower and upper bounds are proved in a more general setting, in $L^p(\mu)$ norm, for any integer $p\geq 1$ and any probability measure $\mu$ on $\cX$.

\begin{figure}
  \begin{center}
    \begin{tikzpicture}[scale=4]
  \draw[->, >=latex, line width=1] (0, 0) -- (0,1.1);
  \draw[->, >=latex, line width=1] (0, 0) -- (1.1,0);

  \draw[fill=gray, opacity=0.5] (0,0) rectangle (0.25, 0.25^2);
  \draw[fill=gray, opacity=0.5] (0.25, 0.25^2) rectangle (0.5, 0.5^2);
  \draw[fill=gray, opacity=0.5] (0.5, 0.5^2) rectangle (0.75, 0.75^2);
  \draw[fill=gray, opacity=0.5] (0.75, 0.75^2) rectangle (0.875, 0.875^2);
  \draw[fill=gray, opacity=0.5] (0.875, 0.875^2) rectangle (1, 1);
  \draw[blue, dashed] (0.625,0) -- (0.625,0.625^2);

  \draw[domain=0:1, samples=101, line width=1, variable=\x] plot ({\x}, {\x^2});

  \draw[line width=1, dashed] (0,0) -- (0.25, 0.25^2) -- (0.5, 0.5^2) --(0.75, 0.75^2) -- (0.875, 0.875^2) -- (1, 1);

  \draw (0,0) node[below]{$x_0$};
  \draw (0,0) node{$|$};
  \draw (0.25,0) node[below]{$x_1$};
  \draw (0.25,0) node{$|$};
  \draw (0.5,0) node[below]{$x_2$};
  \draw (0.5,0) node{$|$};
  \draw (0.75,0) node[below]{$x_3$};
  \draw (0.75,0) node{$|$};
  \draw (0.875,0) node[below]{$x_4$};
  \draw (0.875,0) node{$|$};
  \draw (1,0) node[below]{$x_5$};
  \draw (1,0) node{$|$};
  \draw (0.625,0) node[blue]{$|$};
  \draw (0.625,0) node[below, blue]{$x_6$};
\end{tikzpicture}
  \end{center}
  \caption{\label{fig:mainintuitions} An illustrative example of the problem on the square function. After 5 iterations, the output of $\GreedyBox$ is represented by the gray boxes. The estimated function is represented by the solid line and its approximation by the dashed line. The next evaluation point $x_6$ divides the box with the largest area in half.}
\end{figure}
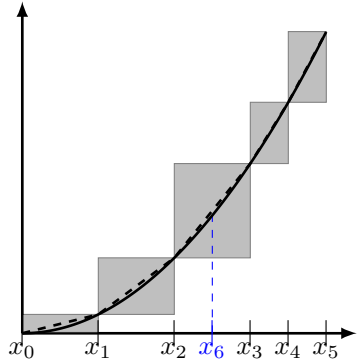

\subsection{Contributions and outline of the paper}
Our main contribution is to characterize the optimal sample complexity of algorithms with guaranteed $L^p(\mu)$ error after stopping (see after Equation \eqref{eq:LpError}), for any non-decreasing function $f$, any $p \geq 1$ and any probability measure $\mu$. 
More precisely:
\begin{itemize}[nosep]
    \item In Section~\ref{sec:lowerbound} we prove a general $f$-dependent lower bound that applies to any algorithm with guaranteed $L^p(\mu)$ error after stopping (see Theorem~\ref{thm:lowerbound}).
    \item In Section~\ref{sec:upperbound} we study GreedyBox (Algorithm \ref{alg:GreedyBox}) and show that its sample complexity matches our lower bound up to logarithmic factors (see Theorem~\ref{thm:GreedyBoxUB}). An important practical feature of GreedyBox is that, at each iteration $t$, it provides a certificate that upper bounds its error and stops as soon as this certificate falls below $\epsilon$.
\end{itemize}

All the results are written in the case where $\cX = \cY = [0,1]$ for convenience, but all of them can be rescaled to any non-empty compact intervals $\cX$ and $\cY$ of $\R$.\footnote{The case where $\cX$ is not closed can be addressed similarly via a simple extension argument and by replacing the values of $f$ at the endpoints of $\cX$ by the endpoints of $\cY$.}

In Section~\ref{sec:improvement_special} we study consequences (with improved rates) for two specific subproblems: 
\begin{itemize}[nosep,leftmargin=*]
    \item \emph{Integral approximation.}
    For this problem, we show that the deterministic version of GreedyBox (Algorithm~\ref{alg:GreedyBox}) is also optimal up to logarithmic factors. However, drawing inspiration from \citet{Novak1992}, we introduce a randomized version in Section~\ref{sec:stoGreedyBox} that improves the accuracy by a factor of $t^{-1/2}$ in expectation after $t$ iterations (Theorem~\ref{thm:SGreedyBoxUB}).
    \item \emph{Worst-case function approximation under a smoothness assumption.}
    In the worst case, the upper bound of Theorem~\ref{thm:GreedyBoxUB} is of the order of $\epsilon^{-1}$ for monotone functions.
    It is well known that for smooth functions, better rates can be achieved using improved quadrature formulas (e.g., \citet{davis2007methods}). For example, $C^2$ functions can be $\epsilon$-approximated in any $L^p(\mu)$ norm after roughly $\epsilon^{-1/2}$ evaluations (also discussed in Appendix~\ref{app:C2trapeze}). In Section~\ref{sec:piecewiseregular}, we provide a minimax lower bound showing that $\smash{\Omega\big(\epsilon^{-1+(\frac{1-\alpha}{1+p})_+}\big)}$ function evaluations are necessary for any algorithm seeking to approximate a piecewise-affine function with $\epsilon^{-\alpha}$ singularities (see Proposition~\ref{prop:counterexample_t_new}). We establish that GreedyBox in fact achieves this rate for piecewise-$C^2$ functions when $\alpha \geq 1/2$, but is suboptimal in the regime $0\leq \alpha < 1/2$ (Theorem~\ref{thm:c2upper_bound} and Proposition~\ref{prop:counterexample_t}). Lastly, we propose a simple modification of GreedyBox that optimally addresses both regimes. These results highlight three significant differences for the $L^p$-approximation problem: 
    \begin{itemize}[nosep,leftmargin=*]
    \item  between monotone piecewise-$C^k$ functions with two or more discontinuities, for which a better rate than $\epsilon^{-1/2}$ is not achievable, and $C^k$ functions (with no singularities) which can be approximated at a rate of $\cO(\epsilon^{-1/k})$;
    \item between general piecewise-$C^2$ functions and monotone piecewise-$C^2$ functions, with a minimax rate respectively of at least $\Omega(\epsilon^{-p})$ and at most $o(\epsilon^{-1})$ for $\alpha <1$; 
    \item between non-adaptive algorithms, which need $\Omega(\epsilon^{-1})$ function evaluations, and adaptive ones, which only require $o(\epsilon^{-1})$  for $\alpha <1$.
    \end{itemize}
\end{itemize}

Finally, in Section~\ref{sec:experiments}, we provide numerical experiments that compare GreedyBox to the trapezoidal method on several functions $f$. Our simulations validate the rates anticipated by our analysis and demonstrate the superiority of our approach compared to the uniform trapezoidal rule in approximating monotone piecewise-smooth functions within the $L^p$ norm.

\subsection{Important definitions and notation}
\label{sec:definitions}

We now introduce several definitions and notations that will be useful to present our results more formally. In all the sequel, we work with $\cX = \cY = [0,1]$, some fixed integer $p \geq 1$ and a probability measure $\mu$ on $[0,1]$.

\paragraph{Standard notation.} We denote by $\N^*$ the set of integers greater than or equal to $1$. For any $x \in \R$, we denote the floor and ceiling functions at $x$ by $\lfloor x \rfloor \eqdef \max\{k \in \Z: k \leq x\}$ and $\lceil x \rceil \eqdef \min\{k \in \Z: k \geq x\}$ respectively. For any measurable function $g:[0,1] \to \R$, we denote the $L^p(\mu)$-norm of $g$ by $\normp{g} \eqdef \bigl(\int_{[0,1]} |g(x)|^p d \mu(x)\bigr)^{1/p}$.

\paragraph{Box, width, generalized area.}
We call \emph{box} any subset $B = [x^-,x^+] \times [y^-,y^+] \subseteq [0,1]^2$ with $x^- < x^+$
and $y^- \leq y^+$.  Its \emph{width} is given by
\[
\width(B) \eqdef \mu\bigl((x^-,x^+)\bigr)
\]
and its \emph{generalized area} is given by
\[
\cA_p(B) \eqdef \Bigl((y^+-y^-)^p \, \mu\bigl((x^-,x^+)\bigr)\Bigr)^{1/p} \;.
\]
Note that the above definitions consider open intervals $(x^-,x^+)$.
The generalized area corresponds to the usual notion of area when $p=1$ and $\mu$ is the Lebesgue measure.

\paragraph{Adjacent boxes.} We denote by $\cB$ the set of all boxes. We say that a finite sequence of $t \geq 1$ boxes $B_1,\dots,B_t \in \cB$ are \emph{adjacent} if and only if they are of the form $B_j = [c_{j-1},c_j] \times [y_j^-,y_j^+]$ for all $j=1,\ldots,t$, for some sequence $0=c_0 < c_1 < \ldots < c_{t-1} < c_t = 1$.

\paragraph{Box-cover and box-covering number of a function.}
Let $f:[0,1]\to [0,1]$ be any non-decreasing function.
We call \emph{box-cover of $f$} any sequence $B_1,\dots,B_t \in \cB$ of adjacent boxes that contains the graph of~$f$ except maybe at the boxes' endpoints, i.e., writing $B_j = [c_{j-1},c_j] \times [y_j^-,y_j^+]$ as above, such that $\bigl\{(x,f(x)) : x \in [0,1] \setminus \{c_0,\ldots,c_t\} \bigr\} \subseteq \cup_{i=1}^t B_i$.
Furthermore, given $\epsilon > 0$, we define two complexity quantities:
\begin{enumerate}
    \item[(i)] $\cN_p(f,\epsilon)$ denotes the minimum cardinality $t$ of a box-cover $B_1,\dots,B_t$ of $f$ whose generalized areas satisfy $\smash{\bigl(\sum_{i=1}^t \cA_p(B_i)^p\bigr)^{1/p} \leq \epsilon}$. We call this quantity the \emph{box-covering number of $f$ at scale $\epsilon$}.
    \item[(ii)] $\cN_p'(f,\epsilon)$ denotes the minimum cardinality $t$ of a box-cover $B_1,\dots,B_t$ of $f$ with generalized areas $\cA_p(B_i) \leq \epsilon$ for all $i=1,\ldots,t$.
\end{enumerate}

All our main results will be expressed in terms of $\cN_p(f,\epsilon)$; the other quantity $\cN_p'(f,\epsilon)$ will however be useful in the proofs. The connections between the two are described in Appendix~\ref{sec:twocomplexitynotions}.

We now reinterpret the condition $\bigl(\sum_{i=1}^t \cA_p(B_i)^p\bigr)^{1/p} \leq \epsilon$ appearing in (i) in an equivalent way. When $p=1$, this condition  corresponds to the total area of the boxes being bounded by $\epsilon$ (as mentioned earlier in the introduction). For general $p \geq 1$, an equivalent and useful formulation is the following. Denote by $B_j = [c_{j-1},c_j] \times [y_j^-,y_j^+]$, $j=1,\ldots,t$, the boxes of the cover, and by $f^-(x) \eqdef \sum_{j=1}^t y_j^- \indic_{x \in (c_{j-1},c_j)}$ and $f^+(x) \eqdef \sum_{j=1}^t y_j^+ \indic_{x \in (c_{j-1},c_j)}$ the best known lower and upper bounds on the function $f$ inside the boxes $B_j$. Then, $\bigl(\sum_{i=1}^t \cA_p(B_i)^p\bigr)^{1/p} \leq \epsilon$ is equivalent to
\begin{equation}
\label{eq:equivalentBoxCoveringNumber}
    \biggl(\, \sum_{j=1}^t \bigl(y_j^+-y_j^-\bigr)^p \mu\bigl((c_{j-1},c_j)\bigr)\biggr)^{1/p} \leq \epsilon\;, \qquad \textrm{that is,} \qquad \normp{f^+-f^-} \leq \epsilon \;.
\end{equation}
The last condition $\normp{f^+-f^-} \leq \epsilon$ implies that $f^-$ and $f^+$ are good lower and upper bounds on the function $f$ outside of the points $c_j$. Note an interesting connection with the definition of bracketing entropy in empirical processes theory (see, e.g., \cite[Chapter 19]{vdV98-AsymptoticStatistics} and references therein). 

The following lemma shows that $\cN_p(f,\epsilon)$ is always well defined and at most of the order of $1/\epsilon$. The proof is postponed to Appendix~\ref{sec:behaviorOfN}, where we collect other useful properties about $\cN_p(f,\epsilon)$.

\medskip
\begin{lemma}
\label{lem:N-basic}
For all non-decreasing functions $f:[0,1] \to [0,1]$ and $\epsilon > 0$, the quantity $\cN_p(f,\epsilon)$ is well defined and upper bounded by
\begin{equation}
\label{eq:trivialupperbound}
\cN_p(f,\epsilon) \leq \lceil 1/\epsilon \rceil \;.
\end{equation}
\end{lemma}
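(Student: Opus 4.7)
My approach is to exhibit, for any non-decreasing $f$ and any $\epsilon > 0$, an explicit admissible box-cover of $f$ with at most $\lceil 1/\epsilon \rceil$ boxes whose generalized areas satisfy the required constraint. This simultaneously shows that $\cN_p(f,\epsilon)$ is well defined and bounded by $\lceil 1/\epsilon \rceil$.

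Set $t := \lceil 1/\epsilon \rceil$ and $v_j := j/t$ for $j = 0,1,\ldots,t$. The construction is a quantile-like partition: define $c_j := \inf\{x \in [0,1] : f(x) \geq v_j\}$ for $1 \leq j \leq t-1$ (with the convention $c_j = 1$ if the set is empty), together with $c_0 := 0$ and $c_t := 1$. Since $f$ is non-decreasing, the sequence $(c_j)_{0 \leq j \leq t}$ is non-decreasing.

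The key intermediate step is to show that on every open interval $(c_{j-1}, c_j)$ with $c_{j-1} < c_j$, one has $v_{j-1} \leq f(x) \leq v_j$. The inequality $f(x) \leq v_j$ for $x < c_j$ follows directly from the definition of $c_j$ as an infimum; the inequality $f(x) \geq v_{j-1}$ for $x > c_{j-1}$ follows from the monotonicity of $f$ combined with the existence of points $y$ with $c_{j-1} \leq y < x$ and $f(y) \geq v_{j-1}$. Hence the box $[c_{j-1}, c_j] \times [v_{j-1}, v_j]$ has height $1/t$ and contains the portion of the graph of $f$ lying above the open interval.

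I would then remove duplicates from $(c_j)$ to obtain a strictly increasing subsequence $0 = d_0 < d_1 < \ldots < d_s = 1$ with $s \leq t$. Since the duplicates form contiguous blocks of indices, two consecutive distinct values $d_{k-1}, d_k$ still correspond to two indices of the original sequence that differ by exactly one, so the key property above transfers: each interval $(d_{k-1}, d_k)$ yields a valid box $B_k$ of height at most $1/t$. Plugging into the definition of generalized area,
\[
\sum_{k=1}^s \cA_p(B_k)^p \;\leq\; (1/t)^p \sum_{k=1}^s \mu\bigl((d_{k-1}, d_k)\bigr) \;\leq\; (1/t)^p,
\]
and taking the $p$-th root bounds the total by $1/t \leq \epsilon$, as needed. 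The only delicate point is the bookkeeping around duplicates of $(c_j)$, which correspond precisely to jumps of $f$ exceeding $1/t$; once one observes that collapsing a duplicate block amounts to passing from $c_b$ to $c_{b+1}$ in the original sequence, the height bound is preserved and the argument closes, giving $\cN_p(f,\epsilon) \leq s \leq \lceil 1/\epsilon \rceil$.
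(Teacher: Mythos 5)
Your proof is correct and follows essentially the same route as the paper's: both partition the range $[0,1]$ into $\lceil 1/\epsilon\rceil$ horizontal slabs of height $1/\lceil 1/\epsilon\rceil$, take the corresponding quantile-type breakpoints in the domain (the paper via $\sup\{x: f(x)\leq i/n\}$, you via $\inf\{x: f(x)\geq v_j\}$), discard degenerate intervals, and bound the total generalized area by $(1/\lceil 1/\epsilon\rceil)^p\sum_k \mu\bigl((d_{k-1},d_k)\bigr)\leq \epsilon^p$. Your duplicate-removal bookkeeping plays the same role as the paper's ``without loss of generality $x_{i-1}<x_i$'' step, so there is no substantive difference.
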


Though the rate of $1/\epsilon$ is tight in the limit $\epsilon \to 0$ for many functions such as $f:x \mapsto x$ and probability measures such as $\mu = \mathrm{Leb}$, the asymptotic behavior of $\cN_p(f,\epsilon)$ when $\epsilon \to 0$ does not necessarily reflect the shape of $f$ for a given $\epsilon \in (0,1]$. Indeed all functions $f$ which are $\epsilon_0$-close to some piecewise-constant function have a very small box-covering number $\cN_p(f,\epsilon_0)$, even if $\cN_p(f, \epsilon)$ is large in the limit $\epsilon \to 0$. Importantly, as we show in the next sections, the estimation problem addressed in this paper is very easy for such functions~$f$ and scales $\epsilon_0$.

\subsection{Related works}
Quadrature formulas (approximation formulas for the computation of an integral) have long been studied for diverse classes of functions and algorithms along the past decades.
We focus on references with strong connections to our problem.
Sukharev \cite{sukharev1986existence} proved that affine methods are minimax optimal in the set of nonadaptive methods for every convex set of functions. It is in particular the case for non-decreasing functions.
Since the work of Bakhvalov \cite{bakhvalov1971optimality}, it is known that nonadaptive methods are as good as adaptive ones for any class of functions that is both convex and symmetric.
Note that the last hypothesis is not verified in our problem.
However, Kiefer \cite{kiefer1957optimum} later showed that the trapezoidal rule is optimal among all deterministic (possibly adaptive) methods for integral approximation in the case of monotone functions.
His work was completed by the one of Novak \cite{Novak1992}, who gave optimal bounds for  different possible types of algorithms as summarized in Table~\ref{tab:ratesNovak}. In particular it is shown that adaption combined with randomization is key to obtaining an improved rate in expectation.
These bounds were later extended by Papageorgiou \cite{papageorgiou1993integration} to the integral approximation of multivariate monotone functions.
The books from Davis \cite{davis2007methods} and Brass \cite{brass2011quadrature} give a larger panel of results on numerical integration under various assumptions.

\begin{table}[!th]
\small
\begin{tabular}{c|cc} \toprule
\bfseries Strategy & \bfseries Non-adaptive  & \bfseries  Adaptive  \\ \midrule
        \bfseries Deterministic & $\epsilon^{-1}$ &  $\epsilon^{-1}$ \\
        \bfseries Stochastic & $\epsilon^{-1}$ & $\epsilon^{-2/3}$ \\
\bottomrule
\end{tabular}
\caption{Minimax rates proved by \citet{Novak1992}.}
\label{tab:ratesNovak}
\end{table}

Of course, many other function sets were studied.
A non-exhaustive list includes work on the set of unimodal functions \cite{novak1996numerical}, on the set of functions with bounded variation \cite{graf1990average}, on convex and symmetric classes of functions \citep{novak1993quadrature,novak1995optimal,hinrichs2011curse}, and on various classes of multivariate functions \citep{ritter1993multivariate,katscher1996quadrature,krieg2017universal}.
Note that the bounds in the previous papers are not $f$-dependent. 

All of the aforementioned works were on integral approximation, an easier problem than $L^p$ approximation.
Numerous studies investigate the $L^p$ approximation of diverse classes of functions, mostly using interpolation.
A known example is polynomial interpolation for $k$-times continuously differentiable functions.
Many works employ the modulus of smoothness of the function $f$ to bound its $L^p$ approximation, providing $f$-dependent bounds.
It is for example the case of \cite{chandra2002trigonometric} for the approximation of periodic functions using trigonometric polynomials.
The computation of a best linear  $L^p$-approximation, where a basis $\phi_1, \ldots, \phi_k$ of functions is previously given and one looks for the weights $w \in \R^k$ that minimize $\norm{f - \sum_i w_i \phi_i}_p$, was studied in depth (e.g. \citep{fletcher1971calculation,fletcher1974linear}).
\cite{plaskota2005adaption} studied the class of functions with their first $k$ derivatives continuous except at one singularity, and showed that adaptive algorithms are better that non-adaptive in the case of integral estimation. The same remark and work was later carried to approximation in \cite{plaskota2008power}.
Some work involves $k$-monotone functions, that is, functions with monotone $k$-th derivatives.
The papers \cite{kopotun2003k,kopotun2009nearly} studied the rate of convergence of interpolation methods on this set of functions, and showed results depending on the modulus of smoothness of the function.
To our knowledge, little is known about $L^p(\mu)$ approximation of general non-decreasing functions, without any continuity or smoothness assumptions.

Adaptive methods have a long history in numerical integration and other approximation or learning problems.
For numerical integration of monotone functions, as mentioned above, adaption combined with randomization is key to improving worst-case rates (see \cite{Novak1992} and Table~\ref{tab:ratesNovak} above, as well as \cite{novak1996power}).
We show that adaption is also key to obtaining less pessimistic, $f$-dependent error bounds. This work also shares algorithmic principles with online learning methods for (possibly noisy) black-box optimization, such as bandit algorithms \cite{lattimore2020bandit} or the EGO algorithm \cite{jones1998efficient}.
Indeed such algorithms rely on adaptive sampling strategies to reduce the current uncertainty (via, e.g. UCB or Bayesian approaches), which is reminiscent of the way GreedyBox selects the box to be split at time $t$.
Close to our paper is the work by Bachoc et al. \cite{bachoc2021instance}, who derive $f$-dependent error bounds for certified black-box Lipschitz optimization. 

Lastly, \citet{bonnet2020adaptive} explore a close variant of our main algorithm (Algorithm~\ref{alg:GreedyBox}), which itself draws significant inspiration from~\citet{Novak1992} for numerical integration. \citet{bonnet2020adaptive} address the problem of adaptively reconstructing a monotone function from imperfect observations. In contrast to our approach, they do not provide any guarantees regarding sample complexity or $L^p$ approximation. Their focus is on asymptotic convergence guarantees, including point-wise, $L^1$, or $L^\infty$ norm convergence, as the number of function evaluations tends towards infinity. They do not provide any convergence rate information, nor finite time or f-dependente guarantees. Nevertheless, they highlight an intriguing application in uncertainty quantification that could potentially benefit from our analysis.

\section{Lower bound}
\label{sec:lowerbound}

In this section we prove a lower bound on the number of evaluations of $f$ that any deterministic algorithm must request in order to guarantee an $\epsilon$-approximation of $f$ in $L^p(\mu)$-norm after stopping, when only given the prior knowledge that $f:[0,1] \to [0,1]$ is non-decreasing.\textbf{} The next theorem states that in such a case, at least $\cN_p(f,2\epsilon)-1$ evaluations of $f$ are necessary.

In the sequel we write $\tau(f)$ to make it explicit that the stopping time $\tau$ of the algorithm depends on the underlying function $f$ (through the sequentially observed values $f(x_1),f(x_2),\ldots$).

\medskip
\begin{theorem}
\label{thm:lowerbound}
Let $\epsilon > 0$ and $\bigl(\tau(f), (x_t)_{1\leq t\leq \tau(f)},\hat{f}_{\tau(f)}\bigr)$ be the output of any deterministic algorithm such that, for all non-decreasing functions $f:[0,1] \to [0,1]$,
\[
\tau(f) < + \infty \qquad \textrm{and} \qquad \bnormp{\hat{f}_{\tau(f)}-f} \leq \epsilon \;.
\]
Then, for all non-decreasing functions $f:[0,1] \to [0,1]$,
\[
\tau(f) \geq \cN_p(f,2\epsilon)-1 \;.
\]
\end{theorem}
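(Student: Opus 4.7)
The plan is to exploit the determinism of the algorithm together with the triangle inequality in $L^p(\mu)$ to turn the guaranteed $\epsilon$-approximation into an explicit box-cover of $f$ with total generalized area at most $2\epsilon$. The core observation is that if another non-decreasing function $g:[0,1]\to[0,1]$ coincides with $f$ at the query points $x_1,\ldots,x_{\tau(f)}$ that the algorithm makes when run on $f$, then running the (deterministic) algorithm on $g$ yields exactly the same history, the same stopping time $\tau(g)=\tau(f)$, the same query points and the same output $\hat{f}_{\tau(f)}$. Hence the assumed guarantee forces $\bnormp{\hat{f}_{\tau(f)}-g}\leq \epsilon$ for every such $g$.

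First, I would define the natural partition associated with $f$ and the algorithm. Order the set $\{x_1,\ldots,x_{\tau(f)}\}\cup\{0,1\}$ as $0=c_0<c_1<\cdots<c_t=1$, so that $t\leq \tau(f)+1$. On each open interval $(c_{j-1},c_j)$, the monotonicity of $f$ together with the observed values yields a known lower bound $y_j^-$ and a known upper bound $y_j^+$ on $f$: namely, $y_j^-=f(c_{j-1})$ when $c_{j-1}$ is a query point and $0$ otherwise, and symmetrically $y_j^+=f(c_j)$ when $c_j$ is a query point and $1$ otherwise. Define the two step functions
\[
f^-(x)\eqdef \sum_{j=1}^t y_j^-\indic_{x\in(c_{j-1},c_j)}, \qquad f^+(x)\eqdef \sum_{j=1}^t y_j^+\indic_{x\in(c_{j-1},c_j)}.
\]
Both are non-decreasing (with any fixed completion at the $c_j$'s) and agree with $f$ at every query point by construction.

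Second, applying the key observation to $g=f^-$ and $g=f^+$ gives $\bnormp{\hat{f}_{\tau(f)}-f^-}\leq \epsilon$ and $\bnormp{\hat{f}_{\tau(f)}-f^+}\leq \epsilon$, whence by the triangle inequality $\normp{f^+-f^-}\leq 2\epsilon$. Translating this back through~\eqref{eq:equivalentBoxCoveringNumber}, the adjacent boxes $B_j\eqdef[c_{j-1},c_j]\times[y_j^-,y_j^+]$ form a box-cover of $f$ with $\bigl(\sum_{j=1}^t\cA_p(B_j)^p\bigr)^{1/p}\leq 2\epsilon$. By definition of the box-covering number, $\cN_p(f,2\epsilon)\leq t\leq \tau(f)+1$, which rearranges to the claimed bound.

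The only mild technicalities are (i) to verify that $f^-$ and $f^+$ can indeed be extended to non-decreasing functions into $[0,1]$ coinciding with $f$ at the query points (the choice of $y_j^\pm$ above makes this immediate, and the values at the finitely many $c_j$'s are irrelevant for the $L^p(\mu)$ norm); and (ii) to ensure that the history-identification argument is valid even though $\tau$ is a stopping time, which follows from the footnote in the online protocol: whether $\tau\leq t$ holds is a measurable function of $h_t$, so identical histories yield identical stopping decisions. I do not expect any real obstacle beyond these routine checks; the heart of the argument is the triangle-inequality/indistinguishability step.
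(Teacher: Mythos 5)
Your proposal is correct and is essentially the paper's own argument (the paper phrases it as a contradiction and uses $g(0),g(1)$ rather than $0,1$ as the default bounds at unqueried endpoints, but the core mechanism — indistinguishability of $f$ from the lower/upper step functions at the query points, determinism, the triangle inequality, and the definition of $\cN_p$ — is identical). One small caution: your remark that the values at the $c_j$'s are irrelevant for the $L^p(\mu)$ norm is not literally true when $\mu$ has atoms, but the argument survives because the generalized areas $\cA_p(B_j)$ are defined via the open intervals $(c_{j-1},c_j)$, so $\bigl(\sum_j \cA_p(B_j)^p\bigr)^{1/p} \leq \normp{f^+-f^-} \leq 2\epsilon$ holds regardless of how you complete $f^\pm$ at those finitely many points.
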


In words, any algorithm that is guaranteed to output an $\epsilon$-approximation after finitely-many evaluations whatever the non-decreasing function $f$ \emph{must} evaluate each $f$ at least $ \cN_p(f,2\epsilon)-1$ times before stopping.
In Section~\ref{sec:upperbound} we show a matching upper bound up to a multiplicative factor of the order of $p \log(1/\epsilon)$. This indicates that the box-covering number $\cN_p(f,2\epsilon)$ introduced in Section~\ref{sec:definitions} is a key quantity to describe the inherent difficulty of the estimation problem.

Note that the lower bound holds for every $f$ simultaneously. It thus has a similar flavor to distribution-dependent lower bounds that have been proved for the stochastic multi-armed bandit problem in online learning theory (see, e.g., Chapter~16 by Lattimore and Szepesv\'{a}ri \cite{lattimore2020bandit}). Recently an $f$-dependent lower bound (also based on a notion of cover) was proved by Bachoc et al. \cite{bachoc2021instance} for certified zeroth-order Lipschitz optimization, where algorithms are required to output error certificates (i.e., observable upper bounds on the optimization error).

\begin{proof}
  Assume for a moment that there exists a non-decreasing function $g:[0, 1] \to [0, 1]$ such that $\tau(g) < \cN_p(g, 2\epsilon) - 1$.
  When run on $g$, the algorithm only uses the $\tau(g)$ query points $x_1,\ldots,x_{\tau(g)}$ before stopping. Let $0 < \tilde{x}_1 <  \ldots < \tilde{x}_n < 1$ denote the ordered values after removing redundancies and the values $0$ and $1$ (if applicable), with $0 \leq n \leq \tau(g)$. Consider the adjacent boxes $B_i = [\tilde{x}_i, \tilde{x}_{i+1}] \times [g(\tilde{x}_i), g(\tilde{x}_{i+1})]$ for $i \in \{0, \ldots, n\}$ where we set $\tilde{x}_0 = 0$ and $\tilde{x}_{n+1} = 1$.
  The sequence $B_0, \ldots, B_n$ is a box-cover of $g$ (by monotonicity).
  We construct two functions $g_-$ and $g_+$ that surround~$g$:
  \[ g_- \colon x \mapsto
    \left\{
        \begin{array}{ll}
            \!g(\tilde{x}_i)  &\text{if } \tilde{x}_i \leq x < \tilde{x}_{i+1} \\
            \!g(1)\hspace*{-2pt} &\text{if } x=1
        \end{array}
    \right. \hspace*{-4pt}
    \text{and }
    g_+ \colon x \mapsto \left\{\begin{array}{ll}  \!g(\tilde{x}_{i+1}) \hspace*{-4pt} &\text{ if } \tilde{x}_i < x  \leq  \tilde{x}_{i+1} \\ \!g(0) &\text{ if } x=0\;. \end{array} \right.\]
  Since $g_-(\tilde{x}_i) = g_+(\tilde{x}_i) = g(\tilde{x}_i)$ for all $i \in \{0, \ldots, n+1\}$, the algorithm (which is deterministic) would behave the same when run with $g_-$ or $g_+$ as when run with $g$, and would construct the same approximation function $\hat{g}_{\tau(g)}$ after the same number $\tau(g_-) = \tau(g_+) = \tau(g)$ of evaluations.
  However,
  \[ \normp{g_+-g_-}^p = \sum_{i=0}^{n} (g(\tilde{x}_{i+1}) - g(\tilde{x}_i))^p\mu((\tilde{x}_i, \tilde{x}_{i+1})) = \sum_{i=0}^{n} \mathcal{A}_p^p(B_i) > (2\epsilon)^p \;,\]
  where the last inequality follows from $n+1 \leq \tau(g)+1 < \cN_p(g, 2\epsilon)$ and the definition of $\cN_p(g, 2\epsilon)$.
  The triangle inequality then yields
  \[ \normp{\hat{g}_{\tau(g)}-g_-} + \normp{\hat{g}_{\tau(g)}-g_+} \geq \normp{g_+-g_-} > 2\epsilon \;, \]
  which shows that one of $\normp{\hat{g}_{\tau(g)}-g_-}$ or $\normp{\hat{g}_{\tau(g)}-g_+}$ is larger than $\epsilon$. Since (as proved above) $\hat{g}_{\tau(g)}$ is the approximation function output by the algorithm both with $g_-$ and $g_+$, which are both non-decreasing, the last conclusion is in contradiction with the assumption that $\normp{\hat{f}_{\tau(f)} - f} \leq \epsilon$ for all non-decreasing functions $f:[0, 1] \to [0,1]$. This concludes the proof.
\end{proof}

\section{Upper bound}
\label{sec:upperbound}

In this section we introduce the GreedyBox algorithm and derive an $f$-dependent sample complexity bound (Theorem~\ref{thm:GreedyBoxUB}) that matches the lower bound of Theorem~\ref{thm:lowerbound} up to a logarithmic factor, for every non-decreasing function $f$. In Section~\ref{sec:improvement_special} we will study consequences and derive improved bounds for integral estimation (in expectation) and worst-case approximation of piecewise-$C^2$ functions.

\subsection{Algorithm and main result}

We consider Algorithm~\ref{alg:GreedyBox} below, which draws heavily on an algorithm proposed by \citet[Section~3.2]{Novak1992} for numerical integration, and which we call $\GreedyBox$ thereafter. A variant for handling imperfect observations was also considered by~\citet{bonnet2020adaptive}.

It is remarkably simple: at every round, it selects the largest box in the current box-cover of~$f$ and replaces it with two smaller boxes by evaluating $f$ at the middle or, more generally, at a conditional median for a general probability measure $\mu$. At any $t \geq 1$, we approximate $f$ with the trapezoidal estimator $\hat f_t$ defined as the piecewise-affine function that joins the points $\bigl(\point^t_k,f(\point^t_k)\bigr)_{0 \leq k \leq t}$ visited up to time $t$. Note that this estimator uses $t+1$ evaluations of $f$. We stop Algorithm~\ref{alg:GreedyBox} at time $\tau_{\epsilon}$, which is the first $t \geq 1$ when the certificate $\xi_t = \sum_{k=1}^{t} (a_k^{t})^p$ falls below $\epsilon^p$. (This is because $\xi_t$ is a valid upper bound on $\big\|\hat f_t - f\big\|_p^p$, by Lemma~\ref{lem:GreedyBoxError} below.)

\begin{algorithm}[!ht]
  {\bfseries Input:} $\epsilon \in (0, 1], p\geq 1$, probability measure $\mu$ on $[0,1]$.\\
  {\bfseries Init:} Set $t=1$, $x_0=b_0^1=0$,  $x_1=b_1^1=1$, evaluate $f(0)$ and $f(1)$, and set $\xi_1 = (a_1^1)^p = (f(1) - f(0))^p  \, \mu\bigl((0,1)\bigr)$.\\[5pt]
  \begin{minipage}{.95\textwidth}
  \While{$\xi_t > \epsilon^p$}{
    \begin{enumerate}[topsep=0pt,parsep=0pt,itemsep=0pt,leftmargin=10pt]
    \item \label{line:GB_selection} Select a box with largest generalized area: pick $k_*^t \in \argmax_{k \in \{1,\ldots,t\}} a^t_k$.
    \item \label{step:GB-median} Let $x_{t+1}$ be a median of the conditional distribution $\mu(\cdot |(\point_{k_*^t-1}^t,\point_{k_*^t}^t))$, and evaluate $f$ at $x_{t+1}$.
    \item \label{step:GB-sort} Sort the points $x_0,x_1,\ldots,x_{t+1}$ in increasing order: \\
    \hspace*{2cm}$\point_0^{t+1} = 0 < \point_1^{t+1} < \dots < \point_{t+1}^{t+1} = 1$ .
    \item Define the generalized areas for all $ k\in \{1,\dots, t+1\}$ by\\
    \hspace*{2cm} $a_k^{t+1} := \mu\!\left((\point_{k-1}^{t+1},\point_k^{t+1})\right)^{1/p} (f(\point_k^{t+1})-f(\point_{k-1}^{t+1})) \;.$
    \item Update the certificate
    \begin{equation}\label{eq:certificate}
    {\smash{
    \xi_{t+1} = \sum_{k=1}^{t+1} (a_k^{t+1})^p \,.    }}
    \end{equation}
    \item Let $t \leftarrow t+1$.
    \end{enumerate}
  }
    Set $\tau_\epsilon = t$ and approximate $f$ with the piecewise-affine function $\hat f_{\tau_\epsilon}$ defined by: \vspace*{-7pt}
    \[\smash{
        \forall x \in [0,1] \qquad \hat f_{\tau_\epsilon}(x) = \frac{f(b_k^{\tau_\epsilon}) - f(b_{k-1}^{\tau_\epsilon})}{b_k^{\tau_\epsilon} - b_{k-1}^{\tau_\epsilon}} (x - b_{k-1}^{\tau_\epsilon}) + f(b_{k-1}^{\tau_\epsilon}) \,,}
    \]
    for $k \in \{1,\dots,{\tau_\epsilon}\}$ such that $b_{k-1}^{\tau_\epsilon} \leq x \leq b_{k}^{\tau_\epsilon}$. \\
     {\bfseries Output:} $(\tau_\epsilon, (x_t)_{1\leq t\leq \tau_\epsilon}, \hat f_{\tau_\epsilon})$ \,.
  \end{minipage}
  \caption{$\GreedyBox$ (inspired from \citet[Section~3.2]{Novak1992}).}
  \label{alg:GreedyBox}
\end{algorithm}

\paragraph{Algorithmic complexity.}
We assume that a median of the conditional distribution $\mu(\cdot |(\point_{k_*^t-1}^t,\point_{k_*^t}^t))$ can be computed exactly at every round $t$. When $\mu$ is the Lebesgue measure, it can indeed be computed in closed form: it is the midpoint $\smash{(\point_{k_*^t-1}^t+\point_{k_*^t}^t)/2}$.

For the sake of simplicity, in Algorithm~\ref{alg:GreedyBox} we perform a sort (Step~\ref{step:GB-sort}) and an argmax operation (Step~\ref{line:GB_selection}) at each round $t$, to get the points in increasing order and to choose the box with the largest generalized area. However, one can get rid of the sort operation at each round and do it only once at the end, because GreedyBox does not need the order of the boxes before the last iteration, where it uses sorted points to build $\hat f_{\tau_\epsilon}$.
Furthermore, for the argmax operation, naive methods yield a computational complexity of $\mathcal{O}(t)$ at each time $t$, resulting in a quadratic complexity for $\GreedyBox$, far worse than the linear complexity of traditional algorithms such as the trapezoidal rule.
To speed up $\GreedyBox$, we use a classical algorithmic trick: a max-heap, which is a binary tree that takes logarithmic time to both remove the maximum value and add an element. This provides $\GreedyBox$ with a computational complexity of $\mathcal{O}(t\log(t))$ after $t$ rounds, which is closer to the complexity of the trapezoidal rule.

\paragraph{Upper bound on the sample complexity.}
Let $\epsilon \in (0,1]$ be some target accuracy level. 
The next theorem provides a bound on the sample complexity $\tau_\epsilon$ of $\GreedyBox$. The proof appears in Section~\ref{sec:proof-upperbound}.

\medskip
\begin{theorem}
\label{thm:GreedyBoxUB}
Let $f:[0,1]\to [0,1]$ be non-decreasing,\footnote{Recall that the input and output sets of $f$ can be rescaled to any non-empty compact intervals $\cX$ and $\cY$ of $\R$, changing the results only by a multiplicative constant.} $p\geq 1$, and $\epsilon \in (0,1]$. Then, $\GreedyBox$ defined above (Algorithm~\ref{alg:GreedyBox}) satisfies
\[
  \big\|\hat f_{\tau_\epsilon} - f\big\|_p := \bigg(\int_0^1  \big( \hat f_{\tau_\epsilon}(x) - f(x) \big)^p dx\bigg)^{1/p}  \leq \epsilon \,,
\]
at the stopping time $\tau_\epsilon$.
Furthermore, its sample complexity is bounded as follows:
\[
\tau_\epsilon \leq 32 p^2 \bigl(\log_2(2/\epsilon^2)+2\bigr)^2 \cN_p(f,\epsilon) \,.
\]
\end{theorem}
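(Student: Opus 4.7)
The plan is to establish the error guarantee first and then the sample complexity bound. For the error guarantee, I would argue that $\xi_t$ is always an upper bound on $\|\hat f_t - f\|_p^p$, so the stopping rule $\xi_{\tau_\epsilon}\leq \epsilon^p$ immediately yields the stated $L^p$ bound. The pointwise inequality $|\hat f_t(x)-f(x)|\leq f(b_k^t)-f(b_{k-1}^t)$ on each sub-interval $[b_{k-1}^t,b_k^t]$ holds because monotonicity of $f$ and the piecewise-affine construction of $\hat f_t$ both force their values into $[f(b_{k-1}^t),f(b_k^t)]$. Integrating against $\mu$ and summing over $k$ then gives
\[
\|\hat f_t-f\|_p^p \leq \sum_{k=1}^{t}(a_k^{t})^p = \xi_t,
\]
which is essentially the content of the certificate lemma referenced in the algorithm.

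For the sample complexity, set $a_*^t \eqdef \max_k a_k^{t}$ and $N \eqdef \cN_p(f,\epsilon)$. The argument rests on two complementary inequalities. The \emph{per-step progress} bound $\xi_{t+1}\leq \xi_t-\tfrac12(a_*^t)^p$ follows from splitting the largest box at a conditional $\mu$-median (Step~\ref{step:GB-median}): both new sub-widths are at most $w/2$, and the new heights $h',h''$ satisfy $h'+h''=h$, so the contribution of the split box changes from $h^p w$ to at most $(w/2)(h'^p+h''^p)\leq (w/2)(h'+h'')^p=h^p w/2$, using the super-additivity $h'^p+h''^p\leq (h'+h'')^p$ valid for $p\geq 1$. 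The \emph{cover-comparison} bound is
\[
\xi_t \leq \epsilon^p + CN\,(a_*^t)^p
\]
for some absolute constant $C$. To prove it, I would fix a minimum $\epsilon$-cover $B_1^*,\ldots,B_N^*$ of $f$ with endpoints $0=c_0^*<\ldots<c_N^*=1$ and $\sum_j\cA_p(B_j^*)^p\leq \epsilon^p$, and partition GreedyBox's boxes into \emph{crossing} ones, whose closure meets the interior-endpoint set $\{c_1^*,\ldots,c_{N-1}^*\}$, and \emph{fine} ones. At most $O(N)$ boxes are crossing (each interior endpoint touches $O(1)$ GreedyBox intervals), and each contributes at most $(a_*^t)^p$. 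Each fine box lies in some $(c_{j-1}^*,c_j^*)$, so by monotonicity $f(b_k^t)-f(b_{k-1}^t)\leq y_j^{*+}-y_j^{*-}$, and the fine contributions within $B_j^*$ telescope to $\leq \cA_p(B_j^*)^p$, summing to $\leq \epsilon^p$.

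Combining the two inequalities, while $\xi_t>2\epsilon^p$ the cover bound forces $(a_*^t)^p\geq \xi_t/(2CN)$, and the progress bound gives $\xi_{t+1}\leq \xi_t(1-1/(4CN))$, a geometric decay of rate $1/N$ driving $\xi_t$ below $2\epsilon^p$ in $O(Np\log(1/\epsilon))$ iterations. For the final gap $\xi_t\in(\epsilon^p,2\epsilon^p]$ the cover bound becomes vacuous at scale $\epsilon$, and I would either rerun the same argument against a finer cover at scale $\epsilon/2^{1/p}$ and telescope across $O(p\log(1/\epsilon))$ doubling scales of $\xi_t$ — which yields the $p^2\log^2(2/\epsilon^2)$ factor of the stated bound — or instead use the trivial $(a_*^t)^p\geq \xi_t/t$ estimate to propagate the recurrence $\xi_{t+1}\leq \xi_t(1-1/(2t))$, giving $\xi_t\leq \xi_{t_0}\sqrt{t_0/t}$ and closing the gap after a constant factor more iterations.

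The main obstacle is the cover-comparison step, specifically the boundary bookkeeping in the fine/crossing decomposition. When a GreedyBox endpoint coincides with an interior endpoint of the optimal cover or with $\{0,1\}$, the value of $f$ can jump outside the tight $[y_j^{*-},y_j^{*+}]$ interval of a single optimal box, breaking the clean fine-box inequality $f(b_k^t)-f(b_{k-1}^t)\leq y_j^{*+}-y_j^{*-}$. Re-classifying those exceptional boxes as crossings costs only a multiplicative constant, but the classification must be stated carefully so that $C$ in the cover-comparison bound remains an explicit, bounded constant. Once that accounting is clean, the rest of the proof is a standard potential-function computation combining the two key inequalities.
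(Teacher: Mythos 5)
Your proposal is correct in outline, and for the sample-complexity half it takes a genuinely different route from the paper. The error guarantee is identical (the certificate lemma, Lemma~\ref{lem:GreedyBoxError}). For the complexity bound, the paper proceeds in two stages: a pigeonhole/``activation'' argument (Lemma~\ref{lem:smallboxes}) comparing GreedyBox's selected boxes against a reference cover with \emph{per-box} area at most $\epsilon'$, counted by the auxiliary quantity $\cN'_p$ and converted back to $\cN_p$ via Lemma~\ref{lem:N}, followed by the batch-halving property $\xi_{2t}\leq\xi_t/2$ (Lemma~\ref{lem:areadivided}), assembled by contradiction. You instead run a direct potential argument: the per-step decrease $\xi_{t+1}\leq\xi_t-\tfrac12(a_*^t)^p$ (which is exactly the per-round computation inside the paper's Lemma~\ref{lem:areadivided}, using the median split and $h'^p+h''^p\leq(h'+h'')^p$), combined with the new cover-comparison inequality $\xi_t\leq\epsilon^p+2\cN_p(f,\epsilon)\,(a_*^t)^p$ obtained from the crossing/fine decomposition. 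That inequality is sound: at most $2(N-1)$ boxes meet an interior cover endpoint plus the two boxes touching $0$ and $1$ (your boundary caveat is exactly right, since the cover need not contain $(0,f(0))$ or $(1,f(1))$), and within each cover box the fine heights sum to at most $y_j^{*+}-y_j^{*-}$, so $\sum_k h_k^p w_k\leq\mu((c_{j-1}^*,c_j^*))(\sum_k h_k)^p\leq\cA_p(B_j^*)^p$. Your route buys two things: it bypasses $\cN'_p$ and Lemma~\ref{lem:N} entirely, and (via your option (b), closing the final $(\epsilon^p,2\epsilon^p]$ gap with the max-versus-average bound $\xi_{t+1}\leq\xi_t(1-\tfrac{1}{2t})$) it yields $\tau_\epsilon=\cO\bigl(p\log(1/\epsilon)\,\cN_p(f,\epsilon)\bigr)$, a factor $p\log(1/\epsilon)$ better than the stated bound, which it therefore implies once constants are written out (roughly $32\,\cN_p(f,\epsilon)\,p\ln(1/\epsilon)+\cO(1)$, comfortably below $32p^2(\log_2(2/\epsilon^2)+2)^2\cN_p(f,\epsilon)$). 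Your option (a), telescoping over many scales, is vaguer and unnecessary: a single refined scale $\epsilon/2^{1/p}$ (using $\cN_p(f,\epsilon/2^{1/p})\leq 2\cN_p(f,\epsilon)$) or option (b) already suffices, so I would drop it in the write-up.
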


We make three comments before proving the theorem.

\textit{A new $f$-dependent bound.} Since $\cN_p(f,\epsilon) \leq \lceil 1/\epsilon \rceil$ for all non-decreasing functions~$f$ (by Lemma~\ref{lem:N-basic}), the above sample complexity bound $\tau_\epsilon = \cO \left(\cN_p(f,\epsilon) \log^2(1/\epsilon)\right)$ implies the well-known upper bound of $\cO(1/\epsilon)$ up to logarithmic factors in the worst case.
Importantly, though the rate of $1/\epsilon$ is worst-case optimal (see, e.g., \cite[Section~5.A]{kiefer1957optimum}), Theorem~\ref{thm:GreedyBoxUB} yields a much better bound for functions $f$ that are easier to approximate, such as functions close to piecewise-constant functions, because $\cN_p(f,\epsilon)$ is small in that case. Since the $\GreedyBox$ algorithm uses no prior knowledge on $f$ (beyond monotonicity) to stop at $\tau_\epsilon$, it is adaptive to the unknown complexity $\cN_p(f,\epsilon)$.

\textit{A nearly optimal bound.} Note that the lower bound of Theorem~\ref{thm:lowerbound} is in terms of $\cN_p(f,2\epsilon)$, while the upper bound of Theorem~\ref{thm:GreedyBoxUB} is proportional to $\cN_p(f,\epsilon)$. By a simple argument (dividing boxes $p$ times to reduce their generalized widths by a factor of $2^p$, similarly to the proof of Lemma~\ref{lem:areadivided}), we can prove that $\cN_p(f,\epsilon) \leq 2^p \cN_p(f,2\epsilon)$. Therefore, the lower and upper bounds of Theorems~\ref{thm:lowerbound} and~\ref{thm:GreedyBoxUB} match up to a logarithmic factor. For each non-decreasing function $f:[0,1] \to [0,1]$, GreedyBox is thus nearly optimal among all algorithms with guaranteed $L^p(\mu)$ error after stopping.

\textit{A possible minor improvement.} When $\mu$ is the Lebesgue measure,
the bound on $\tau_\epsilon$ could be slightly improved (in the constants) by replacing the certificate in Equation~\eqref{eq:certificate} with $\xi_{t+1} = \smash{\frac{1}{1+p}\sum_{k=1}^{t+1} (a_k^{t})^p}$ (see Lemma~\ref{lem:gainFactor2} in Appendix~\ref{app:proof_lemma_greedyboxerror}). While a similar minor improvement is likely to hold for general $\mu$ with a slightly different interpolation $\hat{f}_t$ (non-necessarily piecewise-affine), we decided to focus on piecewise-affine interpolations for the sake of presentation.

\subsection{Proof of Theorem~\ref{thm:GreedyBoxUB}}
\label{sec:proof-upperbound}

Before proving Theorem~\ref{thm:GreedyBoxUB}, we first state three lemmas, whose proofs are all postponed to Appendix~\ref{sec:omittedproofs}.

The first one below shows that, at any round $t$ before stopping, the error of $\GreedyBox$ is at most the sum of the generalized areas to the power $p$ of the current-box cover of~$f$.
We recall that $\smash{a_k^t := (\mu(\point_{k-1}^t, \point_k^t))^{1/p} (f(\point_k^t)-f(\point_{k-1}^t))}$ denotes the generalized area of the $k$-th box at round~$t$, and we define the trapezoidal estimato $\hat f_t$ to be the piecewise-affine function that joins the points $\bigl(\point^t_k,f(\point^t_k)\bigr)_{0 \leq k \leq t}$ visited up to time $t$. 

\smallskip
\begin{lemma}
\label{lem:GreedyBoxError}
Let $f:[0,1]\to [0,1]$ be non-decreasing, $p\geq 1$ and $\epsilon \in (0,1]$. For any $t \in \{1,\ldots,\tau_\epsilon\}$, \vspace*{-5pt}
\[
  \big\|\hat f_t - f\big\|_p^p := \int_0^1  \abs{ \hat f_t(x) - f(x) }^p d\mu(x) \leq
  \sum_{k=1}^{t} (a_k^{t})^p =: \xi_t \;.
\]
\end{lemma}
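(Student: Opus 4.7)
The plan is to prove the inequality by decomposing the integral over the partition $\{(\point_{k-1}^t, \point_k^t)\}_{k=1,\dots,t}$ and bounding $|\hat f_t - f|$ pointwise on each subinterval by the height of the corresponding box.

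First I would observe that for every $k \in \{1,\dots,t\}$ and every $x \in (\point_{k-1}^t, \point_k^t)$, monotonicity of $f$ gives $f(\point_{k-1}^t) \leq f(x) \leq f(\point_k^t)$. On the other hand, $\hat f_t$ is by construction the affine interpolant on $[\point_{k-1}^t, \point_k^t]$ between the values $f(\point_{k-1}^t)$ and $f(\point_k^t)$, so $\hat f_t(x)$ also lies in $[f(\point_{k-1}^t), f(\point_k^t)]$ on this subinterval. Consequently,
\[
  |\hat f_t(x) - f(x)| \leq f(\point_k^t) - f(\point_{k-1}^t) \qquad \text{for all } x \in (\point_{k-1}^t, \point_k^t).
\]
Raising to the power $p$ and integrating against $\mu$ on $(\point_{k-1}^t, \point_k^t)$ yields
\[
  \int_{(\point_{k-1}^t, \point_k^t)} |\hat f_t(x) - f(x)|^p \, d\mu(x) \leq \bigl(f(\point_k^t) - f(\point_{k-1}^t)\bigr)^p \, \mu\bigl((\point_{k-1}^t, \point_k^t)\bigr) = (a_k^t)^p,
\]
by definition of the generalized area $a_k^t$.

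Next I would handle the endpoints: at each point $\point_k^t$ the estimator satisfies $\hat f_t(\point_k^t) = f(\point_k^t)$ by construction of the affine interpolant, so $|\hat f_t - f|$ vanishes on the finite set $\{\point_0^t, \dots, \point_t^t\}$, which contributes zero to the integral regardless of the atoms of $\mu$. Summing the previous bound over $k = 1, \dots, t$ then gives
\[
  \|\hat f_t - f\|_p^p = \sum_{k=1}^t \int_{(\point_{k-1}^t, \point_k^t)} |\hat f_t(x) - f(x)|^p \, d\mu(x) \leq \sum_{k=1}^t (a_k^t)^p = \xi_t,
\]
which is the desired inequality. There is essentially no hard step here: the argument only uses monotonicity of $f$, the fact that an affine interpolant stays between its endpoint values, and the product form of $(a_k^t)^p$. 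The only subtlety worth mentioning is that the box widths in the definition of $a_k^t$ are computed on open intervals, which matches naturally with the decomposition above since the finitely many partition points contribute nothing to the $L^p(\mu)$ integral.
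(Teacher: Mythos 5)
Your proof is correct and follows essentially the same route as the paper's: decompose the $L^p(\mu)$ integral over the open intervals $(\point_{k-1}^t,\point_k^t)$, bound $|\hat f_t - f|$ on each by the box height $f(\point_k^t)-f(\point_{k-1}^t)$ using monotonicity and the interpolation property, and sum to obtain $\xi_t$. Your explicit remark that the partition points contribute nothing (since $\hat f_t$ coincides with $f$ there, even if $\mu$ has atoms) is just a slightly more careful statement of the step the paper handles implicitly via $\hat f_t(b_k^t)=f(b_k^t)$.
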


The next two lemmas are used to control $\tau_\epsilon$. We first show (by a dichotomy argument) that the algorithm can quickly make all boxes equally small. Recall from Section~\ref{sec:definitions} that $\cN'_p(f,\epsilon)$ denotes the minimum cardinality of a box-cover of $f$ for which each box has a generalized area below $\epsilon$.

\smallskip
\begin{lemma}
\label{lem:smallboxes}
Let $f:[0,1]\to [0,1]$ be non-decreasing, $p\geq 1$ and $\epsilon \in (0,1]$. Define $\tau_\epsilon' := 2 \bigl(1+\lceil p \log_2(1/\epsilon)\rceil\bigr) \cN'_p\bigl(f,\epsilon\bigr)$, and assume that $\GreedyBox$ is such that $\tau_\epsilon > \tau'_\epsilon$. Then, at time $\tau'_\epsilon$, all the boxes maintained by $\GreedyBox$ have a generalized area bounded from above by $\epsilon$, i.e., $\smash{a_k^{\tau_\epsilon'} \leq \epsilon}$ for all $k \in \{1,\dots,\tau_\epsilon'\}$.
\end{lemma}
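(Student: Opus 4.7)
The plan is to compare the partition maintained by GreedyBox with an optimal reference box-cover of $f$, and then run a phase argument that halves the largest $(a_k^t)^p$ every $\cN$ rounds, where $\cN := \cN'_p(f,\epsilon)$. Fix a reference box-cover $R_1,\dots,R_{\cN}$ of $f$ with $R_j=[r_{j-1},r_j]\times[y_j^-,y_j^+]$ and $\cA_p(R_j)\leq\epsilon$. At any time $t$, I classify each GreedyBox box $B_k^t$ as \emph{good} when its open $x$-interval $(b_{k-1}^t,b_k^t)$ is contained in some $(r_{j-1},r_j)$, and as \emph{bad} otherwise; a bad box must then contain one of the $\cN-1$ interior reference endpoints in its interior. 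Two observations drive the proof: (i) by monotonicity of $f$ and the cover condition on $R_j$, a good box contained in $R_j$ has height at most $y_j^+-y_j^-$, so $\cA_p(B_k^t)\leq \cA_p(R_j)\leq \epsilon$; (ii) since the GreedyBox boxes partition $[0,1]$, each interior $r_j$ lies in the interior of at most one box, hence at any time there are at most $\cN-1$ bad boxes.

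The second ingredient is the effect of the median split on the $p$-th power of the generalized area. When a box of $\mu$-width $w$ and height $h=f(b^+)-f(b^-)$ is split at a median of $\mu(\cdot\,|\,(b^-,b^+))$, each child has $\mu$-width $w/2$ and heights $h_1,h_2$ summing to $h$. Since $x\mapsto x^p$ is super-additive on $\R_+$ for $p\geq 1$, we have $h_1^p+h_2^p\leq (h_1+h_2)^p=h^p$, so the children's generalized areas $a'_1,a'_2$ satisfy $(a'_1)^p+(a'_2)^p \leq h^p\,w/2 = (a_k^t)^p/2$; in particular each child's $p$-th power area is at most half of its parent's.

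I then run a phase induction with $J := \lceil p\log_2(1/\epsilon)\rceil$ phases, phase $i$ having threshold $\theta_i := 2^{-i}$ for $i<J$ and $\theta_J := \epsilon^p$; note that $\theta_i \geq \epsilon^p$ for every $i\leq J$. The invariant maintained is that at the start of phase $i$ every box satisfies $(a_k^t)^p \leq 2^{-(i-1)}$; this holds initially since $(f(1)-f(0))^p\mu((0,1))\leq 1=2^0$, and is preserved across phases because $\theta_{i-1}=2^{-(i-1)}$ for $1\leq i-1 < J$ and $\theta_J=\epsilon^p\leq 2^{-(J-1)}$. Because $\theta_i\geq \epsilon^p$, observation~(i) ensures good boxes are always non-offenders (i.e., satisfy $(a_k^t)^p\leq\theta_i$), so every offender must be bad, and by~(ii) there are at most $\cN-1$ of them. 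GreedyBox always splits the box with the largest $(a_k^t)^p$, so whenever an offender exists it is the box being split; the halving property then gives both children $(a')^p\leq 2^{-(i-1)}/2 = 2^{-i}\leq\theta_i$, hence no new offender is created. The offender count therefore strictly decreases at each round of the phase, and phase $i$ terminates after at most $\cN-1\leq \cN$ splits. Summing over $i=1,\dots,J$, the total number of splits is at most $J\cN$, so by time $1+J\cN \leq 2(1+J)\cN = \tau'_\epsilon$ every box satisfies $(a_k^t)^p\leq\theta_J=\epsilon^p$, i.e., $a_k^{\tau'_\epsilon}\leq\epsilon$ for every $k$.

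The main delicate point is the ``no new offender'' claim when splitting a bad box: even though such a split may still yield two bad children (when the parent straddled several $r_j$'s), the halving property is enough to push both children below the current threshold. A minor technicality concerns the boundary cases $b_{k-1}^t=r_{j-1}$ or $b_k^t=r_j$ when bounding a good box by its enclosing reference box, which is handled by a one-sided monotonicity argument at these endpoints.
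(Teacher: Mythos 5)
Your overall strategy (a phase argument with area-halving, counting ``offenders'' against a reference box-cover) is sound, but the key classification step is wrong, and this is not the minor boundary technicality you claim it is. Your observation (i) asserts that a ``good'' box, i.e.\ one whose \emph{open} $x$-interval is contained in some open reference interval $(r_{j-1},r_j)$, has generalized area at most $\cA_p(R_j)\leq\epsilon$. This fails precisely when the GreedyBox box shares an endpoint with the reference partition: a box-cover is only required to contain the graph of $f$ \emph{away from its endpoints}, so the value $f(r_j)$ may lie outside both $[y_j^-,y_j^+]$ and $[y_{j+1}^-,y_{j+1}^+]$, and no monotonicity argument recovers it. Concretely, take $\mu=\Leb$, $p=1$, and $f=0$ on $[0,1/2)$, $f(1/2)=1/2$, $f=1$ on $(1/2,1]$: the cover by the two zero-area boxes $[0,1/2]\times\{0\}$ and $[1/2,1]\times\{1\}$ gives $\cN'_1(f,\epsilon)=2$, yet after its first split GreedyBox holds $[0,1/2]\times[0,1/2]$ and $[1/2,1]\times[1/2,1]$, both ``good'' under your definition and each of area $1/4$. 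For small $\epsilon$ these are two offenders, exceeding your claimed bound of $\cN-1$, so the per-phase count of at most $\cN-1$ splits — and hence the final bound $1+J\cN\leq\tau'_\epsilon$ — does not follow as written. Since there can be order $\cN$ such endpoint-sharing boxes, dismissing them costs a constant factor that pushes you past the stated $\tau'_\epsilon$ unless the counting is redone carefully.

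The argument can be repaired, and then it becomes a genuinely different (and arguably cleaner) route than the paper's. Declare a box good only if its \emph{closed} interval $[b_{k-1}^t,b_k^t]$ is contained in the open interval $(r_{j-1},r_j)$; then Lemma~\ref{lem:yrangeofboxes} does give height at most $y_j^+-y_j^-$ and area at most $\epsilon$, so good boxes are never offenders at any threshold $\theta_i\geq\epsilon^p$. A bad box now contains some node $r_j$, $j\in\{0,\dots,\cN\}$, in its closed interval, and since each node lies in at most two closed GreedyBox intervals there are at most $2\cN$ bad boxes at any time. Your halving property (children's $p$-th power areas at most half the parent's, which is correct since a median split gives each child open interval $\mu$-measure at most $w/2$) then bounds each phase by $2\cN$ splits, for a total time at most $1+2J\cN\leq 2(1+J)\cN=\tau'_\epsilon$, and monotonicity of the maximal area finishes the proof. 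The paper instead splits boxes into ``internal'' and ``overlapping'', and handles overlapping boxes by a global pigeonhole on node ``activations'' combined with \emph{width}-halving (after $\lceil p\log_2(1/\epsilon)\rceil$ activations of one node side the adjacent box has width at most $\epsilon^p$, hence area at most $\epsilon$ regardless of height); your corrected phase/area-halving argument avoids that bookkeeping, but as submitted the proof has a real gap at observation (i).
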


The next lemma shows that the certificate $\xi_t = \sum_{k=1}^t (a_k^t)^p$ at round~$t$ decreases at least linearly in $t$.

\smallskip
\begin{lemma}
\label{lem:areadivided}
Let $f:[0,1]\to [0,1]$ be non-decreasing, $p\geq 1$ and $\epsilon \in (0,1]$. For any $t \in \{1,\ldots,\lfloor \tau_\epsilon/2 \rfloor\}$, we have $\xi_{2t} \leq  \xi_t / 2$. Therefore, for all $t \leq s$ in $\{1,\ldots,\tau_\epsilon\}$,
\begin{equation}
\xi_{s} \leq \frac{\xi_{t}}{2^{\lfloor \log_2(s/t) \rfloor}} \leq \Big(\frac{2 t}{s}\Big) \, \xi_{t} \,.
\label{eq:lineardecrease}
\end{equation}
\end{lemma}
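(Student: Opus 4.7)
The plan is to first establish a per-step decrease of the certificate, then prove the main inequality $\xi_{2t}\leq\xi_t/2$ via a bijection argument, and finally deduce the geometric bound~\eqref{eq:lineardecrease} by iteration.

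For the per-step decrease, when $\GreedyBox$ splits its maximum box at the conditional median, the two resulting sub-boxes have identical widths equal to half of the parent's width, and heights $h_1,h_2\geq 0$ summing to the parent's height $\Delta$. Since $p\geq 1$, $h_1^p+h_2^p\leq (h_1+h_2)^p=\Delta^p$, so the two children's $p$-th areas sum to at most half that of the parent. Writing $M_s := \max_k (a_k^s)^p = (a_{k_*^s}^s)^p$, this gives the per-step bound $\xi_{s+1}\leq \xi_s - M_s/2$; in particular $\xi$ is non-increasing. Telescoping, $\xi_t-\xi_{2t}\geq \tfrac12\sum_{s=t}^{2t-1}M_s$, so the main inequality reduces to the combinatorial claim $\sum_{s=t}^{2t-1}M_s\geq\xi_t$.

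To prove that claim I will construct a bijection $\phi:\{1,\dots,t\}\to\{t,\dots,2t-1\}$ satisfying $M_{\phi(k)}\geq(a_k^t)^p$ for each $k$. Let $S\subseteq\{1,\dots,t\}$ be the set of indices $k$ whose subtree rooted at the $k$-th box at time $t$ undergoes at least one split during $[t,2t-1]$, and let $U$ be its complement. For $k\in S$, the first split occurring inside that subtree is necessarily a split of the root itself, because before any descendant is created the subtree consists of that single box; let $s_k$ denote that step and set $\phi(k):=s_k$. Since $\GreedyBox$ performs exactly one split per step, the $|S|$ steps $\{s_k : k\in S\}$ are pairwise distinct and form a set $T_1\subseteq\{t,\dots,2t-1\}$ of cardinality $|S|$; the complement $T_2$ has cardinality $t-|S|=|U|$, and one pairs $U$ with $T_2$ bijectively in an arbitrary way. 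For $k\in S$, the algorithm splits the $k$-th initial box at step $s_k$, which is therefore a maximum at that step, so $M_{\phi(k)}=(a_k^t)^p$. For $k\in U$, the $k$-th initial box is untouched throughout $[t,2t-1]$, so its $p$-th area $(a_k^t)^p$ is always a candidate for the maximum, giving $M_{\phi(k)}\geq(a_k^t)^p$. Summing over $k$, $\sum_{s=t}^{2t-1}M_s=\sum_k M_{\phi(k)}\geq\sum_k(a_k^t)^p=\xi_t$, which combined with the telescoping bound gives $\xi_{2t}\leq\xi_t/2$.

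For the second part of~\eqref{eq:lineardecrease}, setting $j=\lfloor\log_2(s/t)\rfloor$ and applying the main inequality at scales $t,2t,\dots,2^{j-1}t$ (each such scale lies in $\{1,\dots,\lfloor\tau_\epsilon/2\rfloor\}$ since $2^i t\leq 2^j t\leq s\leq\tau_\epsilon$) yields $\xi_{2^j t}\leq\xi_t/2^j$; monotonicity of $\xi$ together with $2^j t\leq s$ then gives $\xi_s\leq\xi_t/2^j$, and $1/2^j\leq 2t/s$ follows from $2^j>s/(2t)$. The main subtlety is the bijection step: what makes the sharp constant $1/2$ work (rather than the $1/\sqrt{2}$ one obtains from the naive estimate $M_s\geq\xi_s/s$) is precisely that the $|S|$ ``first-in-subtree'' split steps are automatically distinct and that the residual $|T_2|=|U|$ unused steps match exactly the untouched initial boxes.
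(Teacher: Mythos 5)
Your proof is correct and follows essentially the same route as the paper's: the per-step bound $\xi_{s+1}\leq\xi_s-M_s/2$ (median split gives sub-boxes of at most half the width, plus $h_1^p+h_2^p\leq(h_1+h_2)^p$) is exactly the paper's second step, and your bijection between the $t$ original boxes and the $t$ split steps is a repackaging of the paper's inductive claim that the box split at round $t-1+k$ has generalized area at least the $k$-th largest area $a_{(k)}^t$ at time $t$. One minor imprecision: for a general measure $\mu$ the two sub-boxes produced by a conditional-median split need not have equal widths equal to half the parent's (e.g.\ if $\mu$ has atoms), only widths at most half of it, but this weaker fact is all your inequality actually uses, so nothing breaks.
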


\begin{proof}[Proof of Theorem~\ref{thm:GreedyBoxUB}]
We are now ready to prove the theorem.
The first inequality follows immediately from Lemma~\ref{lem:GreedyBoxError} and from the fact that $\xi_{\tau_\epsilon} \leq \epsilon^p$ by definition of $\tau_\epsilon$.

We now show by contradiction that $\tau_{\epsilon} \leq 32 \bigl(1+\lceil p\log_2(2/\epsilon^2) \rceil\bigr)^2 \cN_p(f,\epsilon)$. Assume thus for a moment that
\begin{equation}
    \label{eq:taueps-absurde}
    \tau_{\epsilon} > 32 \bigl(1+\lceil p\log_2(2/\epsilon^2) \rceil\bigr)^2 \cN_p(f,\epsilon) \;.    
\end{equation}
This assumption will be used implicitly when calling Lemmas~\ref{lem:smallboxes} and~\ref{lem:areadivided} below, since it will imply that $\tau'_{\epsilon'} \leq \tau''_{\epsilon} < \tau_{\epsilon} \leq \tau_{\epsilon'}$ (so that the algorithm has not stopped before any round considered below). We will see in the end that it raises a contradiction. \\

Let $n_\epsilon := \cN_p(f,\epsilon)$. By Lemma~\ref{lem:smallboxes} applied with $\epsilon' = \epsilon/n_\epsilon^{1/p}$, at time $\tau'_{\epsilon'} := 2 \bigl(1+\lceil p \log_2(1/\epsilon')\rceil\bigr) \cN_p'(f,\epsilon')$, the $\tau'_{\epsilon'}$ boxes maintained by $\GreedyBox$ all have generalized areas at most of $\epsilon'$ each, so that the certificate $ \xi_{\tau'_{\epsilon'}}$ satisfies
\begin{align}
    \xi_{\tau'_{\epsilon'}} \leq   \tau'_{\epsilon'} \cdot (\epsilon')^p & \leq  2 \bigl(1+\lceil p \log_2(1/\epsilon')\rceil\bigr) \cN_p'(f,\epsilon') \cdot \frac{\epsilon^p}{n_\epsilon} \nonumber \\
    & \leq 4 \bigl(1+\lceil p\log_2(2/\epsilon^2) \rceil\bigr) \,\epsilon^p \;, \label{eq:areatprime}
\end{align}
where we used the fact that $\cN_p'\big(f,\epsilon'\big) \leq 2 \cN_p(f,\epsilon)$ (by Lemma~\ref{lem:N} in Appendix~\ref{sec:twocomplexitynotions}) and that $1/\epsilon'=\cN_p(f,\epsilon)^{1/p}/\epsilon \leq 2^{1/p} /\epsilon^{(p+1)/p} \leq 2/\epsilon^2$ (since $\cN_p(f,\epsilon) \leq \lceil 1/\epsilon \rceil \leq 2/\epsilon$).
Now, we apply Lemma~\ref{lem:areadivided} with $t=\tau'_{\epsilon'}$ and
$s = \tau_\epsilon'' := 8 \bigl(1+\lceil p\log_2(2/\epsilon^2) \rceil\bigr) \tau'_{\epsilon'}$. It yields:
\[
    \xi_{\tau_\epsilon''} \leq \bigg(\frac{2 \tau'_{\epsilon'}}{ 8 \bigl(1+\lceil p\log_2(2/\epsilon^2) \rceil\bigr) \tau'_{\epsilon'}} \bigg) \, \xi_{\tau'_{\epsilon'}}\  \stackrel{\text{by } \eqref{eq:areatprime}}{\leq}\   \epsilon^p\,.
\]
This raises a contradiction with \eqref{eq:taueps-absurde}, since (using again $\cN'_p\big(f,\epsilon'\big) \leq 2 \cN_p(f,\epsilon)$)
\[
    \tau_\epsilon'' = 8 \bigl(1+\lceil p\log_2(2/\epsilon^2) \rceil\bigr) \tau'_{\epsilon'} \leq  32 \bigl(1+\lceil p\log_2(2/\epsilon^2) \rceil\bigr)^2 \cN_p(f,\epsilon)
\]
and $\tau_\epsilon$ is by definition the first time~$t$ such that $\xi_t \leq \epsilon^p$. Therefore, \eqref{eq:taueps-absurde} must be false, so that $\tau_{\epsilon} \leq 32 \bigl(1+\lceil p\log_2(2/\epsilon^2) \rceil\bigr)^2 \cN_p(f,\epsilon)$. Elementary calculations conclude the proof of Theorem~\ref{thm:GreedyBoxUB}.
\end{proof}

\section{Improvement for special cases}
\label{sec:improvement_special}

In this section, we derive consequences (with rates faster than the worst-case $\epsilon^{-1}$) in two specific cases: integral estimation and piecewise-smooth functions.

In the sequel, we adopt a slightly different yet equivalent viewpoint than in Section~\ref{sec:upperbound}. Though Algorithms~\ref{alg:SGreedyBox} and~\ref{alg:greedywidthbox}, defined in this section, formally stop at round $\tau_\epsilon$, in the proofs, we extend their definitions to all rounds $t \geq 1$, by replacing the while condition with $\xi_t > 0$, defining $\tau_0$ as the first round $t$ (if any) where the certificate $\xi_t$ reaches $0$, and setting $\hat{f}_s := \hat{f}_{\tau_0}$ for all subsequent rounds $s \geq \tau_0 + 1$. Note that their approximation error equals zero for all $s \geq \tau_0$.

\subsection{Side problem: integral estimation}
\label{sec:integralestimation}

Throughout this subsection, we focus on the case of integral estimation rather than approximation in $L^p(\mu)$-norm. The goal is to approximate the integral $\smash{I(f) = \int_0^1 f(x) d \mu(x)}$ of a non-decreasing function $f$ on $[0,1]$.
This problem is simpler than the $L^1(\mu)$-approximation problem studied previously, and thus $\GreedyBox$ can be easily extended to integral estimation while maintaining the same bound.

A deterministic algorithm for the integral estimation problem is defined as a procedure that, given $\epsilon > 0$, produces a tuple $(\tau_\epsilon, (x_t)_{1\leq \tau_\epsilon}, \hat{I}_{\tau_\epsilon}(f)) \in \N_+\times \cX^{\tau_\epsilon} \times \R$, where $(x_t)_{t \geq 1}$ and $\tau_\epsilon$ are defined sequentially, similar to the approximation in $L^p(\mu)$-norm.
That is: for all $t\geq 2$, $x_t$ is a measurable function of the history $h_{t-1}$ and $\tau_\epsilon$ is a stopping time after which the process of the algorithm ends.
The algorithm finally  outputs an approximation $\hat{I}_{\tau_\epsilon}(f)$ of the integral.
We now study the convergence speed of $\GreedyBox$ in this setting (we replace the definition of $\hat{f}_{\tau_\epsilon}$ in the last line of GreedyBox by the computation of $\hat{I}_{\tau_\epsilon}(f) = \int_0^1 \hat f_{\tau_\epsilon}(x) d\mu(x)$) and check whether it achieves optimal convergence speed.

\subsubsection{Nearly optimal performance for integral estimation}

The following upper bound on the sample complexity of GreedyBox is a direct consequence of Theorem~\ref{thm:GreedyBoxUB} with $p=1$.

\begin{corollary}
\label{cor:integralestimation}
Let $f:[0,1]\to [0,1]$ be non-decreasing, and $\epsilon \in (0,1]$. Then, $\GreedyBox$ with $p=1$ satisfies, at the stopping time $\tau_\epsilon$,
\[
  \left| \int_0^1 \hat f_{\tau_\epsilon}(x)d\mu(x) - \int_0^1 f(x)d\mu(x)\right|  \leq \epsilon \,.
\]
Besides, its sample complexity is bounded from above as follows:
\[
\tau_\epsilon \leq 32 p^2 \bigl(\log_2(2/\epsilon^2)+2\bigr)^2 \cN_p(f,\epsilon) \,.
\]
\end{corollary}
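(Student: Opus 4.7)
The plan is to derive Corollary~\ref{cor:integralestimation} as an immediate consequence of Theorem~\ref{thm:GreedyBoxUB} applied with $p=1$. The only new ingredient needed is that integral approximation is a weaker task than $L^1(\mu)$ approximation, in the sense that the absolute error of integrating a fixed interpolant against $\mu$ is dominated by its $L^1(\mu)$ distance to $f$.

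More concretely, first I would invoke Theorem~\ref{thm:GreedyBoxUB} with $p=1$ to get $\bigl\|\hat f_{\tau_\epsilon}-f\bigr\|_1\leq\epsilon$ at the stopping time of GreedyBox (note that the stopping rule does not depend on whether the final deliverable is $\hat f_{\tau_\epsilon}$ or $\hat I_{\tau_\epsilon}(f)=\int_0^1\hat f_{\tau_\epsilon}(x)\dif\mu(x)$, since both use the same certificate $\xi_t$). Then, by linearity of the integral together with the triangle inequality for $\int \abs{\cdot}\dif\mu$,
\[
  \left|\int_0^1 \hat f_{\tau_\epsilon}(x)\dif\mu(x)-\int_0^1 f(x)\dif\mu(x)\right|
  \leq \int_0^1 \bigl|\hat f_{\tau_\epsilon}(x)-f(x)\bigr|\dif\mu(x)
  = \bigl\|\hat f_{\tau_\epsilon}-f\bigr\|_1\leq \epsilon,
\]
which gives the first claim.

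For the sample complexity bound, I would just reuse the bound from Theorem~\ref{thm:GreedyBoxUB} verbatim with $p=1$ (the exponent $p^2=1$ in the stated formula is kept only for notational uniformity). No further analysis is required, since the GreedyBox stopping time $\tau_\epsilon$ is defined identically in the two settings.

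There is no genuine obstacle here: the proof is essentially a one-line reduction using $\bigl|\int g\,d\mu\bigr|\leq\int|g|\,d\mu=\|g\|_1$, plus a direct citation of Theorem~\ref{thm:GreedyBoxUB}. The only thing worth flagging is that, as the introductory remark of Section~\ref{sec:improvement_special} points out, one may also approximate the integral by the trapezoidal quadrature $\hat I_{\tau_\epsilon}(f)$ rather than by integrating $\hat f_{\tau_\epsilon}$ against $\mu$; the two coincide for Lebesgue $\mu$ and the bound above is unaffected in the general case.
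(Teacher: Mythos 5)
Your proof is correct and matches the paper's own (implicit) argument: the paper likewise treats the corollary as a direct consequence of Theorem~\ref{thm:GreedyBoxUB} with $p=1$, the integral error being dominated by the $L^1(\mu)$ error via $\bigl|\int g\,d\mu\bigr|\leq\int|g|\,d\mu$, and the sample complexity bound carried over unchanged since the stopping rule is identical. Nothing further is needed.
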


The question is now to check if the lower bound for this weaker problem is still the same one.
The following theorem asserts that this is the case.

\medskip
\begin{theorem}[Matching lower bound]
\label{thm:lb-integralestimation}
Let $\epsilon>0$ and $\mathcal{A}$ be any deterministic algorithm such that, for all non-decreasing functions $f:[0,1] \to [0,1]$:
\[
    \tau_\epsilon(f) < +\infty \text{ and } \left| \hat{I}_{\tau_\epsilon(f)}(f) - \int_0^1 f(x)d\mu(x) \right| \leq \epsilon \,.
\]
Then, for all non-decreasing functions $f:[0,1] \to [0,1]$
\[
    \tau_\epsilon(f) \geq \cN_p(f,2\epsilon)-1 \,.
\]
\end{theorem}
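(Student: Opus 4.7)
The plan is to mirror the proof of Theorem~\ref{thm:lowerbound} almost verbatim, replacing the final comparison of $L^p(\mu)$ errors by a comparison of integrals. Assume for contradiction that some non-decreasing $g:[0,1]\to[0,1]$ satisfies $\tau(g) < \cN_p(g, 2\epsilon) - 1$. Order the distinct query points and endpoints as $0 = \tilde{x}_0 < \tilde{x}_1 < \ldots < \tilde{x}_n < \tilde{x}_{n+1} = 1$ with $n \leq \tau(g)$, and build the bracket functions $g_-$ and $g_+$ exactly as in Theorem~\ref{thm:lowerbound}: $g_-$ takes the value $g(\tilde{x}_i)$ on $[\tilde{x}_i,\tilde{x}_{i+1})$ and $g_+$ takes $g(\tilde{x}_{i+1})$ on $(\tilde{x}_i,\tilde{x}_{i+1}]$. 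Both functions map $[0,1]$ to $[0,1]$, are non-decreasing, and agree with $g$ at every $\tilde{x}_i$, so by determinism the algorithm produces the same integral estimate $\hat{I}$ after the same stopping time $\tau(g)$ on all three inputs $g, g_-, g_+$.

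Next I would use the box structure to bound the integral gap from below. Since $g_+ \geq g_-$ pointwise,
\[
I(g_+) - I(g_-) \;=\; \sum_{i=0}^{n}\bigl(g(\tilde{x}_{i+1})-g(\tilde{x}_i)\bigr)\,\mu\bigl((\tilde{x}_i,\tilde{x}_{i+1})\bigr) \;=\; \sum_{i=0}^{n}\cA_1(B_i),
\]
where the boxes $B_i = [\tilde{x}_i,\tilde{x}_{i+1}]\times[g(\tilde{x}_i),g(\tilde{x}_{i+1})]$ form a box-cover of $g$ of cardinality $n+1 \leq \tau(g)+1 < \cN_p(g, 2\epsilon)$. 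By definition of the box-covering number, this forces $\bigl(\sum_{i=0}^n\cA_p(B_i)^p\bigr)^{1/p} > 2\epsilon$. The triangle inequality
\[
\bigl|\hat{I} - I(g_-)\bigr| + \bigl|\hat{I} - I(g_+)\bigr| \;\geq\; I(g_+) - I(g_-)
\]
will then produce the desired contradiction — one of the two terms on the left exceeds $\epsilon$, violating the accuracy assumption on the corresponding non-decreasing witness — as soon as $I(g_+)-I(g_-) = \sum_i \cA_1(B_i) > 2\epsilon$.

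The main obstacle is therefore to convert the $\cN_p$-based bound $\bigl(\sum_i\cA_p(B_i)^p\bigr)^{1/p} > 2\epsilon$ into the $L^1$-flavored gap $\sum_i \cA_1(B_i) > 2\epsilon$. For $p=1$, which is the natural regime for integral estimation, the two quantities coincide and the proof closes immediately, matching the upper bound of Corollary~\ref{cor:integralestimation}. For $p \geq 2$ the power-mean inequality $\sum_i \mu_i(y^+_i - y^-_i) \leq \bigl(\sum_i \mu_i (y^+_i - y^-_i)^p\bigr)^{1/p}$ unfortunately runs the wrong way, so an extra step is needed — for instance, a refined construction of $g_-, g_+$ that concentrates the $L^1$ discrepancy on the boxes with largest $\cA_p$-generalized area, or a direct comparison between $\cN_p(g,2\epsilon)$ and $\cN_1(g,2\epsilon)$ relying on the auxiliary results collected in Appendix~\ref{sec:twocomplexitynotions}.
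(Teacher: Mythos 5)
Your construction is exactly the paper's intended argument: the paper leaves this proof to the reader, saying it ``closely follows that of Theorem~\ref{thm:lowerbound} in the case of $p=1$'', and your bracketing functions $g_-,g_+$, the determinism argument, the identity $I(g_+)-I(g_-)=\sum_i \cA_1(B_i)$, and the triangle inequality are precisely that proof. So for $p=1$ (equivalently, with $\cN_1$ in the statement) your proof is complete and correct.

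Concerning the obstacle you flag for $p\geq 2$: this is not a gap you should try to close, because the statement with $\cN_p$ for $p\geq 2$ is not provable by any refinement of this argument --- it is in fact false as literally written, and should be read with $\cN_1$ (consistently with the paper's remark that the bound ``implies the lower bound of Theorem~\ref{thm:lowerbound} for $p=1$''). Two observations make this concrete. First, by H\"older's inequality with $\sum_i \mu\bigl((\tilde{x}_i,\tilde{x}_{i+1})\bigr)\leq 1$ one has $\sum_i \cA_1(B_i) \leq \bigl(\sum_i \cA_p(B_i)^p\bigr)^{1/p}$, hence every box-cover feasible at level $\epsilon$ for the $\cA_p$-criterion is feasible for the $\cA_1$-criterion, so $\cN_1(f,\epsilon)\leq \cN_p(f,\epsilon)$; your proposed detour through a comparison of $\cN_p$ and $\cN_1$ therefore runs in the wrong direction. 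Second, the claim $\tau_\epsilon(f)\geq \cN_p(f,2\epsilon)-1$ with $p\geq 2$ is contradicted by the algorithm of Corollary~\ref{cor:integralestimation} ($\GreedyBox$ run with the $p=1$ certificate), which is $\epsilon$-accurate for every non-decreasing $f$: take $\mu=\Leb$ and $f$ affine from $0$ to $1$ on $[0,\epsilon]$ and constant equal to $1$ on $[\epsilon,1]$. Then $\cN_1(f,\epsilon)\leq 2$, so that algorithm stops after $\cO\bigl(\log^2(1/\epsilon)\bigr)$ evaluations, whereas covering the steep piece with boxes of total $\cA_2$-budget $2\epsilon$ forces of order $\epsilon^{-1/2}$ boxes, i.e.\ $\cN_2(f,2\epsilon)=\Omega(\epsilon^{-1/2})$. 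The integral-estimation problem genuinely only ``sees'' the $L^1$ bracketing gap $I(g_+)-I(g_-)$, which is why the correct complexity here is $\cN_1$, and your proof establishes exactly that.
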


The proof closely follows that of Theorem~\ref{thm:lowerbound} in the case of $p=1$ and is left to the reader. Note that it is inspired from that of the well-known minimax lower bound of $1/(2n+2)$. This lower bound actually implies the lower bound of Theorem~\ref{thm:lowerbound} for $p=1$.

\subsubsection{Improvement in expectation with randomization}
\label{sec:stoGreedyBox}

We now provide a stochastic version of our algorithm; see Algorithm~\ref{alg:SGreedyBox} below, which we call $\SGreedyBox$. With this randomized variant we prove better guarantees (in expectation only)\footnote{We could easily derive high probability bound using Hoeffding's lemma.} for the integral estimation problem than with the deterministic version. Note that the improvement is not true for estimating $f$ in $L^p(\mu)$-norm. The idea to use randomization to improve the rates is due to \citet[Section~2.2]{Novak1992}; we adapt this idea to a fully sequential algorithm, whose bound is now adaptive to the complexity of $f$. It's worth mentioning that, for the sake of simplicity, the random points $\Point_k$ in Algorithm~\ref{alg:SGreedyBox} are currently sampled at the conclusion of the algorithm. However, an alternative approach could involve sequential sampling when the intervals $(b_{k-1}^t, b_k^t)$ are created, in order to get a sequential estimator $\hat I_t(f)$ for all $t\geq 1$.

\begin{algorithm}[!ht]
{\bfseries Input:} $\epsilon >0$, probability measure $\mu$ on $[0,1]$\\
{\bfseries Init:} Set $t=1$, $x_0=b_0^1=0$ and $x_1=b_1^1=1$,  evaluate $f(0)$ and $f(1)$ and set $\xi_1 = a_1^1/2 = \mu((0,1))(f(1)-f(0))/2$\;
\While{$\xi_t > \epsilon$}{
\begin{enumerate}[topsep=0pt,parsep=0pt,itemsep=0pt]
	\item Select the box with the largest area: find $k_*^t \in \argmax_{1\leq k\leq t} a_k^t$.
	\item Let $x_{t+1}$ be a median  of the conditional distribution $\mu(\cdot | (b_{k_*^t-1}^t, b_{k_*^t}^t))$ and evaluate $f$ at $x_{t+1}$.
        \item Sort the points $x_0,x_1,\ldots,x_{t+1}$ in increasing order:\\
	\hspace*{1.5cm} $\point_0^{t+1} = 0 < \point_1^{t+1}< \dots < \point_{t+1}^{t+1} = 1$\,.
        \item Define the boxes areas for $k \in \{1,\dots,t+1\}$ by \\
        \hspace*{1.5cm}$a_k^{t+1} := \mu((\point_{k-1}^{t+1},\point_k^{t+1})) (f(\point_k^{t+1})-f(\point_{k-1}^{t+1}))$\,.
        \item Update the certificate
        \[
            \xi_{t+1} = \frac{1}{2}\sqrt{\sum_{k=1}^{t+1} (a_k^{t+1})^2} \,.
        \]
        \item Let $t \leftarrow t+1$.
\end{enumerate}}
Set $\tau_\epsilon = t$ and let $S_{\tau_\epsilon} = \big\{k \in \{1,\dots,\tau_\epsilon\} \ \text{ s.t. } \mu((\point_{k-1}^{\tau_\epsilon}, \point_k^{\tau_\epsilon}))>0 \big\}$\;
\For{$k \in S_{\tau_\epsilon}$}{
    Sample $\Point_{k}$ according to $\mu$ conditionally to the interval $(\point_{k-1}^{\tau_\epsilon}, \point_k^{\tau_\epsilon})$ and evaluate $f$ at $\Point_k$.}
Approximate $I(f) = \int_0^1 f(x)d\mu(x)$ with the estimator: \vspace*{-10pt}
	\[
	    \hat{I}_{\tau_\epsilon}(f) := \sum_{k \in S_{\tau_\epsilon}}  \mu((\point_{k-1}^{\tau_\epsilon}, \point_k^{\tau_\epsilon})) f(\Point_k) + \sum_{k=0}^{\tau_\epsilon}  \mu(\{b_k^{\tau_\epsilon}\}) f(b_k^{\tau_\epsilon})  \,.
	\]
{\bfseries Output:} $\big(\tau_\epsilon, (x_t)_{1\leq t\leq \tau_\epsilon},\hat{I}_{\tau_\epsilon}(f)\big)$.
\caption{$\SGreedyBox $.}
\label{alg:SGreedyBox}
\end{algorithm}

The following lemma shows that the error of $\SGreedyBox$ (Algorithm~\ref{alg:SGreedyBox}) at stopping time is indeed at most $\epsilon$.

\medskip
\begin{lemma} \label{lem:SGreedyBoxError}
Let $\epsilon >0$ and let $f:[0,1]\to [0,1]$ be non-decreasing. Then, the output of Algorithm~\ref{alg:SGreedyBox} satisfies
$$
	\E\Big[\big|\hat I_{\tau_\epsilon}(f)-I(f) \big|\Big] \leq \xi_{\tau_\epsilon} := \frac{1}{2}\sqrt{\sum_{k=1}^{\tau_\epsilon}(a_k^{\tau_\epsilon})^2 }  \leq \epsilon \,.
$$
\end{lemma}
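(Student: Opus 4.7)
The plan is to exploit the fact that in Algorithm~\ref{alg:SGreedyBox}, the partition and stopping time $\tau_\epsilon$ are fully deterministic functions of $f$: all randomness enters only through the independent samples $\Point_k$, $k \in S_{\tau_\epsilon}$, drawn \emph{after} the while loop terminates. First I would verify that $\hat I_{\tau_\epsilon}(f)$ is an unbiased estimator of $I(f)$. Each $\Point_k$ is distributed as $\mu$ conditioned on the open interval $(\point_{k-1}^{\tau_\epsilon}, \point_k^{\tau_\epsilon})$, so $\E\bigl[\mu((\point_{k-1}^{\tau_\epsilon}, \point_k^{\tau_\epsilon}))\, f(\Point_k)\bigr] = \int_{(\point_{k-1}^{\tau_\epsilon}, \point_k^{\tau_\epsilon})} f \,d\mu$. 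Adding the deterministic contributions of the atoms $\mu(\{b_k^{\tau_\epsilon}\}) f(b_k^{\tau_\epsilon})$, which appear identically in $\hat I_{\tau_\epsilon}(f)$ and in $I(f)$ after splitting the integral across the partition, yields $\E[\hat I_{\tau_\epsilon}(f)] = I(f)$.

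Next I would control the variance. Because the $\Point_k$ are independent and the partition is deterministic,
\[
\mathrm{Var}\bigl(\hat I_{\tau_\epsilon}(f)\bigr) = \sum_{k \in S_{\tau_\epsilon}} \mu\bigl((\point_{k-1}^{\tau_\epsilon}, \point_k^{\tau_\epsilon})\bigr)^2 \, \mathrm{Var}\bigl(f(\Point_k)\bigr) \,.
\]
Since $\Point_k \in (\point_{k-1}^{\tau_\epsilon}, \point_k^{\tau_\epsilon})$ almost surely and $f$ is non-decreasing, we have $f(\Point_k) \in [f(\point_{k-1}^{\tau_\epsilon}), f(\point_k^{\tau_\epsilon})]$ a.s., so Popoviciu's inequality for bounded random variables gives $\mathrm{Var}(f(\Point_k)) \leq (f(\point_k^{\tau_\epsilon})-f(\point_{k-1}^{\tau_\epsilon}))^2/4$. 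Plugging this in and recalling the definition $a_k^{\tau_\epsilon} = \mu((\point_{k-1}^{\tau_\epsilon},\point_k^{\tau_\epsilon}))(f(\point_k^{\tau_\epsilon})-f(\point_{k-1}^{\tau_\epsilon}))$, I obtain $\mathrm{Var}(\hat I_{\tau_\epsilon}(f)) \leq \tfrac{1}{4}\sum_{k=1}^{\tau_\epsilon}(a_k^{\tau_\epsilon})^2$.

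Finally, Jensen's inequality applied to $|\cdot|$, combined with unbiasedness, gives
\[
\E\bigl[\bigl|\hat I_{\tau_\epsilon}(f)-I(f)\bigr|\bigr] \leq \sqrt{\mathrm{Var}\bigl(\hat I_{\tau_\epsilon}(f)\bigr)} \leq \tfrac{1}{2}\sqrt{\textstyle\sum_{k=1}^{\tau_\epsilon}(a_k^{\tau_\epsilon})^2} = \xi_{\tau_\epsilon} \leq \epsilon\,,
\]
where the last inequality follows directly from the stopping rule of the while loop. I do not anticipate any real obstacle: the only subtlety is recognizing that because the deterministic sampling phase terminates before any $\Point_k$ is drawn, no adaptive stopping-time argument is required, and the bound depends on $f$ only through the (deterministic) final partition.
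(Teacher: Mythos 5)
Your proposal is correct and follows essentially the same route as the paper's proof: unbiasedness of $\hat I_{\tau_\epsilon}(f)$ given the deterministic partition, Jensen's inequality to pass to the variance, independence of the $\Point_k$ to split the variance across boxes, the range (Popoviciu) bound $\mathrm{Var}(f(\Point_k)) \leq \bigl(f(\point_k^{\tau_\epsilon})-f(\point_{k-1}^{\tau_\epsilon})\bigr)^2/4$, and the stopping criterion to conclude. No gaps; the only cosmetic difference is that you name Popoviciu's inequality explicitly, whereas the paper derives the same bound directly from monotonicity of $f$.
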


\begin{proof}
First, we remark that the stopping time $\tau_\epsilon$ and the ordered sequence $b_0^{\tau_\epsilon},\dots,b_{\tau_\epsilon}^{\tau_\epsilon}$ are deterministic and do not depend on the randomness of the algorithm. These deterministic points being set, the random variables $\Point_k$ for $k \in S_{\tau_\epsilon}$ are distributed according to $\mu$ conditionally to each of the deterministic intervals defined by the $b_k^{\tau_\epsilon}$'s and satisfy for all $k \in S_{\tau_\epsilon}$ \vspace*{-5pt}
\[
	\Point_k \sim \mu(\cdot | (b_{k-1}^{\tau_\epsilon}, b_{k}^{\tau_\epsilon})) \quad \text{and} \quad
	\E[f(\Point_k)] = \frac{1}{\mu((b_{k-1}^{\tau_\epsilon}, b_{k}^{\tau_\epsilon}))}\int_{\point_{k-1}^{\tau_\epsilon}}^{\point_k^{\tau_\epsilon}} f(x) d\mu(x) \,.
\]
Therefore,
\begin{align*}
    \E\big[\hat I_{\tau_\epsilon}(f)\big] 
        & = \sum_{k \in S_{\tau_\epsilon}}  \mu((b_{k-1}^{\tau_\epsilon}, b_{k}^{\tau_\epsilon})) \E\big[f(\Point_k)\big] + \sum_{k=0}^{\tau_\epsilon}  \mu(\{b_k^{\tau_\epsilon}\}) f(b_k^t) \\
        & =  \sum_{k=1}^{\tau_\epsilon} \int_{(\point_{k-1}^t,\point_k^t)} f(x) d\mu(x)  + \sum_{k=0}^{\tau_\epsilon} \mu(\{b_k^t\}) f(b_k^t)  \\
        & = \int_0^1 f(x) d\mu(x) = I(f)
\end{align*}
and
\begin{align}
	\E\Big[\big|\hat I_{\tau_\epsilon}(f) - I(f)\big|\Big]^2
		& \stackrel{\text{Jensen}}{\leq} \E\Big[\big(\hat I_{\tau_\epsilon}(f) - I(f)\big)^2\Big] = \,\mathrm{Var}\big(\hat I_{\tau_\epsilon}(f)\big) \nonumber \\
		&  \stackrel{{\text{Independence}}}{=} \sum_{k \in S_{\tau_\epsilon}}  \mu((b_{k-1}^{\tau_\epsilon}, b_{k}^{\tau_\epsilon}))^2 \ \mathrm{Var}\big(f(\Point_k)\big)\,. \label{eq:stoch1}
\end{align}
But since $\Point_k \in (\point_{k-1}^{\tau_\epsilon},\point_k^{\tau_\epsilon})$, by monotonicity of $f$, $f(\Point_{k})$ takes values into the interval $[f(\point_{k-1}^{\tau_\epsilon}),f(\point_k^{\tau_\epsilon})]$. Thus,
\[
	\mathrm{Var}\big(f(\Point_k)\big) \leq \frac{1}{4} \big(f(\point_k^{\tau_\epsilon})-f(\point_{k-1}^{\tau_\epsilon})\big)^2 = \frac{(a_k^{\tau_\epsilon})^2}{4\mu((b_{k-1}^{\tau_\epsilon}, b_{k}^{\tau_\epsilon}))^2}
\]
by definition of $a_k^{\tau_\epsilon}$ (see step 3 of Algorithm~\ref{alg:SGreedyBox}). Therefore, substituting into Inequality~\eqref{eq:stoch1} and taking the square root, we get
\[
	\E\Big[\big|\hat I_{\tau_\epsilon}(f) - I(f)\big|\Big] \leq \frac{1}{2}\sqrt{\sum_{k=1}^{\tau_\epsilon}(a_k^{\tau_\epsilon})^2 } \,,
\]
which is smaller than $\epsilon$ by the stopping criterion. 
\end{proof}

Remark that if all boxes at time $\tau_\epsilon$ have similar areas $a_k^{\tau_\epsilon} \approx a$  (which the algorithm aims at getting by splitting only largest boxes), Lemma~\ref{lem:SGreedyBoxError} is asking $a$ to be at most of order $\cO(\epsilon/\sqrt{\tau_\epsilon})$ in Algorithm~\ref{alg:SGreedyBox}, while Algorithm~\ref{alg:GreedyBox} required $\cO(\epsilon/{\tau_\epsilon})$. Therefore, this leads to an earlier stopping criterion and smaller sample complexity. Typically, after $t$ rounds, the approximation error of Algorithm~\ref{alg:SGreedyBox} is better than the one of Algorithm~\ref{alg:GreedyBox} by a factor $\sqrt{t}$. This is however not so simple to formulate in terms of sample complexity. We will thus only formulate the analog of Theorem~\ref{thm:GreedyBoxUB} under the following assumption.

\medskip
\begin{assumption} \label{ass:polyf}
Let $\epsilon > 0$. There exist $C>0$ and $0 < \alpha < 1$ such that
	 $\cN(f,\epsilon_1) \leq C \epsilon_1^{-\alpha}$
for all $\epsilon_1 \geq \epsilon$.
\end{assumption}

It is worth to notice that Assumption~\ref{ass:polyf} is mild since $\cN(f,\epsilon) \leq \epsilon^{-1}$ for any non-decreasing $f$ and $\epsilon>0$. Furthermore, the assumption is non-asymptotic in $\epsilon$ since the requirement is only for $\epsilon_1 \geq \epsilon$.

\medskip
\begin{theorem}
\label{thm:SGreedyBoxUB}
Let $f:[0,1]\to [0,1]$ be non-decreasing which satisfies Assumption~\ref{ass:polyf} for some $C,\alpha>0$. Let $\epsilon>0$, then Algorithm~\ref{alg:SGreedyBox} satisfies
$\E\big[ | \hat I_{\tau_\epsilon}(f)-I(f) | \big] \leq \epsilon$. Besides the number of function evaluations is bounded from above by
\[
	 \tau_\epsilon =  \cO\big(\log (1/\epsilon)^{3/2} \varepsilon^{-\frac{1}{1/\alpha+1/2}}\big) \,.
\]
\end{theorem}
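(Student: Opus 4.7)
The expected error bound is immediate from Lemma~\ref{lem:SGreedyBoxError} together with the stopping criterion $\xi_{\tau_\epsilon} \leq \epsilon$. The main task is therefore to bound the stopping time $\tau_\epsilon$, the first round $t$ at which $\xi_t = \frac{1}{2}\sqrt{\Phi_t} \leq \epsilon$, where $\Phi_t := \sum_k (a_k^t)^2$. The plan is to track $\Phi_t$ together with the $L^1$ certificate $A_t := \sum_k a_k^t$, combining the deterministic decay of $A_t$ with a potential-function argument that drives $\Phi_t$ down at a strictly faster rate.

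Since $\SGreedyBox$ uses the same splitting dynamics as the deterministic $\GreedyBox$ (split the box of largest $L^1$ area at a conditional $\mu$-median), Theorem~\ref{thm:GreedyBoxUB} applied with $p=1$ gives $A_t \leq \epsilon_1$ whenever $t \geq C \log^2(1/\epsilon_1)\cN_1(f,\epsilon_1)$. Combined with Assumption~\ref{ass:polyf}, inverting this relation produces the decay
\[
 A_t \leq C'\, \log^{2/\alpha}(t)\, t^{-1/\alpha}
\]
as soon as $t$ passes an explicit polylogarithmic-in-$\epsilon$ threshold; this is the ``deterministic adaptive rate.''

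For the potential argument, note that a median $m$ of $\mu$ satisfies $\mu((b_-,m)) \leq \mu((b_-,b_+))/2$ and $\mu((m,b_+)) \leq \mu((b_-,b_+))/2$, so when the box of area $M_t := \max_k a_k^t$ is split at its median, the two sub-boxes have areas $a_1,a_2 \geq 0$ with $a_1+a_2 \leq M_t/2$. Hence $a_1^2+a_2^2 \leq (a_1+a_2)^2 \leq M_t^2/4$ and $\Phi_{t+1} \leq \Phi_t - \tfrac{3}{4} M_t^2$. Since $\Phi_t \leq M_t A_t$, we have $M_t \geq \Phi_t/A_t$, which upgrades the recursion to
\[
 \frac{1}{\Phi_{t+1}} - \frac{1}{\Phi_t} \;\geq\; \frac{3}{4}\,\frac{1}{A_t^2}.
\]
Telescoping from some $t_0$ up to $T$ and substituting the bound on $A_t$ above gives $\sum_{t=t_0}^{T-1} 1/A_t^2 \gtrsim T^{2/\alpha+1}/\mathrm{polylog}(T)$, so $\Phi_T \leq \tilde{\cO}\!\left(T^{-2/\alpha-1}\right)$ and $\xi_T \leq \tilde{\cO}\!\left(T^{-1/\alpha-1/2}\right)$. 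Inverting $\xi_T \leq \epsilon$ yields $\tau_\epsilon \leq \tilde{\cO}\bigl(\epsilon^{-1/(1/\alpha+1/2)}\bigr)$, which is the announced bound.

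The main obstacle is the precise accounting of logarithmic factors through the inversion of Theorem~\ref{thm:GreedyBoxUB} and through the telescoping sum, so as to reach exactly the $\log^{3/2}(1/\epsilon)$ factor stated in the theorem (a naive tracking gives log powers depending on $\alpha$). The potential-function argument itself is clean, relying only on the median property of $\mu$ and the elementary inequality $\Phi_t \leq M_t A_t$, and so carries through identically whether $\mu$ is atomless or has atoms.
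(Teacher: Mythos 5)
Your argument is correct in substance and reaches the right polynomial rate, but it takes a genuinely different route from the paper. The paper's proof is a two-scale argument: it notes that Algorithm~\ref{alg:SGreedyBox} queries the same points as the deterministic version, applies Lemma~\ref{lem:smallboxes} at the intermediate scale $\epsilon_2=\epsilon_1/n_{\epsilon_1}$ with $n_{\epsilon_1}=C\epsilon_1^{-\alpha}$ (using Lemma~\ref{lem:N} to bound $\cN'$), bounds the sum of squares crudely by (number of boxes)$\times\epsilon_2^2$, removes the leftover logarithmic factor by invoking the geometric decay of Lemma~\ref{lem:areadivided}, and finally optimizes $\epsilon_1=\epsilon^{2/(2+\alpha)}$. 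You instead invert Theorem~\ref{thm:GreedyBoxUB} (with $p=1$) to get the anytime decay $A_t=\tilde\cO(t^{-1/\alpha})$ of the $L^1$ certificate, and run a potential argument on $\Phi_t=\sum_k(a_k^t)^2$: the median property gives $\Phi_{t+1}\leq\Phi_t-\frac{3}{4}M_t^2$, and $\Phi_t\leq M_tA_t$ upgrades this to $1/\Phi_{t+1}-1/\Phi_t\geq \frac{3}{4}A_t^{-2}$, whence $\Phi_T=\tilde\cO(T^{-1-2/\alpha})$ by telescoping; all of these inequalities check out, including with atoms. Your route is arguably cleaner: it avoids choosing the intermediate scale $\epsilon_2$ and the two-stage use of Lemmas~\ref{lem:smallboxes} and~\ref{lem:areadivided}, and it yields an anytime bound on $\xi_t$ for every $t$ rather than at one tuned time. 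What it does not deliver as stated is the exact polylogarithmic exponent: tracking your logs gives a factor of order $\log^{4/(2+\alpha)}(1/\epsilon)$ rather than $\log^{3/2}(1/\epsilon)$, which you flag yourself; since both proofs are really after the $\tilde\cO$ rate (the paper's own final display is written with $\tilde\cO$), this is cosmetic rather than a substantive gap, but to literally match the stated exponent you would need to rebalance the logarithms (e.g.\ by also exploiting a Lemma~\ref{lem:areadivided}-type boost at the end). Two small points you should make explicit: (i) Assumption~\ref{ass:polyf} is only assumed for scales $\epsilon_1\geq\epsilon$, so the inversion of Theorem~\ref{thm:GreedyBoxUB} must be checked to be used only for $t\leq T$ with the corresponding scale $\epsilon_1(t)\gtrsim t^{-1/\alpha}\geq\epsilon^{2/(2+\alpha)}\geq\epsilon$, which indeed holds for your $T$; and (ii) the identification of the stochastic algorithm's query points with those of deterministic GreedyBox (and the extension of the dynamics past the stopping time) is what licenses importing Theorem~\ref{thm:GreedyBoxUB}, exactly as the paper licenses importing Lemma~\ref{lem:smallboxes}.
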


The proof is postponed to Appendix~\ref{sub:proof_of_theorem_thm:sgreedyboxub}.
The benefit of the stochastic algorithm is thus to replace the rate $\epsilon^{-\alpha}$ obtained with the deterministic version with $\epsilon^{-\frac{1}{1/\alpha+1/2}}$. This result generalizes the one obtained by Novak \cite{Novak1992} in the case of $\alpha = 1$ (which corresponds our worst-case scenario) for a similar algorithm. 

\subsection{An intriguing result for piecewise-regular functions}
\label{sec:piecewiseregular}

For the sake of simplicity, in this section, we restrict ourselves to the \emph{Lebesgue measure}.

As seen before, $\GreedyBox{}$ has a worst-case sample-complexity of order $\epsilon^{-1}$ up to logarithmic factors.
An interesting question is how the error rate improves with regularity for non-decreasing functions, as well as how to adapt $\GreedyBox$ to achieve optimal rates for piecewise-smooth functions. More precisely, unlike the rest of the paper, in this section, we focus on analyzing the \emph{effective} $L^p$-error rate of algorithms when run on piecewise-smooth functions. We control the number of evaluations of $f$ until $\big\|\hat f_t-f\big\|_p$ falls below $\epsilon$, rather than the number $\tau_\epsilon$ of evaluations until the certificate $\xi_t$ falls below $\epsilon^p$. The first (classical) complexity quantity can be much smaller than $\tau_\epsilon$ (since the algorithm is only aware that $f$ is non-decreasing, and lacks any prior regularity knowledge). This does not contradict the lower bound of Theorem~\ref{thm:lowerbound} and reveals the effective performance of algorithms when run on simpler functions. For instance, on $C^2$ functions, the trapezoidal rule has an effective rate of order $\epsilon^{-1/2}$ (a classical result recalled in Appendix~\ref{app:C2trapeze}), but cannot guarantee $\epsilon$-accuracy before order $\epsilon^{-1}$ evaluations if only given the knowledge that the underlying function is non-decreasing.

\paragraph{Upper bound on GreedyBox effective sample complexity.}
We aim to explore an intriguing question: can we establish these guarantees for GreedyBox without making any modifications?
Moreover, the trapezoidal rule fails to adapt to piecewise-$C^2$ functions, even for simple ones such as $f(x) = \mathds{1}_{x\geq 1/3}$. On the contrary, $\GreedyBox$ adapts very well to the discontinuities: it converges exponentially fast for any piecewise-constant function.
With this in head, one could ask if $\GreedyBox$ learns the jumps quickly enough to ensure an $\epsilon$-accurate $L^p$ error within $\tilde{\mathcal{O}}(\epsilon^{-1/2})$ sample-complexity for piecewise-$C^2$ functions.\footnote{The notation $\tilde{\mathcal{O}}$ hides logarithmic factors.} The next theorem shows that the rate can indeed be improved as long as the number of $C^1$-singularities\footnote{We call $C^k$-singularity of $f$ a point $x \in [0,1]$ such that $f$ is not $C^k$ on any neighborhood of $x$. A discontinuity is a $C^0$-singularity.} is at most of order $\cO(\epsilon^{-1})$. It should be noted that the number of $C^1$-singularities may explode to infinity as $\epsilon$ approaches zero and that the number of $C^2$-singularities does not affect the upper bound. 

\medskip
\begin{theorem}
  \label{thm:c2upper_bound}
  Let $\alpha >0$ and $\epsilon \in (0,1]$. Let $f: [0, 1] \rightarrow [0, 1]$ be a non-decreasing and piecewise-$C^2$ function with a number of $C^1$-singularities bounded by $\epsilon^{-\alpha}$ and such that $|f''(x)| \leq 1$ whenever it is defined. Then, there exists
  $$
  t_\epsilon = \left\{ \begin{array}{ll}  
  \tilde \cO\Big(\epsilon^{-1+\frac{1}{2p+2}}\Big) & \text{ if } \alpha \leq \frac{1}{2} \\
  \tilde \cO\Big(\epsilon^{-1+\big(\frac{1-\alpha}{1+p}\big)_+}\Big) & \text{ if } \alpha \geq \frac{1}{2} \\
  \end{array}\right.
  $$
  such that $\big\|\hat f_t-f\big\|_p\leq \epsilon$ for all $t\geq t_\epsilon$, where $\hat{f}_{t}$ is the approximation of $f$ returned by $\GreedyBox$ after $t$ rounds.
\end{theorem}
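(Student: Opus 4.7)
The plan is to decompose $\|\hat f_t - f\|_p^p = \sum_k \|\hat f_t - f\|_{L^p, B_k}^p$ over the boxes $B_k$ at time $t$ and to split the boxes into \emph{good} ones (those contained in a maximal $C^2$ piece of $f$) and \emph{bad} ones (those meeting a $C^1$-singularity; by hypothesis there are at most $\cO(\epsilon^{-\alpha})$ such boxes, since singularities are at most $\epsilon^{-\alpha}$ and each lies in at most two adjacent boxes). Two local bounds are available. First, by monotonicity, both $f$ and $\hat f_t$ lie in $[f(c_{k-1}),f(c_k)]$ on $B_k$, yielding the generic estimate $\|\hat f_t-f\|_{L^p,B_k}^p \leq w_k (\Delta f_k)^p = a_k^p$ (this is exactly the local version of Lemma~\ref{lem:GreedyBoxError}). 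Second, on good boxes the classical bound on linear interpolation of $C^2$ functions gives $|\hat f_t(x)-f(x)|\leq \|f''\|_{\infty,B_k} w_k^2/8 \leq w_k^2/8$, whence $\|\hat f_t-f\|_{L^p,B_k}^p \leq w_k^{2p+1}/8^p$.

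Assume that, at time $t$, all boxes satisfy $a_k \leq \delta$ for some parameter $\delta$ (which holds after $\cO(\log(1/\delta)\cN_p'(f,\delta))$ iterations by Lemma~\ref{lem:smallboxes}). Introduce a width threshold $W^{\!*}$ and split the good boxes into \emph{narrow} ones ($w_k\leq W^{\!*}$, on which the $C^2$ bound is used) and \emph{wide} ones ($w_k>W^{\!*}$, on which the generic bound is used); apply the generic bound to bad boxes as well. Since $\sum_k w_k \leq 1$ and at most $1/W^{\!*}$ boxes can be wide, we obtain
\[
\bigl\|\hat f_t-f\bigr\|_p^p \;\leq\; \frac{(W^{\!*})^{2p}}{8^p} \;+\; \frac{\delta^p}{W^{\!*}} \;+\; \epsilon^{-\alpha}\delta^p.
\]
Optimizing $W^{\!*} \simeq \delta^{p/(2p+1)}$ balances the first two terms and yields $\|\hat f_t-f\|_p^p \lesssim \delta^{2p^2/(2p+1)} + \epsilon^{-\alpha}\delta^p$. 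Requiring this to be at most $\epsilon^p$ forces $\delta \leq c\,\epsilon^{1+1/(2p)}$ when $\alpha\leq 1/2$ (first term binds) and $\delta \leq c\,\epsilon^{1+\alpha/p}$ when $\alpha\geq 1/2$ (second term binds).

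To translate this into an iteration count, the key estimate is $\cN_p'(f,\delta) \leq C\delta^{-p/(p+1)} + \epsilon^{-\alpha}$, obtained by covering each smooth piece with boxes of width $w_k \simeq (\delta/f'(x))^{p/(p+1)}$, then invoking Hölder's inequality to bound $\int_0^1 (f')^{p/(p+1)}dx \leq 1$ (using $\int_0^1 f' \leq 1$), and adding $\epsilon^{-\alpha}$ small boxes around the singularities. Substituting the two regimes of $\delta$ into $t_\epsilon \gtrsim \log(1/\delta)\bigl(\delta^{-p/(p+1)}+\epsilon^{-\alpha}\bigr)$ and checking which term dominates yields $t_\epsilon = \tilde\cO(\epsilon^{-1+1/(2p+2)})$ for $\alpha\leq 1/2$ and $t_\epsilon = \tilde\cO(\epsilon^{-1+(1-\alpha)/(p+1)})$ for $\alpha\geq 1/2$, with the trivial bound $\cO(1/\epsilon)$ always available when $\alpha \geq 1$. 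The main obstacle is that GreedyBox never splits a good box whose generalized area $a_k$ has already dropped below $\delta$, even when the width $w_k$ remains large (this happens on nearly-flat smooth regions where $f'\approx 0$); this is precisely why the naive $C^2$ bound cannot be used on all good boxes and why the threshold-$W^{\!*}$ trick is needed, and it is also the reason why the rate we obtain for $\alpha\leq 1/2$ is $\epsilon^{-1+1/(2p+2)}$ rather than the classical $\epsilon^{-1/2}$ rate of the trapezoidal rule on $C^2$ functions.
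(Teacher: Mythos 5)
Your proposal follows essentially the same route as the paper's proof: run $\GreedyBox$ until Lemma~\ref{lem:smallboxes} guarantees all generalized areas are at most $\delta$, split the boxes into smooth and singular ones, use the minimum of a linear-interpolation bound and the generic bound $a_k^p\leq\delta^p$ on smooth boxes and the generic bound on the $\cO(\epsilon^{-\alpha})$ singular ones, balance the terms, and convert $\delta$ into an iteration count via $\cN_p'(f,\delta)=\cO(\delta^{-p/(p+1)})$; the exponent bookkeeping in both regimes, and the fallback $\cO(1/\epsilon)$ for $\alpha\geq 1$, all match the paper. Two of your steps are mildly different but fine: your width-threshold argument (at most $1/W^{*}$ wide boxes, and $\sum_{\text{narrow}} w_k^{2p+1}\leq (W^{*})^{2p}$) replaces the paper's convexity lemma (Lemma~\ref{lem:convex_prop}) and is if anything simpler; and your explicit H\"older-based construction of a box-cover is unnecessary, since the bound $\cN_p'(f,\delta)=\cO(\delta^{-p/(p+1)})$ already holds for \emph{every} non-decreasing $f$ by Lemmas~\ref{lem:N} and~\ref{lem:N-basic} (your extra $\epsilon^{-\alpha}$ term is harmless in the relevant regimes, as you check).

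There is, however, one genuine gap in the decomposition: you declare a box ``good'' if it is contained in a maximal $C^2$ piece and ``bad'' if it meets a $C^1$-singularity, but these two classes are not exhaustive. The theorem only bounds the number of $C^1$-singularities by $\epsilon^{-\alpha}$; the number of $C^2$-singularities (points where $f$ is $C^1$ but not $C^2$) is unconstrained and may far exceed $\epsilon^{-\alpha}$. A box containing such a point but no $C^1$-singularity is neither covered by your classical interpolation bound $\|f''\|_\infty w_k^2/8$ (which requires $f\in C^2$ on the whole box) nor counted among the at most $\cO(\epsilon^{-\alpha})$ bad boxes, so as written either some boxes are unaccounted for or the term $\epsilon^{-\alpha}\delta^p$ is unjustified. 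The repair is exactly the paper's Lemma~\ref{lem:approxLp_affine}: the linear-interpolation error bound still holds, with constant $(3M/2)^p$ in place of $8^{-p}$, for functions that are $C^1$ and piecewise-$C^2$ on the box with $|f''|\leq M$ wherever defined. Redefining ``good'' as ``containing no $C^1$-singularity'' and using that lemma, the rest of your argument goes through unchanged and yields the stated rates.
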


Theorem~\ref{thm:c2upper_bound} shows that, for piecewise-$C^2$ functions with $\alpha <1$, $\GreedyBox$ achieves $\epsilon$-accuracy in $o(\epsilon^{-1})$ function evaluations which improves the worst-case guarantee of Theorem~\ref{thm:GreedyBoxUB}. In particular, when the number of singularities is finite, then $\alpha \to 0$ when $\epsilon \to 0$, and the $L^1$-error is asymptotically of order $\cO(\epsilon^{-3/4})$. Note that this result contrasts with what happens for piecewise-$C^2$ functions without the non-decreasing assumption considered by~\cite{plaskota2008power}, who showed that as soon as there are strictly more than one discontinuity, any algorithm has a worst-case $L^p$-error of order $\Omega(\epsilon^{-p})$.

\paragraph{Minimax lower bound for approximating non-decreasing piecewise-smooth functions.}
Interestingly, we now show that this result is optimal (up to logarithmic factors) among deterministic algorithms in the regime $\alpha \geq 1/2$ (Proposition~\ref{prop:counterexample_t_new}), that is when the number of $C^1$-singularities is at least of order $\Omega(\epsilon^{-1/2})$. In the other regime, which corresponds to more regular functions, we also show that our upper bound on GreedyBox cannot be improved (Proposition~\ref{prop:counterexample_t}). 

\medskip
\begin{prop}
  \label{prop:counterexample_t_new}
    Let $p\geq 1$,  $\epsilon \in (0,1)$ and $\alpha >0$. Then, for any deterministic adaptive algorithm $\cA$ and for any 
    $$
        t < (2\epsilon)^{-1 + \big(\frac{1-\alpha}{1+p}\big)_+}-1\,,
    $$
    there exists a non-decreasing piecewise-affine function $f:[0,1]\to[0,1]$ with at most $\max\{2,\lceil \epsilon^{-\alpha}\rceil\}$ discontinuities, such that $\|f - \hat f_t\|_p > \epsilon$.
\end{prop}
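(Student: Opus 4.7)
The plan is to prove this lower bound by an adversarial ``hiding'' construction. Let $n := \max\{2, \lceil \epsilon^{-\alpha}\rceil\}$. I will exhibit a family $\mathcal{F}$ of non-decreasing piecewise-affine functions $[0,1]\to[0,1]$ with at most $n$ discontinuities, such that any deterministic adaptive algorithm making $t < T := (2\epsilon)^{-(p+\alpha)/(p+1)}-1$ queries must fail on some $f \in \mathcal{F}$. Note that $-1 + ((1-\alpha)/(p+1))_+$ equals $-(p+\alpha)/(p+1)$ for $\alpha\leq 1$ and $-1$ for $\alpha \geq 1$.

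\emph{Construction.} Fix an integer $m$ of order $\epsilon^{-(p+\alpha)/(p+1)}$ (tuned below), and partition $[0,1]$ into equal cells $I_k = [(k-1)/m,k/m]$. With $y_k := k/m$, define in each cell two shape options sharing the endpoints $(y_{k-1},y_k)$: the linear ramp $A_k(x) = y_{k-1} + (x-(k-1)/m)$ (continuous, zero discontinuities), and the piecewise-constant $B_k$ equal to $y_{k-1}$ on the left half of $I_k$ and to $y_k$ on the right half (a single discontinuity, at the midpoint). For any $S\subseteq\{1,\dots,m\}$ with $|S|\leq n$, let $f_S$ equal $B_k$ on $I_k$ for $k\in S$ and $A_k$ otherwise; each $f_S$ is non-decreasing and piecewise-affine with exactly $|S|\leq n$ discontinuities. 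Set $\mathcal{F} := \{f_S : |S|\leq n\}$. A direct computation gives $\int_{I_k}|A_k-B_k|^p\,dx = 1/(2^p(p+1)m^{p+1})$, hence $\|f_S - f_\emptyset\|_p^p = |S|/(2^p(p+1)m^{p+1})$.

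\emph{Adversarial step and choice of $m$.} Run the algorithm on $f_\emptyset$, and let $K\subseteq\{1,\dots,m\}$ denote the set of cells containing the $t$ queries, so $|K|\leq t$. For any $S\subseteq \{1,\dots,m\}\setminus K$ with $|S|\leq n$, the functions $f_S$ and $f_\emptyset$ agree at every queried point, so the deterministic algorithm returns the same $\hat f_t$ on both inputs. The triangle inequality $\|\hat f_t - f_\emptyset\|_p + \|\hat f_t - f_S\|_p \geq \|f_\emptyset-f_S\|_p$ then forces at least one of the two errors to exceed $\epsilon$ whenever $\|f_\emptyset - f_S\|_p > 2\epsilon$. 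Choose $|S| = \min(n, m-|K|)$. For $\alpha\geq 1$ the discontinuity budget is non-binding: take $|S| = m - |K|$, optimize $m\sim 1/\epsilon$, and recover the standard $(2\epsilon)^{-1}-1$ bound. For $\alpha<1$, one has $(p+\alpha)/(p+1) > \alpha$, so $m \sim \epsilon^{-(p+\alpha)/(p+1)} \gg n$; taking $|S|=n$ makes the error condition $n/(2^p(p+1)m^{p+1}) > (2\epsilon)^p$ equivalent to $m < \bigl(n/(4^p(p+1)\epsilon^p)\bigr)^{1/(p+1)} = \Theta\bigl(\epsilon^{-(p+\alpha)/(p+1)}\bigr)$, while the feasibility condition $m-|K|\geq n$ translates to $t < m-n \sim m \sim T$, yielding the claimed threshold.

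\emph{Main obstacle.} The central subtlety is the joint tuning of the cell count $m$ and the discontinuity budget $n$: in the regime $\alpha<1$, the argument works only because $m$ can be taken strictly larger than $n$, the extra $m-n$ cells acting as decoys that the algorithm cannot distinguish from unqueried $B_k$-cells without probing them. The adversary effectively hides the $n$ perturbed cells among the $m-n$ unperturbed ones. Matching the precise multiplicative constants in $(2\epsilon)^{-(p+\alpha)/(p+1)}-1$ may require minor calibration of the per-cell perturbation $B_k$ (for instance, optimizing the fractional position of the jump inside $I_k$ to maximize $\int_{I_k}|A_k-B_k|^p\,dx$), but the structure of the argument is unchanged.
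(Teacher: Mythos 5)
Your overall skeleton is the same as the paper's (run the deterministic algorithm on a reference function, perturb it on intervals the algorithm has not probed, and use indistinguishability plus the triangle inequality), but your concrete construction does not reach the threshold stated in the proposition, and the gap is not closable by the ``minor calibration'' you invoke. Two losses are structural to the fixed uniform grid. First, comparing the ramp $A_k$ to a single step $B_k$ gives per-cell mass $\int_{I_k}|A_k-B_k|^p\,dx=\tfrac{\theta^{p+1}+(1-\theta)^{p+1}}{(p+1)}m^{-(p+1)}\leq \tfrac{1}{p+1}m^{-(p+1)}$ no matter where the jump sits; to get the full $m^{-(p+1)}$ you must compare a \emph{lower} step envelope to an \emph{upper} step envelope (two perturbed functions, not base versus perturbation), which is exactly what the paper does with $g_-$ and $g_+$. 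Second, and more importantly, with an equispaced grid every queried cell is wasted and every surviving cell has width exactly $1/m$, so feasibility forces $m\geq t+|S|$ while separation forces $m^{p+1}\lesssim |S|(2\epsilon)^{-p}$; the resulting threshold carries an additive loss of $n\approx\epsilon^{-\alpha}$ (instead of $1$) and multiplicative constants strictly below $1$. Concretely, your $\alpha\geq 1$ claim fails as stated: with $m\sim 1/\epsilon$ and $t$ near $(2\epsilon)^{-1}-1$, even the calibrated separation is $(m-t)m^{-(p+1)}\approx \epsilon^p/2<(2\epsilon)^p$, and optimizing over $m$ only yields $t\lesssim \tfrac{p}{(p+1)^{(p+1)/p}}(2\epsilon)^{-1}$ (e.g.\ a factor $4$ short for $p=1$). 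Similarly, for $\alpha<1$ and moderate $\epsilon$ (say $p=1$, $\alpha=0.9$, $\epsilon=0.1$, $t=3$) no choice of $m$ makes your family $2\epsilon$-separated, whereas the proposition asserts a hard instance exists there. So as written you prove the correct \emph{rate} $\Omega\big(\epsilon^{-1+(\frac{1-\alpha}{1+p})_+}\big)$, but not the stated statement.

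The paper's proof avoids both losses by letting the perturbation intervals be the gaps between the algorithm's own query points on the identity function: it takes the $K_-=\min\{K,t+1\}$ \emph{largest} gaps $w_i$, replaces $f$ there by its lower (resp.\ upper) constant envelope to get $g_-,g_+$ with $\|g_--g_+\|_p^p=\sum_{i\in\mathcal{J}}w_i^{p+1}\geq K_-(t+1)^{-(p+1)}$ by Jensen, and then the triangle inequality gives exactly $t<\min\{(2\epsilon)^{-1},(2\epsilon)^{-\frac{p+\alpha}{p+1}}\}-1$. In other words, the missing idea is to let the adversary's cells be query-gap-adapted (so no cell is ever ``hit'' and their average width is at least $1/(t+1)$) and to separate the two extreme envelopes rather than the base function from one perturbation. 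If you only want the rate up to constants, your argument is fine; to prove the proposition as stated, you should restructure the construction along these lines rather than tune the jump position inside a pre-fixed uniform cell.
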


Remarkably, the above lower bound demonstrates a clear difference between regular functions and piecewise-regular functions, even when the number of pieces is finite. Specifically, when considering the case of $p=1$, the above lower bound shows that, for piecewise-$C^\infty$ function with a constant number of discontinuities ($\alpha = 0$), surpassing the bound of $\Omega(\epsilon^{-1/2})$ is not achievable. This demonstrates the influence of singularities, as $C^k$ functions with no singularities can be approximated at a rate $\epsilon^{-1/k}$. It is also worth pointing out that the above lower bound maybe easily extended to non-adaptive algorithms (considering Heaviside step adversarial functions) which would require $\Omega(\epsilon^{-1})$ function evaluations. This underscores, in a new scenario, the need of adaptive algorithms to approximate functions with singularities \cite{plaskota2008power}.

\paragraph{Negative result for GreedyBox.}
The previous result shows that GreedyBox is (up to logs) optimal for highly non-regular functions ($\alpha \geq 1/2$). 
We now consider the other regime ($\alpha < 1/2$) and prove an almost (up to logs) matching lower bound in the case $p=1$, $\alpha = 0$. This, shows that the rate $\epsilon^{-3/4}$ cannot be improved for GreedyBox for such classes of functions.

\medskip
\begin{prop}
  \label{prop:counterexample_t}
  Let $\epsilon \in (0,1/12)$. Then, there exists a piecewise-$C^2$ function $f_\epsilon:[0,1]\to[0,1]$ with $|f''(x)| \leq 1$  whenever it is defined and one $C^1$-singularity, such that there exists $t \geq 2^{-7} \epsilon^{-3/4}$ with
  $
    \big\|\hat f_t - f_\epsilon \big\|_1 > \epsilon
  $,
  where $\hat f_t$ is the $\GreedyBox$ approximation after $t$ rounds.
\end{prop}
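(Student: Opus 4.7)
The plan is to build an explicit adversarial function $f_\epsilon$ together with a time $t\geq 2^{-7}\epsilon^{-3/4}$ on which $\GreedyBox$ still has $L^1$ error strictly above $\epsilon$, thereby showing that the $\tilde{\cO}(\epsilon^{-3/4})$ upper bound of Theorem~\ref{thm:c2upper_bound} (for $p=1$ and $\alpha=0$) is not an artifact of the proof technique but is genuinely reached. The core idea is to exploit the mismatch between the greedy criterion used by $\GreedyBox$ (generalized box area $\sim|f'|w^2$) and the true $L^1$ trapezoidal error (of order $|f''|w^3$ on smooth pieces, and of order $\ell\, r\,|\Delta f'|/2$ on the box straddling the kink, as follows from the explicit piecewise-affine error formula obtained by direct integration of the trapezoidal residual against a kink).

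Concretely, I would consider a function made of a linear piece on $[0,a]$ and a quadratic piece with $|f''|=1$ on $[a,1]$ joined at a non-dyadic point $a\in(0,1)$, with the linear slope and the length of the quadratic piece chosen as explicit functions of $\epsilon$. Because $a$ is non-dyadic, the median-of-Lebesgue rule of $\GreedyBox$ never queries exactly $a$, so the box containing $a$ is shrunk only by being split, and its contribution $\ell\, r\,|\Delta f'|/2$ to the $L^1$ error persists until that box has been split sufficiently many times.

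I would then proceed in three steps. First, I would describe the configuration of boxes after $t$ rounds using the equilibrium heuristic underlying the proof of Theorem~\ref{thm:c2upper_bound}: all active boxes have areas comparable to some $A_t$, and on each smooth piece the widths scale like $\sqrt{A_t/|f'(x)|}$. Second, I would bound the current width of the kink box from below by noting that it is split only when it is the largest, which in our regime happens on average only once every $\Theta(t)$ rounds; this yields a lower bound on the kink-box contribution to $\|\hat f_t - f_\epsilon\|_1$. Third, I would tune the parameters of $f_\epsilon$ so that this lower bound crosses the target $\epsilon$ precisely at $t\asymp \epsilon^{-3/4}$, and then carefully check the numerical constants to recover the explicit factor~$2^{-7}$.

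The main obstacle is the delicate parameter tuning: on simple examples (a purely quadratic piece attached to a linear one, or a kink placed at a dyadic point), $\GreedyBox$ already attains the near-optimal $\epsilon^{-1/2}$ rate, and it is only for a carefully chosen combination of slopes and singularity position that the greedy dynamics are forced into the $\epsilon^{-3/4}$ regime. A secondary difficulty is that the ``all box areas are equalized'' equilibrium picture is accurate only up to constant factors; turning it into an honest lower bound requires direct control of the greedy dynamics, in particular of how many rounds may elapse between two consecutive splits of the kink box, rather than a purely equilibrium-based estimate.
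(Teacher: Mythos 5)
Your construction cannot work, and the obstruction is structural rather than a matter of tuning constants. For $p=1$ and the Lebesgue measure, the error of the trapezoidal interpolant on any single box is at most that box's generalized area (this is exactly the per-box estimate in the proof of Lemma~\ref{lem:GreedyBoxError}), so the kink box can only carry an error as large as its area. But under $\GreedyBox$ the maximum area is non-increasing (each midpoint split replaces the largest box by two boxes of at most half its area), and a simple counting argument kills your ``the kink box is split only once every $\Theta(t)$ rounds'' mechanism: every box ever split is a dyadic interval whose area, at the time of its split, was at least the current maximum; since over all depths $d$ there are at most $\sum_d \min\bigl(2^d, (A2^d)^{-1}\bigr) = \cO(A^{-1/2})$ dyadic boxes of area $\geq A$ for a monotone $f$ into $[0,1]$, the maximum area falls below $A$ after $\cO(A^{-1/2})$ rounds, i.e.\ $\max_k a_k^t = \cO(t^{-2})$. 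Hence at $t \asymp \epsilon^{-3/4}$ the kink box contributes only $\cO(\epsilon^{3/2}) \ll \epsilon$, whatever the (non-dyadic) position of the kink and the slopes. Worse, for your whole two-piece function (one affine piece, one piece with $|f''|=1$, a single singularity) the region where the cubic trapezoidal error is comparable to the box area has length $\cO(A^{1/3})$ and contains $\cO(1)$ boxes, so the total error in the equalized regime is $\tilde\cO(t^{-2})$ — the near-optimal $\epsilon^{-1/2}$ rate — and no choice of parameters pushes it into the $t^{-4/3}$ regime. Since each box can carry only $\cO(t^{-2})$ error, any valid adversarial function must spread an $\epsilon$-sized error over $\Omega(\epsilon^{-1/2})$ boxes whose sampled endpoint values give $\GreedyBox$ no incentive to split them further; a single $C^1$-kink provides no such aliasing.

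This is precisely what the paper's proof does, and it also avoids your equilibrium heuristic entirely. The adversarial $f^t$ oscillates on $[0,1/2]$ through $s'/2 = 2^{2k-1}$ parabolic arcs with $|f''|=1$ (one $C^1$-singularity at $1/2$, about $\epsilon^{-1/2}$ $C^2$-singularities, which the statement permits) glued to a slope-one affine piece on $[1/2,1]$. The arcs are aligned with dyadic points so that at $t=(2^{2k}+2^{3k})/2$ one can compute the state of $\GreedyBox$ exactly: the samples on the left sit at the arc endpoints, where the data looks affine with slope $2^{-2k}$, all box areas equal $2^{-6k}$ exactly (any coarser box would have strictly larger area and would have been split first), and the budget is mostly spent on the right half where the error is zero. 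The hidden oscillation then gives $\|\hat f_t - f^t\|_1 = \frac{1}{12}2^{-4k} \approx t^{-4/3}$, and choosing $k=\lfloor\frac14\log_2\frac{1}{12\epsilon}\rfloor$ yields the claimed $t \geq 2^{-7}\epsilon^{-3/4}$ with error $>\epsilon$. If you want to salvage your write-up, you should replace the single-kink mechanism by such an aliasing construction with $\Theta(\epsilon^{-1/2})$ oscillations and an exact (not equilibrium-based) description of the greedy dynamics at the critical time.
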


\begin{figure}[!t]
  \begin{subfigure}{0.5\textwidth}
    \includegraphics[height = 3.3cm]{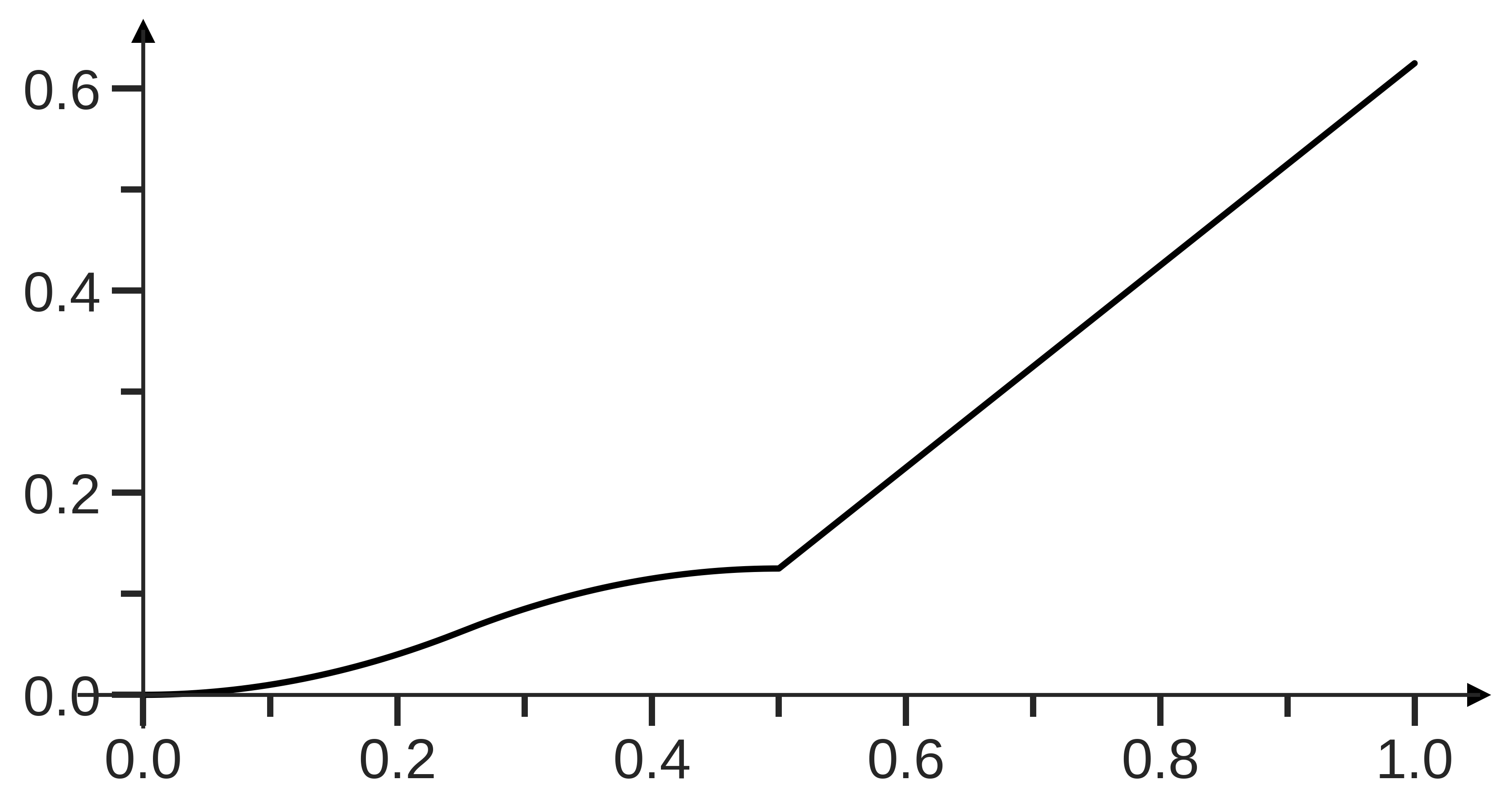}
    \caption{Worst-case function $f_\epsilon$.}
    \label{fig:f6a}
  \end{subfigure}
  \begin{subfigure}{0.5\textwidth}
     \includegraphics[height=3.3cm]{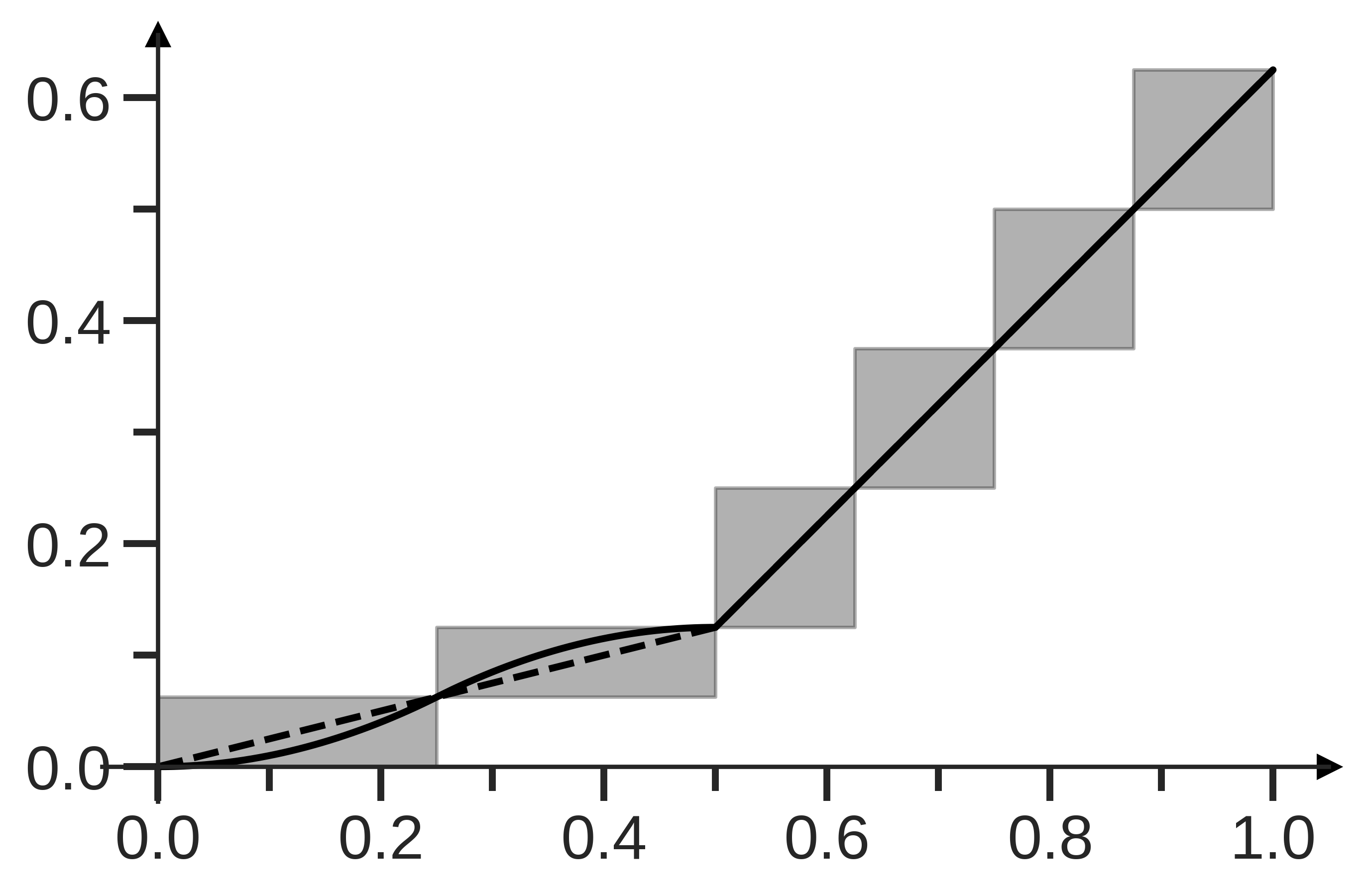}
    \caption{Box-cover produced by GreedyBox.}
    \label{fig:behavior_f6}
  \end{subfigure}\\
  \caption{Plot of the worst-case function $f_\epsilon$ for $\epsilon = 0.05$ and box-cover produced by GreedyBox after $t=6$ iterations.}
  \label{fig:f6}
\end{figure}

An example of the worst-case function $f_\epsilon$ built in the proof of Proposition~\ref{prop:counterexample_t} for $\epsilon = 0.005$, that exhibits poor performance for GreedyBox at $t = 6$, is depicted in Figure~\ref{fig:f6}. The function is formally defined in the proof and consists of two parts: one that oscillates with $|f''(t)|=1$ for $x \leq 1/2$, and the other that is linear for $x > 1/2$. The function is constructed in such a way that at a certain $t$ ($t=6$ in Fig.~\ref{fig:f6}), GreedyBox selects its points precisely between the oscillations, and focuses too much on the linear part, resulting in a maximum possible $L^1$ error.
It is worth noting that for each value of $\epsilon$, it is possible to construct an adversarial function $f_\epsilon$. However, an interesting question arises: can a single function $f$ be devised to work uniformly for all values of $\epsilon$? We believe that this question is challenging and connected to the 10\textsuperscript{th} open problem raised in \cite[Chapter 10]{brass2011quadrature}.

\paragraph{GreedyWidthBox: an optimal modification of GreedyBox.}
GreedyBox can actually be adapted to achieve the optimal rate of $\smash{\tilde \cO\big(\epsilon^{-1 + (\frac{1-\alpha}{1+p})_+}\big)}$ simultaneously for all $\alpha \geq 0$, while maintaining our adaptive guarantee in terms of $\cN_p(f,\epsilon)$. This can be accomplished by employing the GreedyBox approach for half of the iterations and the trapezoidal rule for the remaining half (see Algorithm~\ref{alg:greedywidthbox}). The proof, left to the reader, follows closely the one of Theorem~\ref{thm:c2upper_bound}, in which the upper bound~\eqref{eq:integral} can be simplified by utilizing the fact that, after conducting $t$ function evaluations, the trapezoidal rule ensures that all widths $w_i$ are at most of the order $t^{-1}$. In particular, for $p=1$ and $\alpha = 0$, this provides an algorithm that achieves $\epsilon$-acccuracy in $L^1$ norm in $\tilde \cO(\epsilon^{-1/2})$ function evaluations for non-decreasing piecewise-$C^2$ functions, as soon as the number of $C^1$-singularities remains constant as $\epsilon \to 0$.

\begin{algorithm}[!t]
  {\bfseries Input:} $\epsilon >0$, $p\geq 1$\\
  {\bfseries Init:} Set $t=1$, $x_0=b_0^1=0$ and $x_1=b_1^1=1$, evaluate $f(0)$ and $f(1)$, and define $\xi_1 = (a_1^1)^p = (f(1)-f(0))^p$\;
  \While{$\xi_t > \epsilon^p$}{
    \begin{enumerate}[topsep=0pt,parsep=0pt,itemsep=0pt,leftmargin=7pt]
    \item\eIf{$t$ is even}{
        Select the box with the largest width: find $k_*^t$ that maximizes $(b_k^t-b_{k-1}^t)$.}{Select the box with the largest area: find $k_*^t$ that maximizes $a_k^t$.}
    \item Evaluate $f$ at the midpoint $x_{t+1} := \bigl(b_{k_*^t-1}^t+b_{k_*^t}^t\bigr)/2$.
    \item Sort the points $x_0,x_1,\ldots,x_{t+1}$ in increasing order: \\
    \hspace*{1.5cm}$b_0^{t+1} = 0 \leq b_1^{t+1}< \dots < b_{t+1}^{t+1} = 1$.
    \item Define the generalized areas for all $k \in \{1,\dots,t+1\}$ by \\
    \hspace*{1.5cm} $a_k^{t+1} = (b_k^{t+1}-b_{k-1}^{t+1})^{1/p} (f(b_k^{t+1})-f(b_{k-1}^{t+1}))$.
    \item Update the certificate \vspace*{-5pt}
    \[
        \xi_{t+1} = \sum_{k=1}^{t+1} (a_k^{t+1})^p \,.
    \]
    \item Let $t \leftarrow t+1$.
    \end{enumerate}}
   Set ${\tau_\epsilon} = t$ and approximate $f$ with the piecewise-affine function  $\hat f_{\tau_\epsilon}$ defined by: \vspace*{-5pt}
      \[
        x \in [0,1] \qquad \hat f_{\tau_\epsilon}(x) = \frac{f(b_k^{\tau_\epsilon}) - f(b_{k-1}^{\tau_\epsilon})}{b_k^{\tau_\epsilon} - b_{k-1}^{\tau_\epsilon}} (x - b_{k-1}^{\tau_\epsilon}) + f(b_{k-1}^{\tau_\epsilon}) \,,
      \]
for $k \in \{1,\dots,{\tau_\epsilon}\}$ such that $b_{k-1}^{\tau_\epsilon} \leq x \leq b_{k}^{\tau_\epsilon}$\;
{\bfseries Output:} $(\tau_\epsilon, (x_t)_{1\leq t\leq \tau_\epsilon}, \hat f_{\tau_\epsilon})$ \,.
  \caption{$\GreedyWB$}
  \label{alg:greedywidthbox}
\end{algorithm}

\section{Numerical Experiments}
\label{sec:experiments}

In this section, we study empirically the performance of $\GreedyBox$ as compared to the trapezoidal rule.
All the experiments are run using $p=1$ and the Lebesgue measure for $\mu$. 
Figure~\ref{fig:comparison} displays the output given by both $\GreedyBox$ and the trapezoidal rule on the function $f$ defined by $f(x) =\smash{\frac{1}{2} x^{\nicefrac{3}{10}}}$ if $x \leq \frac{2}{3}$ and $f(x) = x$ otherwise.
One can see that on this example the $L^1$ error of $\GreedyBox$ is twice as small as that of the trapezoidal rule.
In general, $\GreedyBox$ copes far better with discontinuities than the trapezoidal rule.

\begin{figure}[ht]
  \begin{subfigure}{0.5\textwidth}
    \includegraphics[width = \textwidth]{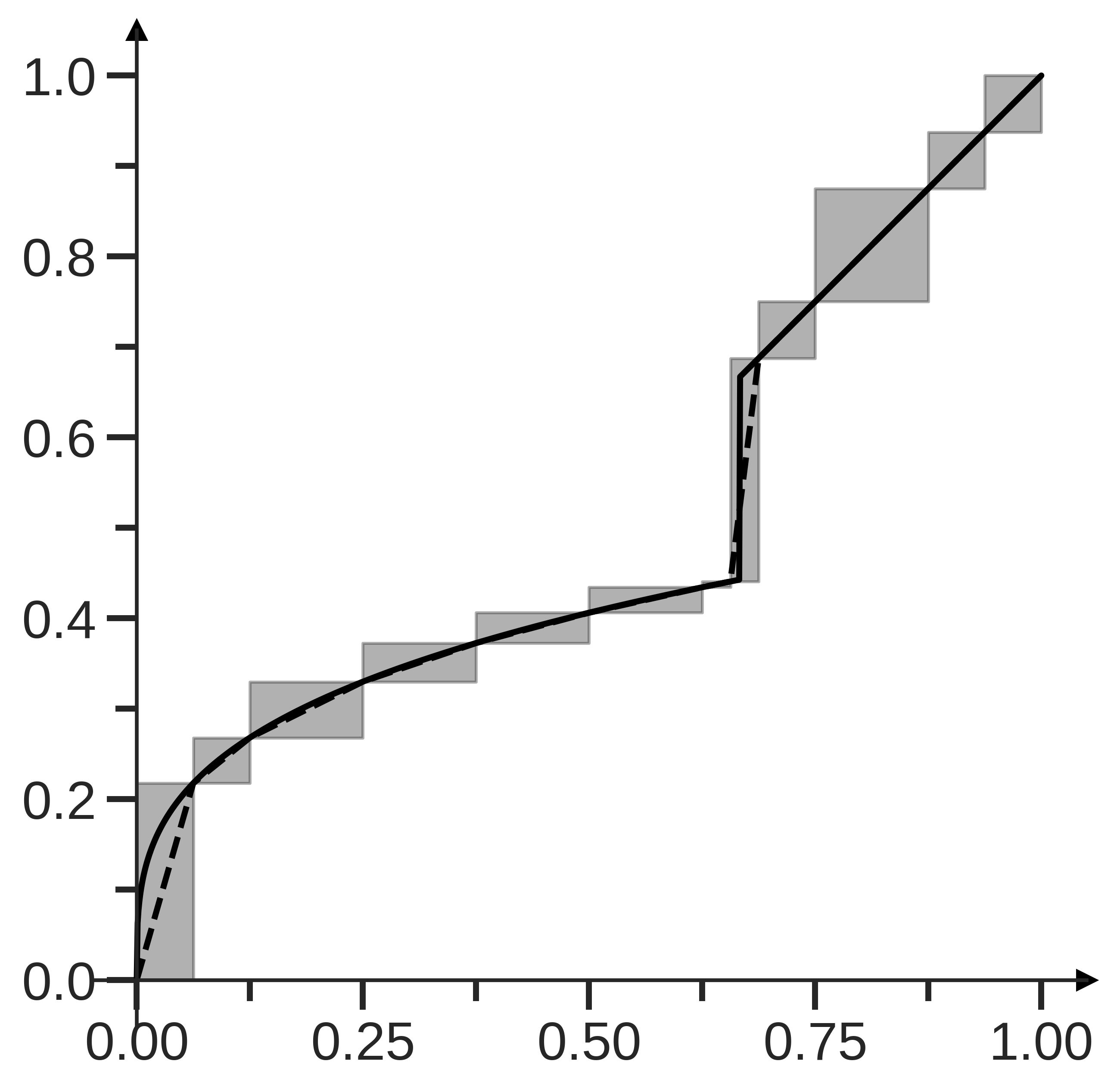}
    \caption{Error of $\GreedyBox$ after 12 iterations: 0.005}
  \end{subfigure}
  \begin{subfigure}{0.5\textwidth}
    \includegraphics[width = \textwidth]{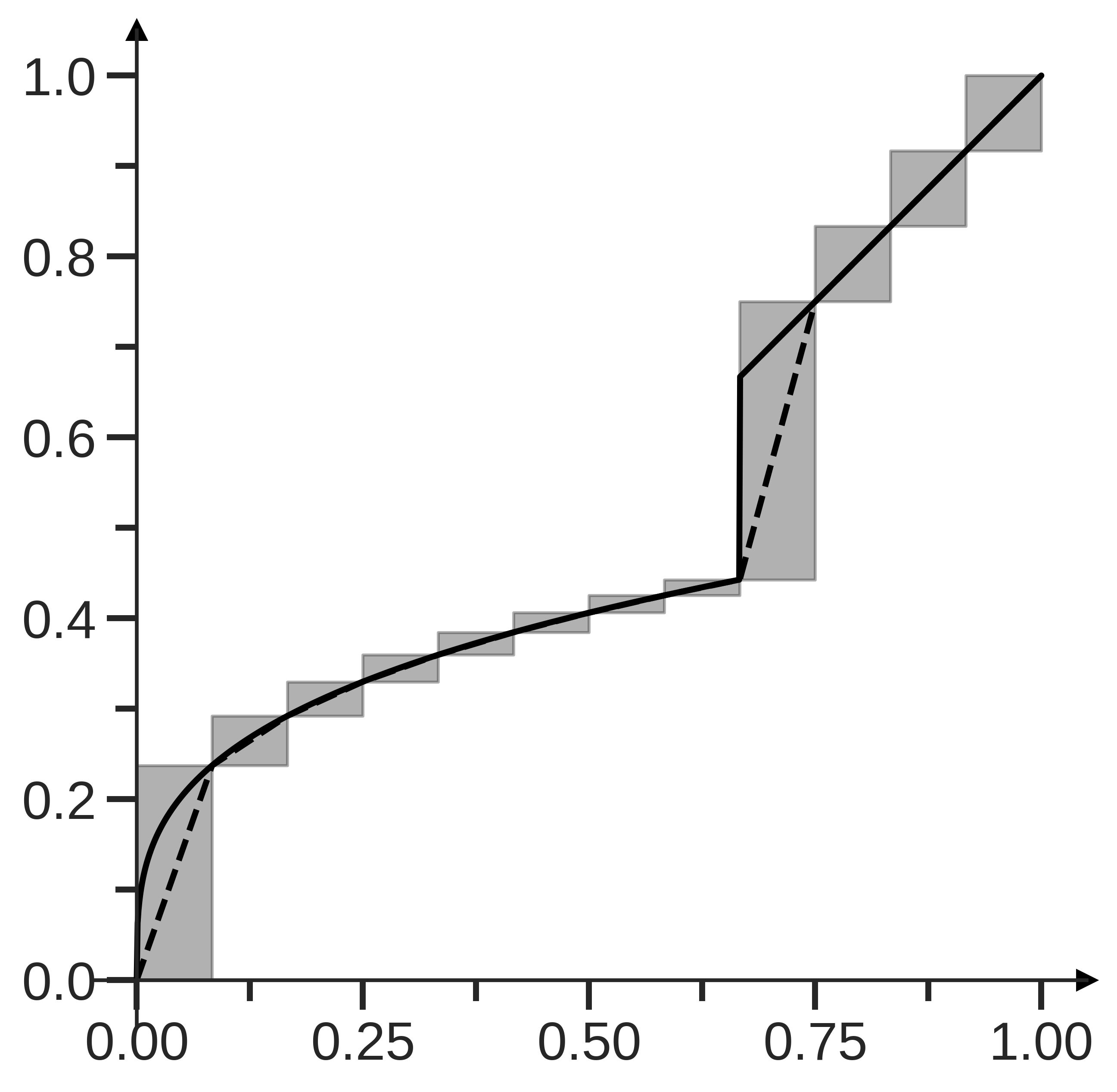}
    \caption{Error of the trapezoidal rule after 12 iterations: 0.015}
  \end{subfigure}
  \caption{Comparison of $\GreedyBox$ and the trapezoidal rule on a piecewise-$C^2$ function after 12 iterations}
  \label{fig:comparison}
\end{figure}

Remark that the trapezoidal rule is usually an offline algorithm that needs the total number $t$ of iterations from the beginning.
Fortunately, it can easily be adapted to an online version built on the same model as $\GreedyBox$.
Instead of choosing the box with the largest area on Step~\ref{line:GB_selection} of $\GreedyBox$, it picks the box with the largest width.
This online version matches exactly the offline trapezoidal rule whenever $t$ is a power of $2$ and allows for a better comparison with $\GreedyBox$.
It is this online version that we use in the next experiments.

The trapezoidal rule is known to have an $L^1$ error that decreases linearly with the number $t$ of iterations for any non-decreasing functions.
This is the same speed of convergence that we proved for $\GreedyBox$ in Theorem~\ref{thm:GreedyBoxUB}.
However, for $C^2$ functions, the $L^1$ error of the trapezoidal rule decreases quadratically with the number of iterations, which corresponds to a sample complexity $\epsilon^{-1/2}$.
This is better than the upper bound in $\epsilon^{-3/4}$ proven in Theorem~\ref{thm:c2upper_bound} for $\GreedyBox$. 
Remember however than no lower bound of order $\epsilon^{-3/4}$ was proved so far for $\GreedyBox$ on $C^2$ functions with no singularities.
Also note that the trapezoidal rule achieves a rate of $\epsilon^{-1/2}$ only for $C^2$ functions, but can have an error of order $\epsilon^{-1}$ as soon as the function is discontinuous (see Figure~\ref{fig:nearly1_rate}).

\begin{figure}[!t]
  \subcaptionbox{Error rate on $f\colon x \mapsto x^2$\label{fig:square_rate}}{\includegraphics[width=.5\linewidth]{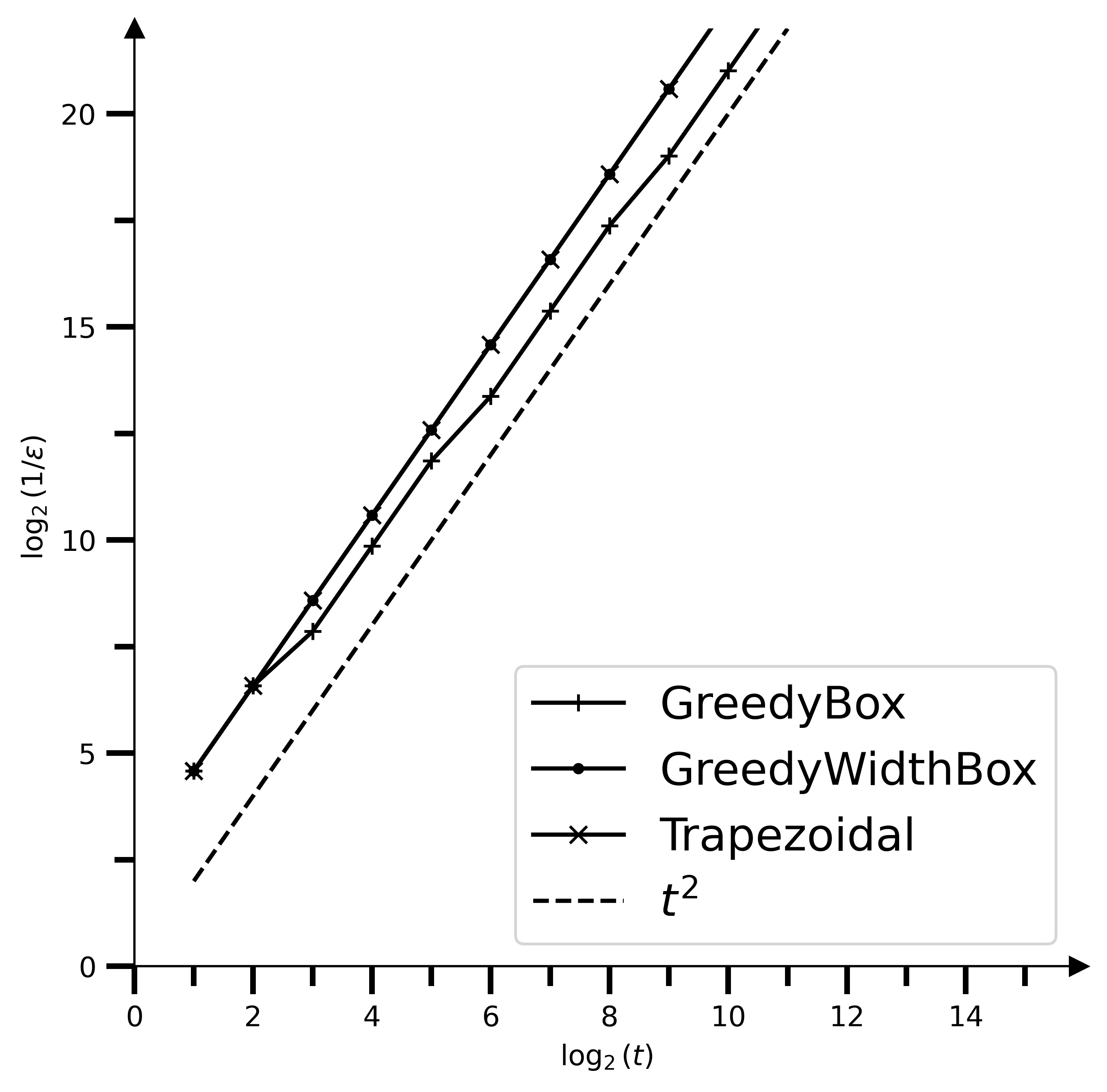}}\hfill
  \subcaptionbox{Error rate on $f\colon x \mapsto x^{1/10}$\label{fig:power_rate}}{\includegraphics[width=.5\linewidth]{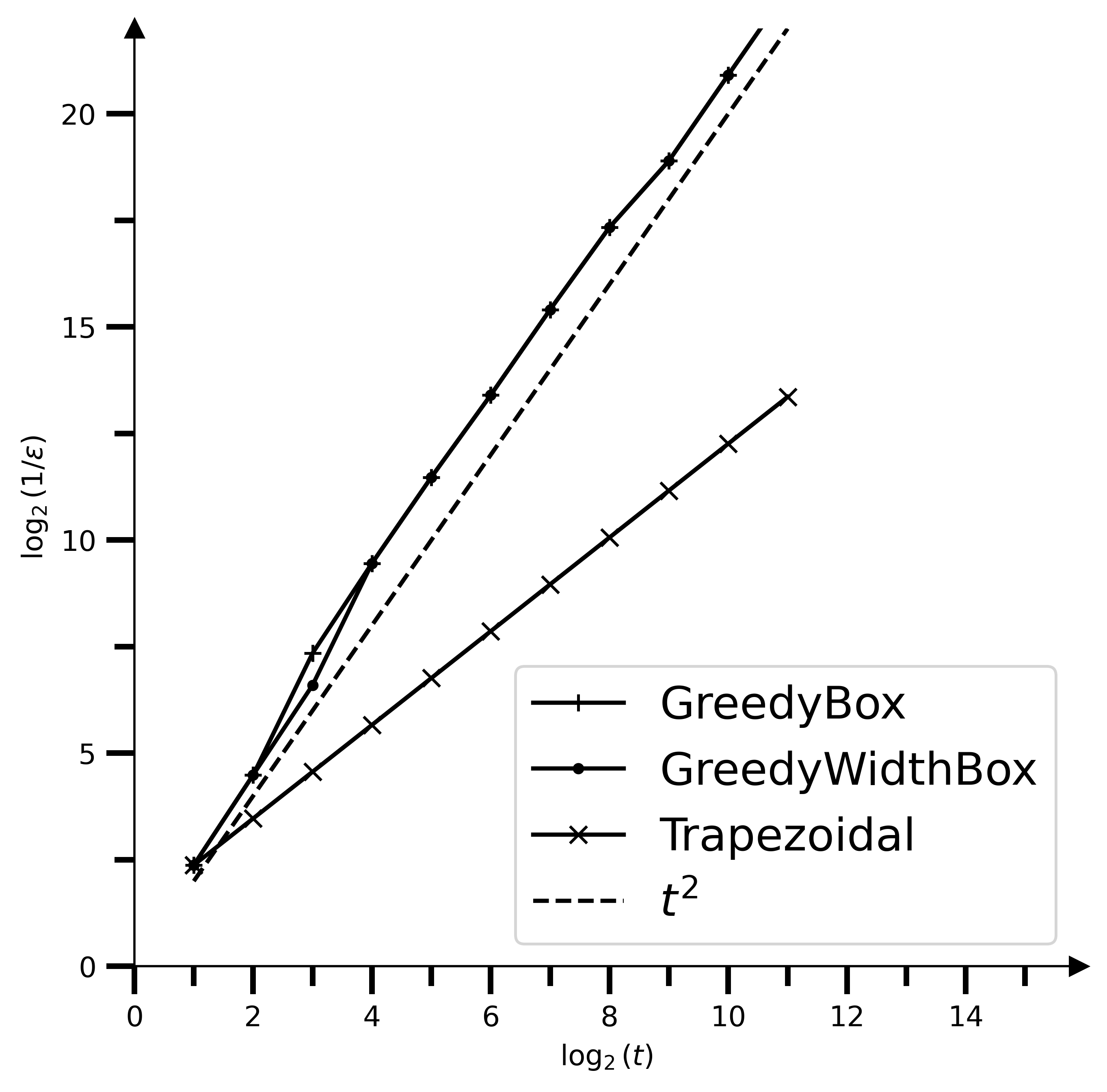}}\hfill\\
  \subcaptionbox{Error rate on $f(x)=\mathds{1}_{\{x\geq 0.3\}}$\label{fig:nearly1_rate}}{\includegraphics[width=.5\linewidth]{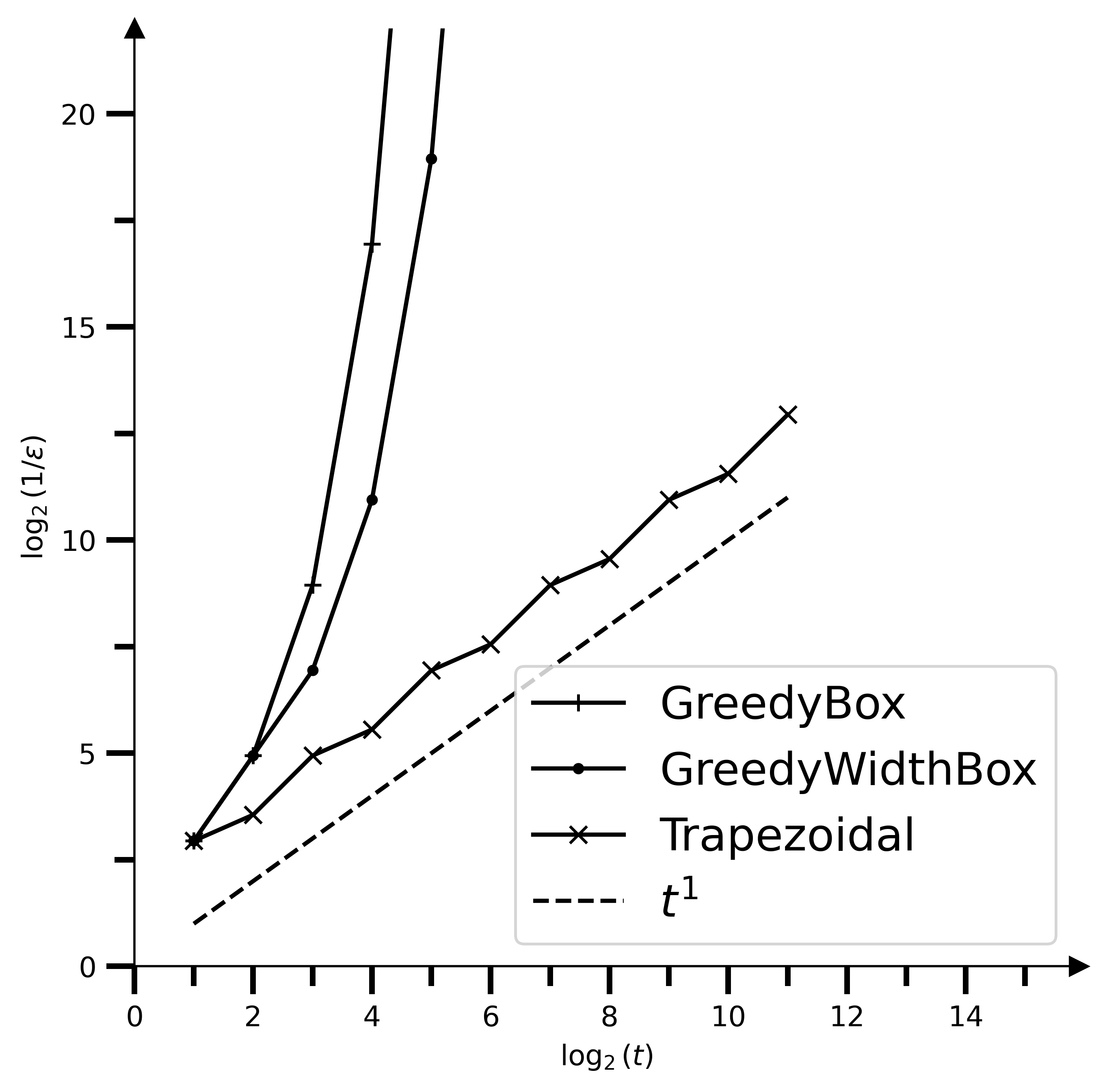}}\hfill
  \subcaptionbox{Error rate on $g^t$ \label{fig:pathological_rate}}{\includegraphics[width=.5\linewidth]{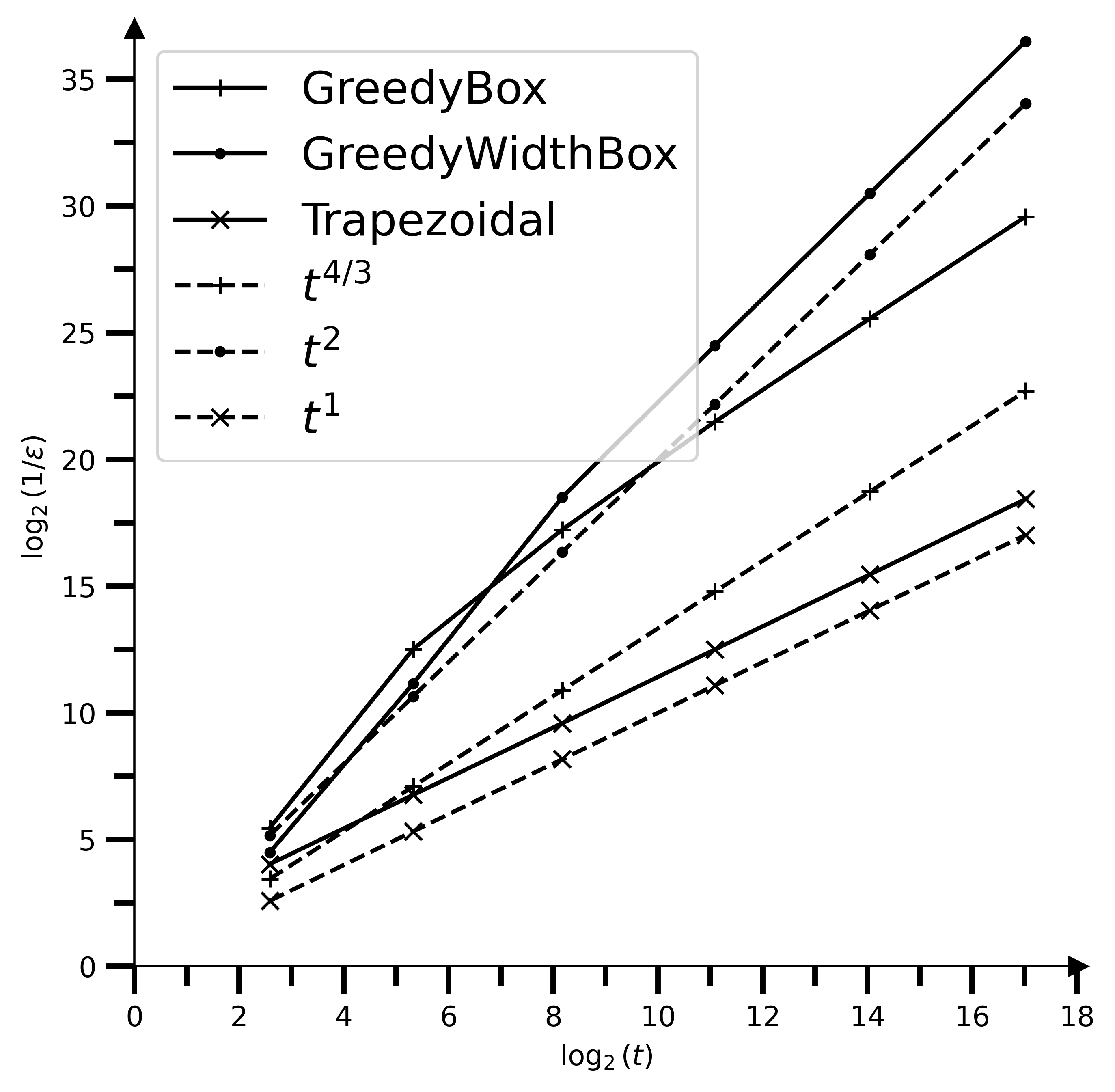}}\hfill
  \caption{Comparison of $\GreedyBox$ and $\GreedyWB$ with the trapezoidal rule.
  Logarithmic scale of the inverse of the error w.r.t. the number of evaluations.}
  \label{fig:plot_rates}
\end{figure}

In order to have best-of-both-worlds theoretical results, we introduced a new algorithm called $\GreedyWB$ (Algorithm~\ref{alg:greedywidthbox}), which achieves the asymptotic rate $\cO(\epsilon^{-1/2})$ for piecewise-$C^2$ functions, as soon as the number of pieces is finite.
In Figure~\ref{fig:plot_rates}, we run the three algorithms for a fixed number $t$ of epochs, and observe the $L^1$ distance between the approximated function $\hat{f}_t$ returned by the algorithm and the true function. Figures~\ref{fig:square_rate}-\ref{fig:nearly1_rate} consider three different functions $f$ with various regularities.   In Figure~\ref{fig:pathological_rate}, we examine time-dependent piecewise-$C^2$ functions $g^t:x \mapsto \frac{1}{2} f^t(2x) \indic_{{x\leq 1/2}} + \indic_{{x >1/2}}$. Here, $f^t$ corresponds to the worst-case function as defined in~\eqref{eq:bad_function} in the proof of Proposition~\ref{prop:counterexample_t}, with an additional introduced discontinuity.
For a better understanding of the results, we plot the inverse of this error, and plot everything with logarithmic scale.
This means that a straight line with slope 1 represents a linear speed of convergence: an error that decreases inversely proportionally with the number $t$ of epochs. Figure~\ref{fig:pathological_rate} precisely confirms the anticipated worst-case rates as determined by the analysis for monotone piecewise-$C^2$ functions.

Remember however that given a desired precision $\epsilon$, $\GreedyBox$ does not stop when its $L^1$ error is smaller than $\epsilon$, but when its certificate (the best upper bound it can get without knowing \textit{a priori} the function) is smaller than $\epsilon$.
Thus, for computation comparison, what really matters is to plot the convergence of the certificate with regard to some target error $\epsilon$. The certificate of GreedyBox appears to be smaller than both GreedyWidthBox and the trapezoidal rule, regardless of the smoothness of the function.
Yet, for most of the existing functions, the certificate of $\GreedyBox$ is of the order of $\epsilon^{-1}$.
However, remark that the same bound apply both for the trapezoidal rule and for $\GreedyWB$. 
This behavior can be seen on Figure~\ref{fig:certificates} for GreedyBox and the trapezoidal rule.
In Figure~\ref{fig:certif_pathological_rate}, we can see that GreedyBox provides certification of $\epsilon$-accuracy approximately 30 times as fast as the trapezoidal rule. 
For $C^2$ functions, this certificate will never be a good upper bound of the real error, that often is of order $\epsilon^{-1/2}$ for all three algorithms.

\begin{figure}[!t]
  \subcaptionbox{Certificate and error on $f\colon x \mapsto x^2$\label{fig:certif_square_rate}}{\includegraphics[width=.5\linewidth]{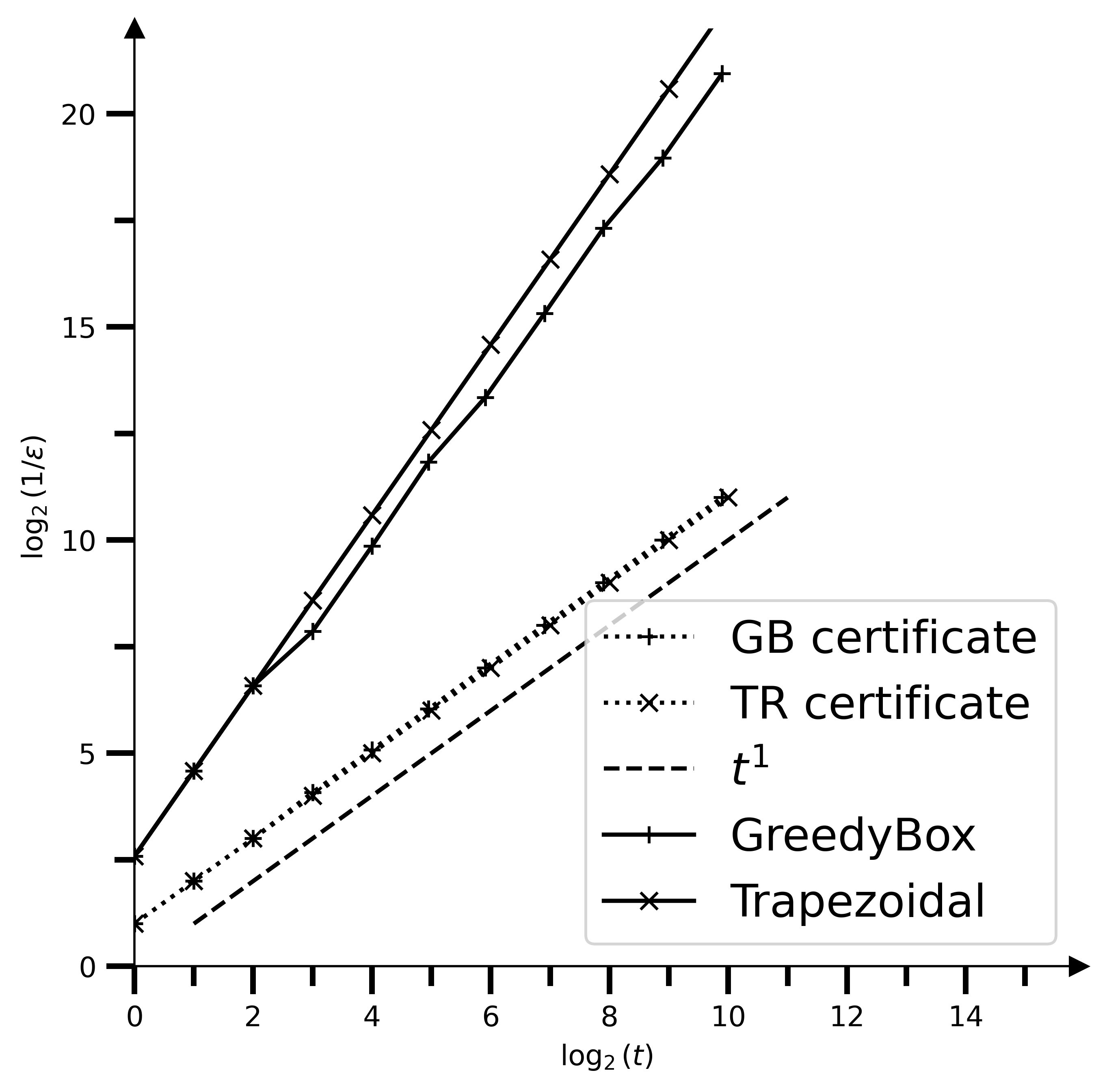}}\hfill
  \subcaptionbox{Certificate and error on $f\colon x \mapsto x^{1/10}$\label{fig:certif_power_rate}}{\includegraphics[width=.5\linewidth]{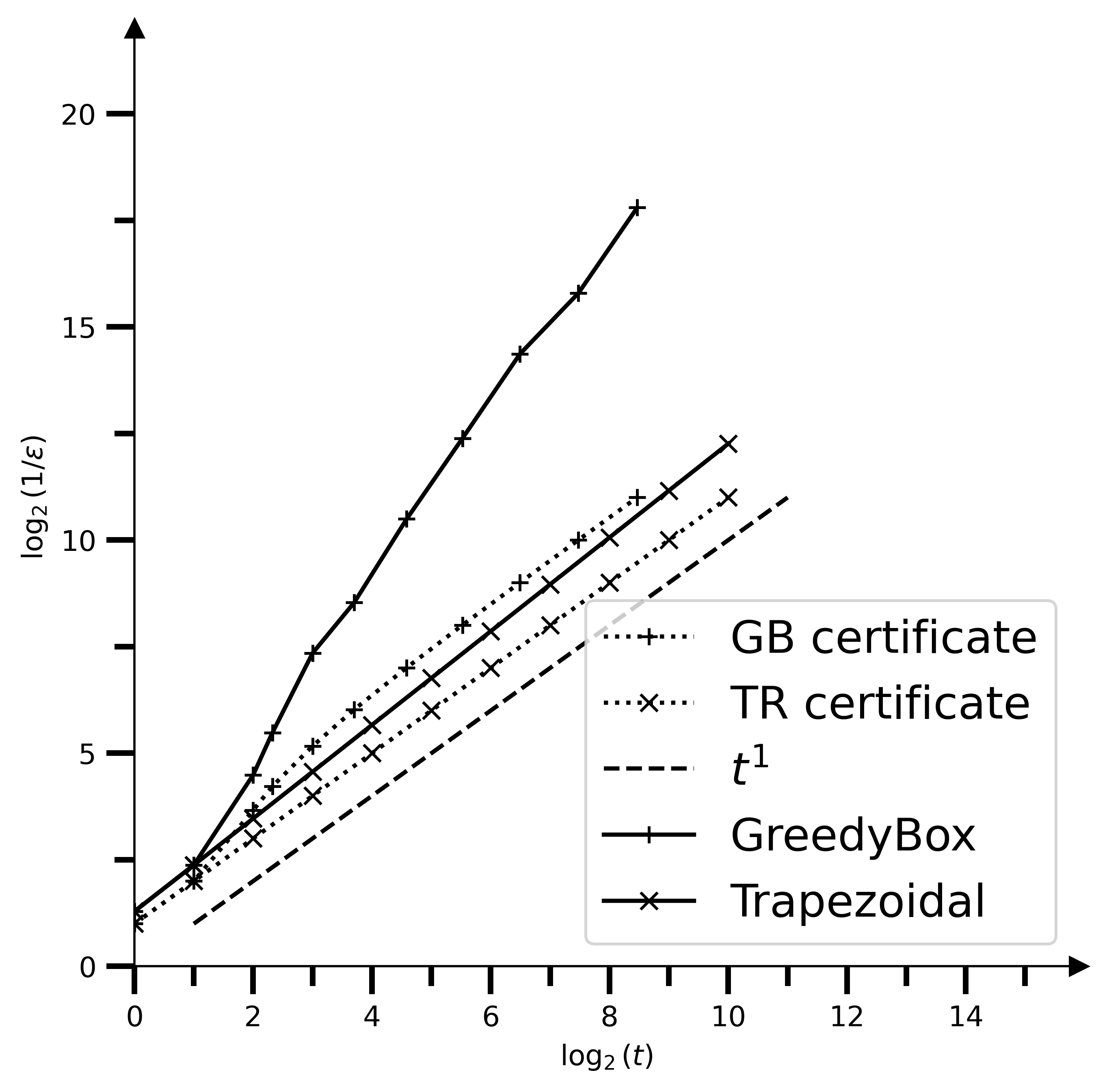}}\hfill\\
  \subcaptionbox{Certificate and error on $f(x)=\mathds{1}_{\{x\geq 0.3\}}$\label{fig:certif_nearly1_rate}}{\includegraphics[width=.5\linewidth]{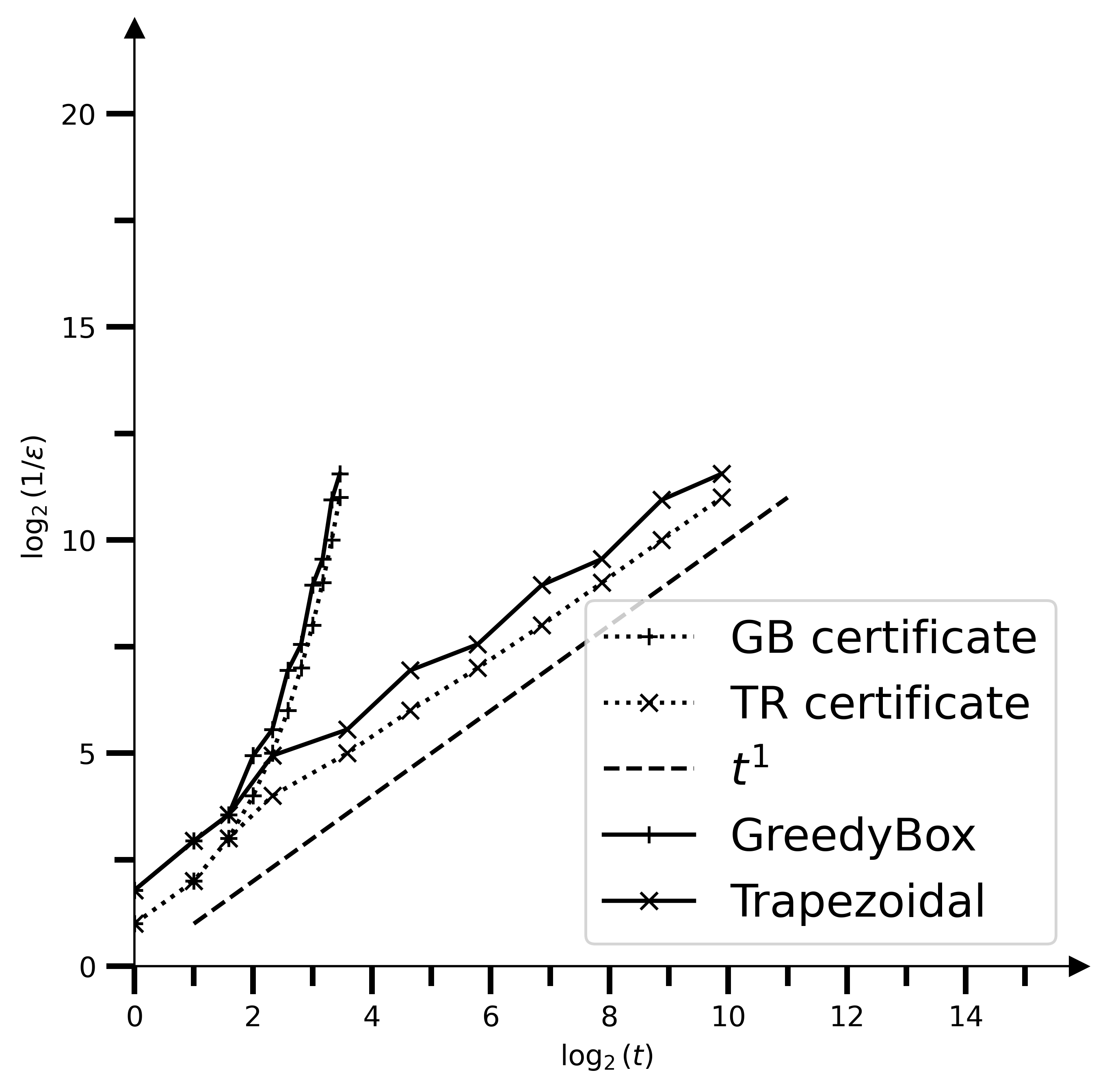}}\hfill
  \subcaptionbox{Certificate and error on $g^t$ \label{fig:certif_pathological_rate}}{\includegraphics[width=.5\linewidth]{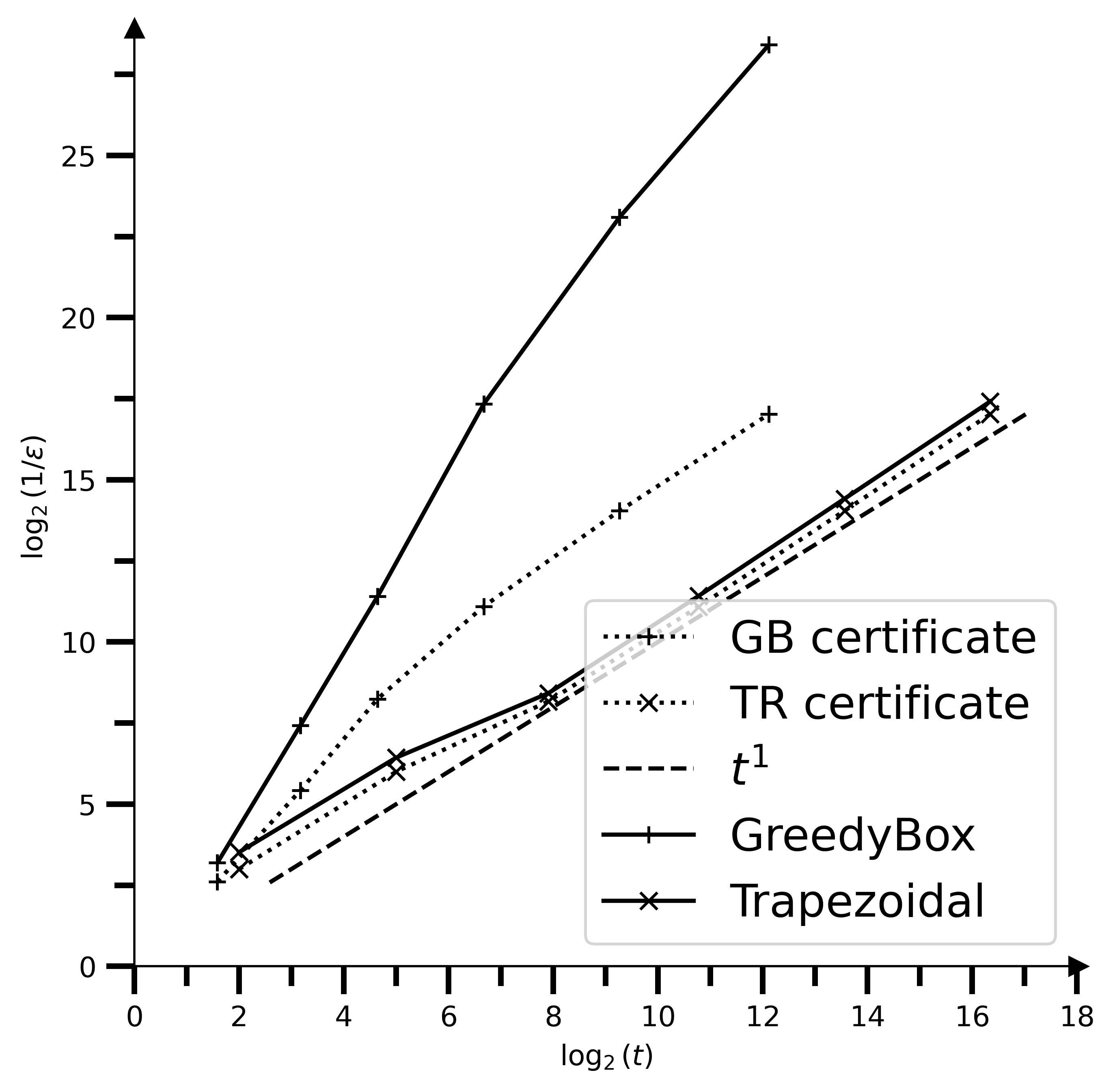}}\hfill
  \caption{Comparison of $\GreedyBox$ with the trapezoidal rule.
  Logarithmic scale of the inverse of the error w.r.t the number of evaluations.}
  \label{fig:certificates}
\end{figure}

\section{Conclusion and future works}
\label{sec:conclusion}

In this paper, we studied the problem of approximating a non-decreasing function~$f$ in $L^p(\mu)$ norm, with sequential (adaptive) access to its values. We first proved an $f$-dependent lower bound on the stopping time that holds for all algorithms with guaranteed $L^p(\mu)$ error after stopping. We then presented the $\GreedyBox$ algorithm (inspired from Novak \cite{Novak1992}) and showed that up to logarithmic factors, it is optimal among all such algorithms, for each non-decreasing function $f$. As a direct consequence, we showed for the integral estimation problem that $\GreedyBox$ can be combined with additional randomization to get an improved rate in expectation. For the $L^p$-approximation problem, we also investigated to what extent the $L^p(\mu)$ error of $\GreedyBox$ can decrease faster (than guaranteed by the algorithm) for piecewise-$C^2$ functions. Put briefly, up to logarithmic factors, $\GreedyBox$ automatically achieves the improved (and optimal) rate of $\smash{\epsilon^{-1+(\frac{1-\alpha}{1+p})_+}}$ for a large number $\epsilon^{-\alpha}$ of singularities, $\alpha \geq 1/2$, and a simple algorithmic variant ($\GreedyWB$) achieves this rate for any value of $\alpha$. In particular, our results highlight multiple performance gaps between adaptive and non-adaptive algorithms, $C^k$ and piecewise-$C^k$ functions, as well as monotone or non-monotone functions. We also provided numerical experiments to illustrate our theoretical results.

Several interesting questions about $\GreedyBox$ are left open. First, similarly to the faster rates proved for piecewise-$C^2$ functions, it would be interesting to investigate optimal rates for the average $L^p(\mu)$ error, when $f$ is drawn at random from a probability distribution over the set of monotone functions (as, e.g., in \citet{Novak1992}). Second, for any fixed non-decreasing function $f$, it would be useful to derive the limit (if any) of the empirical distribution of the points queried by $\GreedyBox$. When $p=1$ and $\mu$ is the Lebesgue measure on $[0,1]$, we conjecture that this limit exists and has a density roughly proportional to the square-root of the derivative of $f$ almost everywhere. Solving this problem would help complete our understanding of the behavior of $\GreedyBox$, and of how it precisely adapts to the complexity of any function $f$.

Several generalizations would also be worth investigating in the future. A seemingly straightforward direction is to work with Lipschitz functions on compact intervals; an algorithm defined similarly to $\GreedyBox$ but with parallelograms instead of rectangular boxes seems perfectly fit for this problem. Other natural directions consist in extending our $f$-dependent bounds to multivariate monotone functions (in the spirit of, e.g., \citet{papageorgiou1993integration}), to functions of known bounded variation, or variants of these function classes (e.g., entirely monotone functions, functions of bounded Hardy-Krause variation, see \cite{fang21-multivariateExtensionsIsotonicTotalVariation}).

Another natural and interesting research avenue is to address the case of noisy evaluations of $f$. We believe that, under some known assumptions on the noise distribution, similar sample complexity guarantees (with slower rates) can be achieved by using a mini-batch variant of $\GreedyBox$, which reduces the impact of noise by sampling multiple points and computing the average. Solving this problem would be a way to efficiently estimate cumulative distribution functions under a special censored feedback in the same spirit as in \citet{abernethy16-ThresholdBandits}. A variant of GreedyBox has been designed and analysed by~\cite{bonnet2020adaptive} for addressing imperfect observations. However, the authors have solely established asymptotic convergence garantees (both point-wise and in terms of $L^1$ or $L^\infty$ norms) when the number of function evaluations approaches infinity, and they do not offer any guarantees regarding sample complexity.

\section{Acknowledgements}
The authors would like to thank Wouter Koolen for insightful discussions at the beginning of the work, as well as Peter Bartlett for fruitful feedback about improved rates for piecewise-regular functions.
This work has benefited from the AI Interdisciplinary Institute ANITI, which is funded by the French ``Investing for the Future – PIA3'' program under the Grant agreement ANR-19-P3IA-0004. S\'ebastien Gerchinovitz and \'Etienne de Montbrun gratefully acknowledge the support of IRT Saint Exupéry and the DEEL project.\footnote{\url{https://www.deel.ai/}}

\newpage
\begin{appendices}

\section{Elementary properties of box-covers}
\label{sec:behaviorOfN}

\subsection{A general upper bound (proof of Lemma~\ref{lem:N-basic})}

\begin{customlemma}{\ref{lem:N-basic}}
For all non-decreasing functions $f:[0,1] \to [0,1]$ and $\epsilon > 0$, the quantity $\cN_p(f,\epsilon)$ is well defined and upper bounded by
\begin{equation*}
\cN_p(f,\epsilon) \leq \lceil 1/\epsilon \rceil \;.
\end{equation*}
\end{customlemma}

\begin{proof}
Let $n =  \left\lceil 1/\epsilon \right\rceil \geq 1$.
In order to prove the lemma, we exhibit a sequence $\mathcal{B}$ of at most $n$ boxes, show that it is a box-cover of $f$, and that the generalized areas of the boxes $B$ are such that $\left( \sum_B \cA_p (B)^p \right)^{1/p} \leq \epsilon$.

For $i \in \{1, \ldots, n\}$ let $x_i = \sup\{ 0 \leq x \leq 1 \colon f(x) \leq \frac{i}{n}\}$.
We also set $x_0 = 0$. Note that $x_n = 1$.
For $i \in \{1, \ldots, n\}$, we define $B_i = [x_{i-1}, x_i] \times \left[ \frac{i-1}{n}, \frac{i}{n} \right]$. Without loss of generality, we assume that $x_{i-1} < x_i$ for all $i \in \{1, \ldots, n\}$. (Otherwise, we just remove all $B_i$'s such that $x_{i-1}=x_i$, and remark below that the $B_i$'s still cover the graph of $f$ except maybe at the $x_i$'s.) 

We start by showing that $\mathcal{B} = (B_i)_{1\leq i \leq n}$ is a box-cover of  $f$. First, $B_1, \ldots, B_n$ are adjacent boxes by construction.
Note also that the graph of $f$ is included in the union of the $B_i$'s except maybe at the $x_i$'s. Indeed, for all $i \in \{1, \ldots, n\}$ and all $x \in (x_{i-1}, x_i)$, we have $\frac{i-1}{n} \leq f(x) \leq \frac{i}{n}$ by definition of $x_{i-1}$ and $x_i$, and by monotonicity of $f$. Therefore, $(x, f(x)) \in B_i$. This proves that $\mathcal{B}$ is a box-cover of $f$.

We are left to show that $\left( \sum_{i=1}^n \cA_p (B_i)^p \right)^{1/p} \leq \epsilon$. We have
\[ \sum_{i=1}^n \cA_p (B_i)^p = \sum_{i=1}^n \left( \frac{i}{n} - \frac{i-1}{n} \right)^p \mu((x_{i-1}, x_i)) = \frac{1}{n^p}\sum_{i=1}^n \mu((x_{i-1}, x_i)) \leq \frac{1}{n^p} \;.\]
This entails that $\left( \sum_{i=1}^n \cA_p (B_i)^p \right)^{1/p} \leq \frac{1}{n} \leq \epsilon$, which concludes the proof.
\end{proof}

\subsection{Technical Lemmas on box-covers}
\label{sec:elementaryproperties}

We state here two elementary lemmas about box-covers. The first one below indicates that box-covers of $f$ with desired properties exist not only for $n = \cN_p(f,\epsilon)$ (or $n = \cN'_p(f,\epsilon)$) but also for all larger values of $n$.

\medskip
\begin{lemma}
\label{lem:existence-larger-n}
Let $\epsilon > 0$ and $f:[0,1]\to [0,1]$ be any non-decreasing function. Then,
\begin{itemize}
    \item for all $n \geq \cN_p(f,\epsilon)$, there exists  a box-cover $B_1,\dots,B_n$ of $f$ such that $\bigl(\sum_{i=1}^n \cA_p(B_i)^p\bigr)^{1/p} \leq \epsilon$;
    \item for all $n \geq \cN'_p(f,\epsilon)$, there exists  a box-cover $B_1,\dots,B_n$ of $f$ such that $\cA_p(B_i) \leq \epsilon$ for all $i=1,\ldots,n$.
\end{itemize}
\end{lemma}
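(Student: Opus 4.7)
The plan is to prove both statements by a simple induction argument, showing that if a valid box-cover with $n$ boxes exists, then one with $n+1$ boxes also exists while preserving the respective constraint. The base cases $n = \cN_p(f,\epsilon)$ and $n = \cN'_p(f,\epsilon)$ hold by definition of these quantities. The inductive step relies on a single operation: splitting an arbitrary box into two adjacent sub-boxes sharing the same $y$-range.

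Concretely, given any box-cover $B_1,\dots,B_n$ of $f$ with $B_j = [c_{j-1},c_j]\times[y_j^-,y_j^+]$, I pick an arbitrary index $j$ and an arbitrary point $c \in (c_{j-1},c_j)$; such a point exists because by definition of a box we have $c_{j-1} < c_j$. I then replace $B_j$ by the two boxes
\[
B_j' = [c_{j-1},c] \times [y_j^-,y_j^+] \qquad \text{and} \qquad B_j'' = [c,c_j] \times [y_j^-,y_j^+] \,.
\]
The resulting sequence of $n+1$ boxes is still adjacent, and remains a box-cover of $f$: for every $x \in (c_{j-1},c) \cup (c,c_j)$ that is not one of the new endpoints, we have $(x,f(x)) \in B_j \subseteq B_j' \cup B_j''$, while the only new endpoint introduced is $c$, which is allowed to be excluded by the definition of a box-cover.

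It then remains to check that the two respective constraints are preserved. For the first statement, using that $\mu((c_{j-1},c)) + \mu((c,c_j)) \leq \mu((c_{j-1},c_j))$ (since $\mu(\{c\}) \geq 0$), we get
\[
\cA_p(B_j')^p + \cA_p(B_j'')^p = (y_j^+-y_j^-)^p \bigl(\mu((c_{j-1},c)) + \mu((c,c_j))\bigr) \leq \cA_p(B_j)^p \,,
\]
so the sum $\sum_i \cA_p(B_i)^p$ does not increase, and remains $\leq \epsilon^p$. For the second statement, both $\mu((c_{j-1},c))$ and $\mu((c,c_j))$ are bounded above by $\mu((c_{j-1},c_j))$, hence $\cA_p(B_j'), \cA_p(B_j'') \leq \cA_p(B_j) \leq \epsilon$, and the uniform bound is preserved. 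Iterating this split $n - \cN_p(f,\epsilon)$ (resp.\ $n - \cN'_p(f,\epsilon)$) times yields the desired box-cover of cardinality $n$. There is no genuine obstacle in the argument: the only point requiring minimal care is the verification that splitting a box with atoms at the split point is still legitimate under the definition of a box-cover, which is immediate since the definition permits the graph of $f$ to be excluded at the box endpoints.
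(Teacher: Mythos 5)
Your proof is correct and follows essentially the same route as the paper: both rely on splitting a box vertically (in the $x$-direction), observing that this preserves the box-cover property and cannot increase the generalized areas or their $p$-th power sum. The only cosmetic difference is that the paper splits the last box of a minimal cover into $n-N+1$ sub-boxes in one step, whereas you perform one split at a time by induction (with a slightly more careful treatment of possible atoms at the split point).
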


\begin{proof}
The proof of the two items is straightforward. For the first one, consider any box-cover $B_1,\dots,B_N$ of $f$, with $N=\cN_p(f,\epsilon)$, such that $\bigl(\sum_{i=1}^N \cA_p(B_i)^p\bigr)^{1/p} \leq \epsilon$. Then, split the last box $B_N$ vertically into $n-N+1$ sub-boxes $B'_N,\ldots,B'_n$. We can immediately see that
\[
\left(\sum_{i=1}^{N-1} \cA_p(B_i)^p + \sum_{i=N}^n \cA_p(B'_i)^p\right)^{1/p} \leq \left(\sum_{i=1}^n \cA_p(B_i)^p\right)^{1/p} \leq \epsilon\;,
\]
so that the sequence of boxes $B_1,\dots,B_{N-1},B'_N,\ldots,B'_n$ is a box-cover of $f$ with cardinality $n$ that satisfies the desired generalized area property. The second item can be proved similarly.
\end{proof}

The next simple lemma indicates that the values $y_j^-$ and $y_j^+$ in any box-cover of $f$ are necessary lower and upper bounds on the values of $f$ inside the boxes.

\medskip
\begin{lemma}
\label{lem:yrangeofboxes}
Let $f:[0,1]\to [0,1]$ be non-decreasing, and $B_1,\ldots,B_n$ be any box-cover of $f$, with $B_j = [c_{j-1},c_j] \times [y_j^-,y_j^+]$ for all $j=1,\ldots,n$,
for some sequence $0=c_0 < \ldots <  c_n = 1$.
Then, for all $j=1,\ldots,n$,
\[
y^-_j \leq \inf_{x>c_{j-1}} f(x) \qquad \textrm{and} \qquad y^+_j \geq \sup_{x<c_j} f(x) \;.
\]
\end{lemma}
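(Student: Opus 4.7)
The plan is to prove both inequalities directly from the box-cover containment combined with the monotonicity of $f$. The key observation is that because $c_0 < c_1 < \ldots < c_n$ is strictly increasing, each interior point $x \in (c_{j-1},c_j)$ can only lie in the box $B_j$, not in any other box $B_i$.

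First, I would fix $j \in \{1,\ldots,n\}$ and pick an arbitrary $x \in (c_{j-1},c_j)$. Since $x \notin \{c_0,\ldots,c_n\}$, the box-cover property gives $(x,f(x)) \in B_i$ for some $i$. As $B_i = [c_{i-1},c_i]\times [y_i^-,y_i^+]$ and the intervals $(c_{i-1},c_i)$ are pairwise disjoint with $x\in (c_{j-1},c_j)$, the $x$-coordinate can only satisfy $x \in [c_{i-1},c_i]$ when $i=j$ (the endpoints $c_{j-1},c_j$ being excluded from the range of $x$). Hence $(x,f(x)) \in B_j$, which yields
\[
y_j^- \leq f(x) \leq y_j^+ \qquad \text{for every } x \in (c_{j-1},c_j).
\]

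Second, I would use monotonicity of $f$ to reduce the one-sided infimum and supremum to values on the interval $(c_{j-1},c_j)$. For any $x > c_j$ and any $x' \in (c_{j-1},c_j)$, monotonicity gives $f(x') \leq f(x)$, so adjoining such $x$'s to the infimum does not lower it, and one obtains $\inf_{x > c_{j-1}} f(x) = \inf_{x \in (c_{j-1},c_j)} f(x)$. Symmetrically, $\sup_{x < c_j} f(x) = \sup_{x \in (c_{j-1},c_j)} f(x)$. Combining these identities with the bounds from the first step yields the two desired inequalities $y_j^- \leq \inf_{x > c_{j-1}} f(x)$ and $y_j^+ \geq \sup_{x < c_j} f(x)$.

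There is no real obstacle in this argument; the only mildly subtle point is justifying that the one-sided infima and suprema in the lemma statement agree with those taken over the open interval $(c_{j-1},c_j)$, which is a direct consequence of monotonicity. I would also briefly note that $(c_{j-1},c_j)$ is non-empty because by assumption $c_{j-1} < c_j$, so the set of $x$'s used above is non-trivial.
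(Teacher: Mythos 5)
Your proof is correct and follows essentially the same route as the paper's: you show that every interior point $x \in (c_{j-1},c_j)$ must lie in $B_j$ (so $y_j^- \leq f(x) \leq y_j^+$), then pass to the infimum and supremum using monotonicity. The only difference is that you spell out the reduction of $\inf_{x>c_{j-1}}$ and $\sup_{x<c_j}$ to the open interval $(c_{j-1},c_j)$, which the paper leaves implicit.
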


\begin{proof}
Let $x \in (c_{j-1},c_j)$. Since $(x,f(x)) \in \cup_{j'=1}^n B_{j'}$ and $x \notin [c_{k-1},c_k]$ for any $k \neq j$, we have $(x,f(x)) \in B_j$ and therefore
\[
y^-_j \leq f(x) \qquad \textrm{and} \qquad y^+_j \geq f(x) \;.
\]
We conclude by taking the infimum or supremum over all $x \in (c_{j-1},c_j)$ and by using the fact that $f$ is non-decreasing.
\end{proof}

\subsection{Relationship between \texorpdfstring{$\cN_p(f,\epsilon)$}{Np(f, eps)} and \texorpdfstring{$\cN'_p(f,\epsilon)$}{N'p(f, eps)}}
\label{sec:twocomplexitynotions}

In this section, we compare the two complexity quantities $\cN_p(f,\epsilon)$ and $\cN'_p(f,\epsilon)$ defined in Section~\ref{sec:definitions}.

The quantity $\cN_p'(f,\epsilon)$ always satisfies $\cN_p'(f,\epsilon) \leq \cN_p(f,\epsilon)$. The next lemma relates the two complexity notions in a tighter way. Intuitively, if all the boxes $B$ of a minimal box-cover of $f$ satisfying $\sum_B \cA_p(B)^p \leq \epsilon^p$ had similar generalized areas $\cA_p(B)$, these areas would be close to $\epsilon/\cN_p(f,\epsilon)^{1/p}$, so that $$\cN_p'\big(f,\epsilon/\cN_p(f,\epsilon)^{1/p}\big) \lesssim \cN_p(f,\epsilon).$$ The following lemma implies that indeed $$\cN_p'\big(f,\epsilon/\cN_p(f,\epsilon)
^{1/p}\big) \approx \cN_p(f,\epsilon)$$ up to a factor of~$2$.

\medskip
\begin{lemma}
    \label{lem:N}
    Let $\epsilon > 0$ and $n \geq \cN_p(f,\epsilon)$. Then,
    \[
        \cN_p'\Big(f,\frac{\epsilon}{n^{1/p}}\Big) \leq 2 n \qquad \text{and} \qquad \cN_p(f,\epsilon) \leq \cN_p'\Big(f,\frac{\epsilon}{\cN_p(f,\epsilon)^{1/p}}\Big) \,.
    \]
\end{lemma}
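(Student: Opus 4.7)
The second inequality will follow almost directly from the definitions, so my plan is to dispatch it first. Setting $N := \cN_p(f,\epsilon)$ and $M := \cN_p'(f, \epsilon/N^{1/p})$, any realizer of $M$ is a box-cover with each generalized area bounded by $\epsilon/N^{1/p}$. If $M \leq N$, summing $p$-th powers yields $\sum_{i=1}^M \cA_p(B_i)^p \leq M \epsilon^p/N \leq \epsilon^p$, so this cover satisfies the $\cN_p$ condition and forces $N \leq M$. In all cases $N \leq M$, which is the second inequality.

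For the first inequality, the plan is to take a minimal-area box-cover $B_1,\dots,B_n$ with $a_i := \cA_p(B_i)$ satisfying $\sum_i a_i^p \leq \epsilon^p$ (which exists by Lemma~\ref{lem:existence-larger-n}) and refine each $B_i$ into adjacent sub-boxes of generalized area at most $L := \epsilon/n^{1/p}$. The key refinement step is to split the $y$-range of $B_i = [c_{i-1},c_i] \times [y_i^-, y_i^+]$ into $k_i := \max(1,\lceil a_i/L \rceil)$ equal vertical strips of height $(y_i^+ - y_i^-)/k_i$ via thresholds $z_j = y_i^- + j(y_i^+-y_i^-)/k_i$, and to define $x_j := \inf\{x \in [c_{i-1},c_i] : f(x) > z_j\}$ for $1 \leq j \leq k_i-1$ (with $x_0 = c_{i-1}$, $x_{k_i} = c_i$, $\inf\emptyset = c_i$). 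Monotonicity of $f$ gives $x_0 \leq x_1 \leq \dots \leq x_{k_i}$, and for $x \in (x_{j-1},x_j)$ one checks that $z_{j-1} < f(x) \leq z_j$, so the boxes $[x_{j-1},x_j] \times [z_{j-1},z_j]$ form a valid adjacent sub-cover of the graph of $f$ on $(c_{i-1},c_i)$ once degenerate ones are discarded. Each resulting sub-box has generalized area at most $\frac{y_i^+-y_i^-}{k_i}\mu((c_{i-1},c_i))^{1/p} = a_i/k_i \leq L$.

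Concatenating the refinements across $i=1,\dots,n$ produces a box-cover of $f$ on $[0,1]$ in which every generalized area is at most $L$, and of total cardinality at most
\[
\sum_{i=1}^n \max\bigl(1, \lceil a_i/L \rceil\bigr) \leq \sum_{i=1}^n \bigl(1 + a_i/L\bigr) = n + \frac{1}{L}\sum_{i=1}^n a_i.
\]
Applying Hölder's inequality yields $\sum_i a_i \leq n^{1-1/p}\bigl(\sum_i a_i^p\bigr)^{1/p} \leq n^{1-1/p}\epsilon$; substituting $L = \epsilon/n^{1/p}$ the right-hand side collapses to $n + n = 2n$, proving $\cN_p'(f, \epsilon/n^{1/p}) \leq 2n$.

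The main delicacy is the vertical refinement step: one must carefully handle endpoints, the infimum convention, and the degenerate sub-boxes to confirm that the resulting configuration is genuinely a box-cover in the sense of Section~\ref{sec:definitions}. Once this is set up, the Hölder step that converts the bound $\sum_i a_i^p \leq \epsilon^p$ into the tight $2n$ count is what makes the constant $2$ (rather than something larger) appear; everything else is bookkeeping on areas and widths.
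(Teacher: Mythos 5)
Your proof is correct, but it takes a genuinely different route from the paper's on the first inequality. The paper starts from the same minimal cover $B_1,\dots,B_n$ with $\sum_i \cA_p(B_i)^p\leq\epsilon^p$, but splits each box along the $x$-direction into $k_i=\lfloor n\,\cA_p(B_i)^p/\epsilon^p\rfloor+1$ pieces of equal conditional $\mu$-measure (quantiles of $\mu(\cdot\,|\,(x^-,x^+))$ in the general case), so each split divides the generalized area by $k_i^{1/p}$ and the count $\sum_i k_i\leq 2n$ falls out directly from $\sum_i\cA_p(B_i)^p\leq\epsilon^p$, with no H\"older step. You instead split the $y$-range of each box into $k_i=\max(1,\lceil a_i/L\rceil)$ equal strips and let the level sets of $f$ (via $x_j=\inf\{x: f(x)>z_j\}$) supply the new breakpoints; since the sub-box width is bounded by the parent's width, each split divides the area by $k_i$ rather than $k_i^{1/p}$, and H\"older ($\sum_i a_i\leq n^{1-1/p}\epsilon$) is exactly what turns $\sum_i k_i$ into $2n$. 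What your route buys is that it works verbatim for an arbitrary probability measure $\mu$ — no conditional quantiles or atom bookkeeping are needed, only monotonicity of $\mu$ on nested intervals — at the modest price of the H\"older argument; the paper's route avoids H\"older but must adapt the equal-measure split to general $\mu$. For the second inequality, your two-case argument (if $M\leq N$ then the realizer of $\cN_p'$ already certifies $\cN_p(f,\epsilon)\leq M$) is a streamlined, direct version of the paper's proof by contradiction through the auxiliary bound $\cN_p(f,n_1^{1/p}\epsilon)\leq n_1$; both rest on the same observation. One cosmetic point: your claim $z_{j-1}<f(x)\leq z_j$ on $(x_{j-1},x_j)$ can fail to be strict on the left when $j=1$ and $f$ equals $y_i^-$ near $c_{i-1}$, but containment only requires $z_{j-1}\leq f(x)$, which holds by Lemma~\ref{lem:yrangeofboxes}, so the argument is unaffected.
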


In the proof of Theorem~\ref{thm:GreedyBoxUB} we only use the first inequality $\cN_p'\big(f,\frac{\epsilon}{n^{1/p}}\big) \leq 2 n$. The second inequality shows that this step is tight up to a constant of $2$.

\begin{proof}
Let $f:[0,1]\to [0,1]$ be a non-decreasing function and $\epsilon >0$. 
\begin{enumerate}[label={(\alph*)}]
\item We prove that $\cN_p'(f,\epsilon/n^{1/p}) \leq  2n$. For the sake of readability, we assume that $\mu$ is the Lebesgue measure on $[0,1]$, and will explain at the end how to adapt the proof for an arbitrary probability measure $\mu$. By $n \geq \cN_p(f,\epsilon)$ and Lemma~\ref{lem:existence-larger-n}, there exists a box-cover $B_1,\dots,B_n$ of $f$ satisfying $\sum_{i=1}^n \cA_p(B_i)^p \leq \epsilon^p$. Following a technique from \citet[Proof of Theorem~3]{Novak1992}, we now divide the $B_i$'s into as many sub-boxes as necessary so that each of their generalized areas is below $\epsilon/n^{1/p}$. We will show that the overall number of resulting boxes is at most $2n$.

  Let $i \in \{1,\dots,n\}$. We split the box $B_i = [x^-, x^+] \times [y^-,y^+]$ in a vertical fashion by splitting $[x^-,x^+]$ into $k_i = \big\lfloor {n \cA_p(B_i)^p}/{\epsilon^p}\big\rfloor + 1$ intervals of equal sizes $[x_0,x_1],\dots,[x_{k-1},x_{k_i}]$, where $x_j = x^- + j(x^+-x^+)/{k_i}$. The choice of $k_i$ ensures that $\cA_p(B_i)^p \leq k_i \epsilon^p / n$.

  Then, we define the smaller boxes $B_i^{(j)} := [x_{j-1},x_{j}] \times [y^-, y^+]$ for $j=1,\dots,k_i$. Their generalized areas are given by $\cA_p(B_i^{(j)}) = ((x^+-x^-)/k_i)^{1/p}(y^+-y^-)  = \cA_p(B_i)/k_i^{1/p} \leq \epsilon/n^{1/p}$ as required. Furthermore, the new total number of boxes is
    \[
        \sum_{i=1}^{n} k_i = \sum_{i=1}^{n} \left(\left\lfloor \frac{n \cA_p(B_i)^p}{\epsilon^p}\right\rfloor + 1\right) \leq n + \frac{n}{\epsilon^p} \sum_{i=1}^{n} \cA_p(B_i)^p \leq 2n \,.
    \]
    Since $\cup_{i=1}^{n} \cup_{j=1}^{k_i} B_i^{(j)} = \cup_{i=1}^{n} B_i$ contains the graph of $f$ except maybe at the $B_i$'s endpoints (and thus everywhere outside the $B_i^{(j)}$'s endpoints), we have shown that $\cN_p'\big(f,\epsilon/n^{1/p}\big) \leq 2n$, as desired.

    When $\mu$ is not the Lebesgue measure on $[0,1]$, the proof can be slightly adapted as follows. For each box $B_i = [x^-, x^+] \times [y^-,y^+]$ as above, we distinguish two cases. \\
    \textit{Case 1:} if $\mu\bigl((x^-, x^+)\bigr) = 0$, we set $k_i = 1$, $x_0 = x^-$, and $x_1 = x^+$ as above.\\
    \textit{Case 2:} if $\mu\bigl((x^-, x^+)\bigr) > 0$, we set $k_i = \big\lfloor {n \cA_p(B_i)^p}/{\epsilon^p}\big\rfloor + 1$,  $x_0 = x^-$, and $x_{k_i} = x^+$ as above. However, for any $1 \leq j < k_i$ (if any), we take $x_j$ as a quantile of order $j/k_i$ of the conditional distribution $\mu\bigl(\cdot | (x^-, x^+)\bigr)$. Then, among all sub-boxes $B_i^{(j)} := [x_{j-1},x_{j}] \times [y^-, y^+]$, $j=1,\ldots,k_i$, we only keep the non-degenerate ones, i.e., those such that $x_{j-1} < x_j$. 
    
    The rest of the proof remains unchanged. In particular, each remaining sub-box satisfies $\cA_p(B_i^{(j)}) \leq \epsilon/n^{1/p}$, and the total number of sub-boxes is at most of $\sum_{i=1}^n k_i \leq 2 n$.

    \item Before proving the second inequality, we prove an intermediate result: for all $n_1 \geq \cN_p'(f,\epsilon)$, we have
      \begin{equation}
        \cN_p(f,n_1^{1/p} \epsilon) \leq n_1 \;.
        \label{eq:b}
     \end{equation}
    Since $n_1 \geq \cN_p'(f,\epsilon)$, we can consider a box-cover $B_1,\dots,B_{n_1}$ of $f$ with generalized areas at most of $\epsilon$ each. The result immediately follows from
    \[
        \Big(\sum_{i=1}^{n_1} \cA_p(B_i)^p\Big)^{1/p} \leq n_1^{1/p} \epsilon \,.
    \]

    \item We write $n_\epsilon = \cN_p(f,\epsilon)$ for simplicity. We prove that $n_\epsilon \leq \cN_p'(f,\epsilon/n_\epsilon^{1/p})$ by contradiction. Let us assume that $\smash{\cN_p'(f,\epsilon/n_\epsilon^{1/p}) <  n_\epsilon}$ and define
    \[
        \epsilon' := \epsilon \cdot \bigg(\frac{\cN_p'\big(f,\epsilon/n_\epsilon^{1/p}\big)}{n_\epsilon}\bigg)^{1/p} < \epsilon \,.
    \]
    Then using Inequality \eqref{eq:b}, with $\epsilon$ substituted with $\smash{\epsilon/n_\epsilon
   ^{1/p}}$, we get
    \[
         \cN_p(f,\epsilon)
         \stackrel{\epsilon' < \epsilon}{\leq}
         \cN_p(f,\epsilon') =
         \cN_p\bigg(f,\cN_p'\Big(f,\frac{\epsilon}{n_\epsilon^{1/p}}\Big)^{1/p}
         \frac{\epsilon}{n_\epsilon^{1/p}}\bigg)
         \stackrel{\eqref{eq:b}}{\leq}
         \cN_p'\Big(f,\frac{\epsilon}{n_\epsilon^{1/p}}\Big)  \,,
    \]
    which contradicts the assumption and thus proves the desired inequality.
\end{enumerate}
\end{proof}

\section{Omitted proofs}
\label{sec:omittedproofs}

\subsection{Proof of Lemma~\ref{lem:GreedyBoxError}}
\label{app:proof_lemma_greedyboxerror}

\begin{customlemma}{\ref{lem:GreedyBoxError}}
Let $f:[0,1]\to [0,1]$ be non-decreasing, $p\geq 1$ and $\epsilon \in (0,1]$. For any $t \in \{1,\ldots,\tau_\epsilon\}$, \vspace*{-5pt}
\[
  \big\|\hat f_t - f\big\|_p^p := \int_0^1  \abs{ \hat f_t(x) - f(x) }^p d\mu(x) \leq
  \sum_{k=1}^{t} (a_k^{t})^p =: \xi_t \;.
\]
\end{customlemma}

\begin{proof}
Remark that the approximation $\hat f_t$ is the continuous piecewise-affine function such that $\hat f_t(b_k^t) = f(b_k^t)$ for all $k \in \{0,\ldots,t\}$. Therefore, it suffices to show that for each $k=1,\dots, t$,
\[
  \int_{(b_{k-1}^t,b_{k}^t)} \abs{\hat f_t(x) - f(x)}^p d\mu(x) \leq (a_k^t)^p \,,
\]
which follows easily from the fact that $\abs{\hat f_t(x) - f(x)} \leq f(b_k^t) - f(b_{k-1}^t)$ on the $k$-th box (since $f$ is non-decreasing), and by definition of $a_k^t := (\mu(\point_{k-1}^t, \point_k^t))^{1/p} (f(\point_k^t)-f(\point_{k-1}^t))$. Summing over $k=1,\dots,t$ concludes the proof.
\end{proof}

As a supplement, we are left with proving the special case of the Lebesgue measure, for which the result holds with a multiplicative factor of $\nicefrac{1}{(p+1)}$.

\medskip
\begin{lemma}
\label{lem:gainFactor2}
Let $f : [0,1] \to [0,1]$ be non-decreasing. At any round $t \geq 1$,
\[
  \big\|\hat f_t - f\big\|_p^p := \int_0^1  \big( \hat f_t(x) - f(x) \big)^p d x \leq \frac{1}{1+p}\sum_{k=1}^t (a_k^t)^p \,.
\]
\end{lemma}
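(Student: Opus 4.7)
The plan is to reduce the $L^p$-error on $[0,1]$ to a sum of box-wise contributions and then establish a sharp tail bound for the interpolation error on a single box. Since $\hat{f}_t$ interpolates $f$ at the nodes $b_0^t < \cdots < b_t^t$, I split
\[
\int_0^1 \bigl|\hat f_t(x)-f(x)\bigr|^p dx = \sum_{k=1}^t \int_{b_{k-1}^t}^{b_k^t}\bigl|\hat f_t(x)-f(x)\bigr|^p dx,
\]
and then rescale each box to the unit square. Writing $w_k = b_k^t - b_{k-1}^t$ and $h_k = f(b_k^t)-f(b_{k-1}^t)$ (so that $a_k^t = w_k^{1/p} h_k$), the substitution $x = b_{k-1}^t + u w_k$ together with $g_k(u) := (f(b_{k-1}^t+u w_k) - f(b_{k-1}^t))/h_k$ (defined whenever $h_k>0$; the $h_k=0$ case is trivial) turns the $k$-th integral into
\[
\int_{b_{k-1}^t}^{b_k^t}\bigl|\hat f_t(x)-f(x)\bigr|^p dx \; = \; w_k h_k^p \int_0^1 |u - g_k(u)|^p du,
\]
where $g_k:[0,1]\to[0,1]$ is non-decreasing and the piecewise-affine interpolant becomes the identity on $[0,1]$.

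\textbf{Key one-box inequality.} The main step is then to prove that for any non-decreasing $g:[0,1]\to[0,1]$,
\[
\int_0^1 |u - g(u)|^p du \;\leq\; \frac{1}{p+1}.
\]
I would do this via the layer-cake identity $\int_0^1|u-g(u)|^p du = \int_0^\infty p s^{p-1}\lambda(s)\,ds$, where $\lambda(s) = \bigl|\{u\in[0,1]:|u-g(u)|>s\}\bigr|$, and show the pointwise bound $\lambda(s)\leq 1-s$ for all $s\in(0,1]$. Integrating against $p s^{p-1}$ on $[0,1]$ then yields $\int_0^1 p s^{p-1}(1-s)\,ds = \frac{1}{p+1}$.

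\textbf{Bounding $\lambda(s)$ by a measure-preserving injection.} Split $\{|u-g(u)|>s\}$ into the disjoint pieces $A_s=\{g(u)<u-s\}\subseteq(s,1]$ and $B_s=\{g(u)>u+s\}\subseteq[0,1-s)$, and define $\varphi:A_s\sqcup B_s\to[0,1-s]$ by $\varphi(u)=u-s$ on $A_s$ and $\varphi(u)=u$ on $B_s$. The map is a translation on each piece, hence measure-preserving; its range lies in $[0,1-s]$ by construction. The hard part is injectivity across the two pieces: if $u_1\in A_s$ and $u_2\in B_s$ satisfy $u_1-s=u_2$, then $u_1>u_2$, so monotonicity gives $g(u_1)\geq g(u_2)$, yet the defining inequalities force $g(u_1)<u_1-s=u_2<u_2+s<g(u_2)$, a contradiction. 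Thus $|A_s|+|B_s|=|\varphi(A_s\sqcup B_s)|\leq 1-s$, which is the desired tail estimate.

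\textbf{Conclusion.} Combining the key inequality with the rescaling of Step~1 gives $\int_{b_{k-1}^t}^{b_k^t}|\hat f_t-f|^p dx \leq w_k h_k^p/(p+1) = (a_k^t)^p/(p+1)$ for each $k$, and summing over $k=1,\ldots,t$ proves the claim. The main obstacle is the one-box inequality, and specifically verifying injectivity of $\varphi$: this is where the non-decreasing assumption is used in an essential way, since it prevents $g$ from simultaneously dipping below $u-s$ on the right and jumping above $u+s$ on the left of a shifted collision point.
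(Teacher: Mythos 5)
Your proof is correct. The reduction to a single box and the rescaling to a non-decreasing $g$ on $[0,1]$ (with the affine interpolant becoming the identity) is exactly what the paper does; the difference lies entirely in how the one-box inequality $\int_0^1 |u-g(u)|^p\,du \leq \frac{1}{1+p}$ is established. The paper argues by locating two points $u_- \leq \demi \leq u_+$ between which $g(u)-u$ keeps a constant sign, bounding the contribution of $[0,u_-]$ and $[u_+,1]$ crudely by $u_-^{1+p}$ and $(1-u_+)^{1+p}$, bounding the middle piece by $\frac{(u_+-u_-)^{1+p}}{1+p}$ using the constant sign, and then maximizing over $(u_-,u_+)$; this requires a separate treatment of the case $g(\demi)=\demi$ and uses the normalization $g(0)=0$, $g(1)=1$ only through $g \in [0,1]$. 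Your route via the layer-cake identity and the tail bound $\lambda(s)=\bigl|\{u:|u-g(u)|>s\}\bigr| \leq 1-s$ — proved by the translation-plus-identity map $\varphi$ on $A_s \sqcup B_s$, whose cross-injectivity is exactly where monotonicity enters — is genuinely different and arguably cleaner: it needs no case analysis, no sign-change bookkeeping, and it yields the inequality for \emph{every} non-decreasing $g:[0,1]\to[0,1]$, not only interpolating ones, with equality witnessed by $g\equiv 0$ (so the constant $\frac{1}{1+p}$ is visibly sharp). What the paper's more pedestrian calculus argument buys is that it stays entirely elementary (no distribution-function identity, no measure-theoretic injection argument) and makes the extremal configuration $(u_-,u_+)=(0,1)$ explicit. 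All the steps you rely on check out: $A_s\subseteq(s,1]$ and $B_s\subseteq[0,1-s)$ from $0\leq g\leq 1$, measurability of $A_s,B_s$ from monotonicity of $g$, disjointness of the images $(A_s-s)$ and $B_s$ from the contradiction with $g(u_1)\geq g(u_2)$, and $\int_0^1 p s^{p-1}(1-s)\,ds = \frac{1}{p+1}$.
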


\begin{proof}
Summing over $k=1,\dots,t$ it suffices to show that for each $k=1,\dots, t$,
\[
  \int_{b_{k-1}^t}^{b_{k}^t} \abs{\hat f_t(x) - f(x)}^p dx \leq \frac{(a_k^t)^p }{1+p} \,.
\]
Let $k \in \{1,\dots,t\}$. To ease the notation, we make a change of variables and define for all $u \in [0,1]$:
\[
  g(u) := \frac{f\big((b_k^t - b_{k-1}^t)u+b_{k-1}^{t}\big) - f(b_{k-1}^t)}{f(b_k^t) - f(b_{k-1}^t)} \,.
\]
We have
\begin{align*}
  \int_{b_{k-1}^t}^{b_{k}^t} \abs{\hat f_t(x) - f(x)}^p dx
  & = \big(f(b_k^t) - f(b_{k-1}^t)\big)^p (b_k^t -b_{k-1}^t) \int_0^1 \abs{u - g(u)}^p du \\
  & = (a_k^t)^p \int_0^1 \abs{u - g(u)}^p du \,.
\end{align*}
The function $g$ is non-decreasing over $[0,1]$ with $g(0) = 0$ and $g(1) = 1$.
It remains to control  $\int_0^1 \abs{u - g(u)}^p du$ by $\nicefrac{1}{(1+p)}$. which is done in the following.

First we remark that we can assume $g(\nicefrac{1}{2}) \neq \nicefrac{1}{2}$. Otherwise, since $g$ is non-decreasing $|g(u)-u| \in [0,\demi]$ for all $u \in [0,1]$ and $\int_0^1 \abs{u - g(u)}^p du \leq 2^{-p} \leq (1+p)^{-1}$, which concludes.

\bigskip
We  consider the two points $u_-\leq \demi \leq u_+$ such that the sign of $g(x) - x$ does not change over $(u_-,u_+)$. More formally, they are defined as:
\[
  u_- = \inf \Big\{u \in [0,\nicefrac{1}{2}]: \quad \forall x \in (u,\nicefrac{1}{2}] \quad  \big(g(\nicefrac{1}{2}) - \nicefrac{1}{2}\big) \big(g(x) - x\big) > 0 \Big\}
\]
and
\[
  u_+ = \sup \Big\{u \in [\nicefrac{1}{2},1]: \quad \forall x \in [\nicefrac{1}{2},u) \quad  \big(g(\nicefrac{1}{2}) - \nicefrac{1}{2}\big) \big(g(x) - x\big) > 0 \Big\} \,.
\]
Note that the sign of $g(x) - x$ is constant over $\{\demi\} \cup (u_-,u_+)$ since $g(\demi) \neq \demi$.
Now, we show the following two facts
\begin{equation}
    \label{eq:facts}
    \forall u < u_-, \quad g(u) \leq u_- \qquad \text{and} \qquad \forall u > u_+, \quad g(u) \geq u_+\,.
\end{equation}
Indeed, let $u < u_-$, by definition of $u_-$, it exists $x \in [u,u_-]$ such that
\[
  \big(g(\nicefrac{1}{2}) - \nicefrac{1}{2}\big) \big(g(x) - x\big) \leq  0 \,,
\]
and thus using again the definition of $u_-$ for all $x' \in (u_-,\nicefrac{1}{2}]$,
$
\big(g(x') - x'\big) \big(g(x) - x\big) \leq  0.
$
If $g(x) \leq x$, we are done since $g(u) \leq g(x) \leq x\leq u_-$ because $g$ is non-decreasing and $u \leq x \leq u_-$. Otherwise using $u \leq x'$,  $g(u) \leq g(x') \leq x'$ and making $x' \to u_-$ concludes the first inequality of ~\eqref{eq:facts}. The second inequality can be proved similarly.

Therefore from~\eqref{eq:facts}, for all $u \leq u_-$, $|g(u) - u| \leq u_-$ and all $u \geq u_+$, $|g(u) - u| \leq 1-u_+$ which yields
\begin{equation}
  \int_0^{u_-} \abs{g(u) - u}^p du \leq u_-^{1+p}  \quad \text{and} \quad \int_{u_+}^1 \abs{g(u) - u}^p du \leq (1-u_+)^{1+p} \,.
  \label{eq:edges}
\end{equation}
Furthermore, $g(u) - u$ does not change sign over $(u_-,u_+)$, which entails:
\begin{equation}
  \int_{u_-}^{u_+} \abs{g(u)-u}^p du
  \leq \max \left\{ \int_{u_-}^{u_+} (u_+-u)^p du  , \int_{u_-}^{u_+} (u - u_-)^p du  \right\}
  = \frac{(u_+-u_-)^{p+1}}{p+1} \,. \label{eq:middle}
\end{equation}
Summing Inequalities \eqref{eq:edges} and \eqref{eq:middle}, we get
\begin{multline*}
  \int_0^1 \abs{g(u)-u}^p du \leq u_-^{1+p} + (1-u_+)^{1+p} + \frac{(u_+-u_-)^{1+p}}{1+p} \\
                              \leq \sup_{ 0 \leq u_- \leq \demi \leq u_+ \leq 1} \bigg\{ u_-^{1+p} + (1-u_+)^{1+p} + \frac{(u_+-u_-)^{1+p}}{1+p}  \bigg\} = \frac{1}{1+p}  \,.
\end{multline*}
The supremum is reached for $(u_-, u_+) = (0,1)$.
\end{proof}

\subsection{Proof of Lemma~\ref{lem:smallboxes}}

\begin{customlemma}{\ref{lem:smallboxes}}
Let $f:[0,1]\to [0,1]$ be non-decreasing, $p\geq 1$ and $\epsilon \in (0,1]$. Define $\tau_\epsilon' := 2 \bigl(1+\lceil p \log_2(1/\epsilon)\rceil\bigr) \cN'_p\bigl(f,\epsilon\bigr)$, and assume that $\GreedyBox$ is such that $\tau_\epsilon > \tau'_\epsilon$. Then, at time $\tau'_\epsilon$, all the boxes maintained by $\GreedyBox$ have a generalized area bounded from above by $\epsilon$, i.e., $\smash{a_k^{\tau_\epsilon'} \leq \epsilon}$ for all $k \in \{1,\dots,\tau_\epsilon'\}$.
\end{customlemma}

\begin{proof}
By definition of $m := \cN_p'(f,\epsilon)$, we can fix a box-cover $B_1,\ldots,B_{m}$ of $f$ such that $\cA_p(B_j) \leq \epsilon$ for all $j$. Recall that these boxes are adjacent, i.e., they are of the form $B_j = [c_{j-1},c_j] \times \bigl[y_j^-,y_j^+\bigr]$ for some nodes $0=c_0 < \ldots < c_{m}=1$. The inequalities $\cA_p(B_j) \leq \epsilon$ thus translate into $(y_j^+-y_j^-)\mu\big((c_{j-1},c_j)\big)^{1/p} \leq \epsilon$ for all $j$.

In this proof we will compare the boxes maintained by $\GreedyBox$ with the boxes $B_j$ above. We first need the following definition: at each round $t \geq 1$, we say that a box $B = \bigl[\point_{k-1}^t,\point_k^t\bigr] \times \bigl[f(\point_{k-1}^t),f(\point_k^t)\bigr]$, $1 \leq k \leq t$, maintained by $\GreedyBox$ is
\begin{itemize}
        \item \emph{internal} if $\bigl[\point_{k-1}^t,\point_k^t\bigr] \subset (c_{j-1},c_j)$ for some $j=1,\ldots,m$;
        \item \emph{overlapping} if $\bigl[\point_{k-1}^t,\point_k^t\bigr] \ni c_j$ for some $j=1,\ldots,m$.
\end{itemize}
Note that exactly one of the two cases above must hold true. In the sequel, we denote by $\hat{B}_t = \bigl[\point_{k_*^t-1}^t,\point_{k_*^t}^t\bigr] \times \bigl[f\bigl(\point_{k_*^t-1}^t\bigr),f\bigl(\point_{k_*^t}^t\bigr)\bigr]$ the box selected by $\GreedyBox$ at time $t$. Since $\hat{B}_t$ is necessary of one of the two types above, we can distinguish between the following two cases.

\medskip
\textit{Case 1}: there exists $t \in \{1,2,\ldots,\tau'_{\epsilon}\}$ such that $\hat{B}_t$ is internal. In this case, letting $j=1,\ldots,m$ be the corresponding index, we have, by Lemma~\ref{lem:yrangeofboxes} in Appendix~\ref{sec:elementaryproperties},
\[
c_{j-1} < \point_{k_*^t-1}^t < \point_{k_*^t}^t < c_j \qquad \textrm{and} \qquad y^-_j \leq f\bigl(\point_{k_*^t-1}^t\bigr) \leq f\bigl(\point_{k_*^t}^t\bigr) \leq y^+_j \;.
\]
Therefore, the generalized areas of $\hat{B}_t$ and $B_j$ satisfy $a^t_{k^t_*} = \cA_p\bigl(\hat{B}_t\bigr) \leq \cA_p(B_j) \leq \epsilon$ by construction of $B_j$. Now, by definition of $k_*^t$ in Algorithm~\ref{alg:GreedyBox}:
\[
\max_{1 \leq k \leq t} a^t_k = a^t_{k^t_*} \leq \epsilon \;.
\]
This concludes the proof of the lemma in this case, since the quantity $\max_{1 \leq k \leq t} a^t_k$ can only decrease between rounds $t$ and $\tau'_{\epsilon}$.

\medskip
\textit{Case 2}: $\hat{B}_t$ is an overlapping box at every round $t \in \{1,2,\ldots,\tau'_{\epsilon}\}$. In this case, we say that all nodes $c_j$ lying in $\bigl[\point_{k_*^t-1}^t,\point_{k_*^t}^t\bigr]$ are \emph{activated} at time~$t$.
More precisely, we say that a node $c_j$ is \emph{left-activated} when $c_j \in \bigl(\point_{k_*^t-1}^t,\point_{k_*^t}^t\bigr]$ (part of the box $\hat{B}_t$ lies on the left of $c_j$), and that it is \emph{right-activated} when $c_j \in \bigl[\point_{k_*^t-1}^t,\point_{k_*^t}^t\bigr)$ (part of the box $\hat{B}_t$ lies on the right of $c_j$).

Since $\point_{k_*^t-1}^t < \point_{k_*^t}^t$ by construction, at least one node $c_j$ is either left-activated or right-activated at every round $t \in \{1,2,\ldots,\tau'_{\epsilon}\}$, so that
\[
\sum_{t=1}^{\tau'_{\epsilon}} \underbrace{\sum_{j=1}^{m} \sum_{\sigma \in \{\lleft,\rright\}} \mathds{1}_{\{\textrm{$c_j$ is $\sigma$-activated at round $t$}\}}}_{\geq 1} \geq \tau'_{\epsilon} \;.
\]
Inverting sums and recognizing $N_{j,\sigma} = \sum_{t=1}^{\tau'_{\epsilon}} \mathds{1}_{\{\textrm{$c_j$ is $\sigma$-activated at round $t$}\}}$ to be the number of rounds when node $c_j$ is $\sigma$-activated, we can see that
\[
\sum_{j=1}^{m} \sum_{\sigma \in \{\lleft,\rright\}} N_{j,\sigma} \geq \tau'_{\epsilon} \;,
\]
so that
\[
\max_{1 \leq j \leq m} \, \max_{\sigma \in \{\lleft,\rright\}} N_{j,\sigma} \geq \frac{\tau'_{\epsilon}}{2\,m} \;.
\]
Combining the last inequality with the definition of $\tau'_{\epsilon}$ and $m := \cN_p'(f,\epsilon)$, we obtain the following intermediate result.

\begin{fact}
\label{fac:activation}
There exists $j \in \{1,\ldots,\cN_p'(f,\epsilon)\}$ and $\sigma \in \{\lleft,\rright\}$ such that the node $c_j$ is $\sigma$-activated at least
\[
\frac{\tau'_{\epsilon}}{2\, \cN_p'(f,\epsilon)} =  1 + \big\lceil p \log_2(1/\epsilon)\big\rceil
\]
times within the set of rounds $\{1,\ldots,\tau'_{\epsilon}\}$.
\end{fact}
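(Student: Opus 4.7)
The plan is to prove the Fact by a straightforward double-counting and pigeonhole argument on the activation events, indexed by triples $(j,\sigma,t)$ with $j \in \{1,\ldots,m\}$, $\sigma \in \{\lleft,\rright\}$, and $t \in \{1,\ldots,\tau'_\epsilon\}$, where $m = \cN_p'(f,\epsilon)$.

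First, I would verify that each round $t$ contributes at least one activation event. Since we are in Case 2, the selected box $\hat{B}_t$ is overlapping, so there is some $j \in \{1,\ldots,m\}$ with $c_j \in \bigl[\point_{k_*^t-1}^t, \point_{k_*^t}^t\bigr]$. Because $\GreedyBox$ always keeps distinct consecutive evaluation points, $\point_{k_*^t-1}^t < \point_{k_*^t}^t$, so at least one of the two strict conditions $c_j \in (\point_{k_*^t-1}^t, \point_{k_*^t}^t]$ (left-activation) or $c_j \in [\point_{k_*^t-1}^t, \point_{k_*^t}^t)$ (right-activation) must hold. This justifies the inner-sum lower bound
\[
\sum_{j=1}^m \sum_{\sigma \in \{\lleft,\rright\}} \mathds{1}_{\{c_j \text{ is } \sigma\text{-activated at round } t\}} \geq 1
\]
displayed just before the Fact.

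Next, I would sum this inequality over $t \in \{1,\ldots,\tau'_\epsilon\}$ and swap the order of summation. Writing $N_{j,\sigma} := \sum_{t=1}^{\tau'_\epsilon} \mathds{1}_{\{c_j \text{ is } \sigma\text{-activated at round } t\}}$, this gives $\sum_{j=1}^m \sum_{\sigma} N_{j,\sigma} \geq \tau'_\epsilon$. Since this is a sum of exactly $2m$ non-negative terms, the pigeonhole principle forces $\max_{j,\sigma} N_{j,\sigma} \geq \tau'_\epsilon/(2m)$. Finally, substituting the definitions $\tau'_\epsilon = 2\bigl(1+\lceil p\log_2(1/\epsilon)\rceil\bigr)\cN_p'(f,\epsilon)$ and $m = \cN_p'(f,\epsilon)$ yields $\tau'_\epsilon/(2m) = 1 + \lceil p\log_2(1/\epsilon)\rceil$, which is the claimed bound.

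There is essentially no obstacle: every ingredient (the lower bound of one activation per round, the sum swap, and the pigeonhole step) is elementary. The only subtle point worth double-checking is that a node $c_j$ coinciding with an endpoint of the selected box is counted at least once by the definitions of left- and right-activation; this follows immediately from the open/closed conventions in the definitions, because $c_j = \point_{k_*^t-1}^t$ triggers right-activation while $c_j = \point_{k_*^t}^t$ triggers left-activation, and any $c_j$ strictly inside the box triggers both.
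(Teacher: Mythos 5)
Your proposal is correct and follows essentially the same argument as the paper: in Case~2 every round activates at least one node (since $\point_{k_*^t-1}^t < \point_{k_*^t}^t$), summing over rounds and swapping sums gives $\sum_{j,\sigma} N_{j,\sigma} \geq \tau'_{\epsilon}$, and pigeonhole over the $2m$ pairs plus the definition of $\tau'_{\epsilon}$ yields the stated count. The endpoint convention you double-check at the end is exactly the point the paper relies on as well, so there is nothing to add.
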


\medskip
We now focus on a single pair $(j,\sigma)$ provided by Fact~\ref{fac:activation} above. We follow the evolution of the box $\tilde{B}_t$ maintained by $\GreedyBox$ that lies on the $\sigma$-side of $c_j$. More formally, if we write $\tilde{B}_t = \bigl[\point_{k-1}^t,\point_{k}^t\bigr] \times \bigl[f\bigl(\point_{k-1}^t\bigr),f\bigl(\point_{k}^t\bigr)\bigr]$, this means that $c_j \in \bigl(\point_{k-1}^t,\point_{k}^t\bigr]$ if $\sigma = \lleft$, and that $c_j \in \bigl[\point_{k-1}^t,\point_{k}^t\bigr)$ if $\sigma = \rright$. Note that, at any round $t$, such a box $\tilde{B}_t$ indeed exists and is unique.

Note that, at all rounds $t \in \{1,\ldots,\tau'_{\epsilon}\}$ when $c_j$ is $\sigma$-activated, we have $\tilde{B}_t=\hat{B}_t$, so that the box $\tilde{B}_t$ is replaced (see Step~\ref{step:GB-sort} in Algorithm~\ref{alg:GreedyBox}) by two boxes whose generalized widths are at most half that of $\tilde{B}_t$. This is because $x_{t+1}$ is a median of the conditional distribution $\mu(\cdot |(\point_{k_*^t-1}^t,\point_{k_*^t}^t))$. Since $\tilde{B}_{t+1}$ is among these two boxes, we thus have
\[
\width\bigl(\tilde{B}_{t+1}\bigr) \leq \width\bigl(\tilde{B}_t\bigr)/2
\]
at each round $t \in \{1,\ldots,\tau'_{\epsilon}\}$ when $c_j$ is $\sigma$-activated. Note also that $\tilde{B}_{t+1} = \tilde{B}_t$ at all other rounds $t \in \{1,\ldots,\tau'_{\epsilon}\}$; at such rounds, $\width\bigl(\tilde{B}_{t+1}\bigr) = \width\bigl(\tilde{B}_t\bigr)$. Combining the last two properties, and denoting by $\tau$ the round when $c_j$ is $\sigma$-activated for the $\lceil p \log_2(1/\epsilon)\rceil$-th time, we get
\[
\width\bigl(\tilde{B}_{\tau+1}\bigr) \leq 2^{-\lceil p \log_2(1/\epsilon)\rceil} \leq \epsilon^p \,.
\]
To conclude, denote by $\tau'$ the round when $c_j$ is $\sigma$-activated for the $\big(\lceil p\log_2(1/\epsilon)\rceil+1\big)$-th time. Note that $\hat{B}_{\tau'} = \tilde{B}_{\tau'}$ so that
\[
\width\bigl(\hat{B}_{\tau'}\bigr) = \width\bigl(\tilde{B}_{\tau'}\bigr) \leq \width\bigl(\tilde{B}_{\tau+1}\bigr) \leq \epsilon^p \;,
\]
where the first inequality follows from $\tau' \geq \tau+1$ and the fact that $\width\bigl(\tilde{B}_{t}\bigr)$ is non-increasing over time. Therefore, by definition of $\hat{B}_{\tau'}$, all boxes maintained by $\GreedyBox$ at time $\tau'$ have generalized areas bounded by
\[
\max_{1 \leq k \leq \tau'} a^{\tau'}_k \leq \cA_p\bigl(\hat{B}_{\tau'}\bigr) \leq (1-0) \times \width\bigl(\hat{B}_{\tau'}\bigr)^{1/p} \leq \epsilon \;.
\]
We conclude the proof by noting that $\tau' \leq \tau'_{\epsilon}$, so that $\max_{1 \leq k \leq \tau'_{\epsilon}} a^{\tau'_{\epsilon}}_k \leq \max_{1 \leq k \leq \tau'} a^{\tau'}_k \leq \epsilon$.
\end{proof}

\subsection{Proof of Lemma~\ref{lem:areadivided}}

\begin{customlemma}{\ref{lem:areadivided}}
Let $f:[0,1]\to [0,1]$ be non-decreasing, $p\geq 1$ and $\epsilon \in (0,1]$. For any $t \in \{1,\ldots,\lfloor \tau_\epsilon/2 \rfloor\}$, we have $\xi_{2t} \leq  \xi_t / 2$. Therefore, for all $t \leq s$ in $\{1,\ldots,\tau_\epsilon\}$,
\begin{equation*}
\xi_{s} \leq \frac{\xi_{t}}{2^{\lfloor \log_2(s/t) \rfloor}} \leq \Big(\frac{2 t}{s}\Big) \, \xi_{t} \,.
\end{equation*}
\end{customlemma}

\begin{proof}
We first prove that $\xi_{2t} \leq  \xi_t / 2$ for all $t \in \{1,\ldots,\lfloor \tau_\epsilon/2 \rfloor\}$. To do so, recall that at any round $1 \leq t' \leq \tau_\epsilon$, $\GreedyBox$ maintains $t'$ boxes given by $\bigl[\point_{k-1}^{t'},\point_{k}^{t'}\bigr] \times \bigl[f\bigl(\point_{k-1}^{t'}\bigr),f\bigl(\point_{k}^{t'}\bigr)\bigr]$, $k = 1,\ldots,t'$. We write $a_{(1)}^{t'} \geq a_{(2)}^{t'} \geq \ldots \geq a_{(t')}^{t'}$ for their generalized areas $a^{t'}_k$ sorted in decreasing order.

\medskip
\textit{Part 1}: We first show by induction on $k=1,\ldots,t$ that
\begin{itemize}[nosep]
        \item[(i)] the box selected by $\GreedyBox$ at round $t-1+k$ (Step~\ref{line:GB_selection} in Algorithm~\ref{alg:GreedyBox}) has a generalized area $a^{t-1+k}_{k^{t-1+k}_*}$ larger than or equal to $a_{(k)}^t$;
        \item[(ii)] at least $t-k$ boxes of round $t+k$ are identical to boxes of round~$t$.
\end{itemize}
Both (i) and (ii) are straightforward for $k=1$. Assume they are true for some $k \in \{1,\ldots,t-1\}$. Next we prove (i) and (ii) with the index value $k+1$. At round $t+k$, the generalized area of the box selected by $\GreedyBox$ must be at least as large as the maximum generalized area of the $t-k$ boxes that are identical to boxes of round~$t$ (by (ii) with $k$). Therefore, this generalized area is larger than or equal to $a_{(k+1)}^t$, which proves~(i). Property~(ii) is immediate since only one box is selected at every round. This completes the induction.

\medskip
\textit{Part 2}: Note that, at each round $t-1+k \in \{t,t+1,\ldots,2t-1\}$, the box $\hat{B}_{t-1+k}$ selected by $\GreedyBox$ is replaced with two smaller boxes $B'$ and $B''$ whose generalized areas satisfy
\[
\cA_p(B')^p + \cA_p(B'')^p \leq \cA_p(\hat{B}_{t-1+k})^p / 2 \;,
\]
since $x_{t+1}$ is a median of $\mu(\cdot |(\point_{k_*^t-1}^t,\point_{k_*^t}^t))$ and $(\delta')^p + (\delta'')^p \leq (\delta'+\delta'')^p$ for any $\delta',\delta'' \geq~0$. Therefore, and by Property~(i) above, at least $\cA_p(\hat{B}_{t-1+k})^p / 2 \geq \bigl(a_{(k)}^t\bigr)^p/2$ is lost when summing the generalized areas to the power $p$ at round $t+k$, compared to round $t+k-1$.
Therefore, the certificate $\xi_{t+k}$ of the box-cover at the next round satisfies $\xi_{t+k}  \leq \xi_{t-1+k} - (a_{(k)}^t)^p/2$. Summing over $k=1,\ldots,t$ we get:
        \[
        \xi_{2t} \leq \xi_t - \sum_{k=1}^t \frac{(a_{(k)}^t)^p}{2} = \frac{\xi_t}{2} \,.
        \]
To see why this implies~\eqref{eq:lineardecrease}, it suffices to note that $s \geq \tilde{s} := 2^{\lfloor \log_2(s/t) \rfloor} \cdot t$, so that
\[
\xi_{s} \leq \xi_{\tilde{s}} \leq \frac{\xi_{t}}{2^{\lfloor \log_2(s/t) \rfloor}} \;,
\]
where the first inequality is because the certificate $\xi_s$ can only decrease over time, and where the last inequality follows from the property $\xi_{2t} \leq  \xi_t / 2$ shown above. This concludes the proof.
\end{proof}

\subsection{Proof of Theorem~\ref{thm:SGreedyBoxUB}} 
\label{sub:proof_of_theorem_thm:sgreedyboxub}

\begin{customthm}{\ref{thm:SGreedyBoxUB}}
Let $f:[0,1]\to [0,1]$ be non-decreasing which satisfies Assumption~\ref{ass:polyf} for some $C,\alpha>0$. Let $\epsilon>0$, then Algorithm~\ref{alg:SGreedyBox} satisfies
\[
	\E\Big[ \big| \hat I_{\tau_\epsilon}(f)-I(f) \big| \Big] \leq \epsilon.
\]
Besides the number of function evaluations is bounded from above by
\[
	 \tau_\epsilon =  \cO\big(\log (1/\epsilon)^{3/2} \varepsilon^{-\frac{1}{1/\alpha+1/2}}\big) \,.
\]
\end{customthm}

\begin{proof}
Let $\epsilon>0$. First, we remark that the points $x_0,\dots,x_t$ defined by the deterministic version $\GreedyBox$ (defined in Algorithm~\ref{alg:GreedyBox}) and the stochastic version (Algorithm~\ref{alg:SGreedyBox}) are identical. Only the stopping criterion and definition of $\hat I_{\tau_\epsilon}(f)$ differs from Algorithm~\ref{alg:GreedyBox}. Therefore, we can apply Lemma~\ref{lem:smallboxes}.

Let $\epsilon_1 \geq \epsilon$ that will be fixed later as a function of $\epsilon$.
Denote $n_{\epsilon_1} = C \epsilon_1^{-\alpha} \geq \cN(f,\epsilon_1)$ by Assumption~\ref{ass:polyf}. Applying Lemma~\ref{lem:smallboxes} to $\epsilon_2 = \epsilon_1 / n_{\epsilon_1} = \epsilon_1^{1+\alpha}/C$, we get the following result. At time
\[
        \tau'_{\epsilon_2} := \big(1+\lceil \log_2(1/\epsilon_2)\rceil\big) \cN'(f,\epsilon_2)
\]
all the boxes maintained by Algorithm~\ref{alg:SGreedyBox} have an area bounded from above by $\epsilon_2$, i.e.,
$$
        a_k^{\tau_{\epsilon_2}'} \leq \epsilon_2, \qquad \forall k=1,\dots,\tau_{\epsilon_2}'\,.
$$
From Lemma~\ref{lem:N} since $n_{\epsilon_1} \geq  \cN(f,\epsilon_1)$ by Assumption~\ref{ass:polyf},
\[
        \cN'(f,\epsilon_2) = \cN'\Big(f,\frac{\epsilon_1}{n_{\epsilon_1}}\Big) \leq 2n_{\epsilon_1} = 2C \epsilon_1^{-\alpha}\,,
\]
which substituted into the definition of $\tau'_{\epsilon_2}$ yields
\[
        \tau_{\epsilon_2}' \leq 2C \big(1+\lceil \log_2(1/\epsilon_2)\rceil\big) \epsilon_1^{-\alpha} \,.
\]
Thus, the certificate up to time $\tau_{\epsilon_2}'$ is bounded from above as
\[
        \xi_{\tau_{\epsilon_2}} =  \frac{1}{2}\sum_{k=1}^{\tau_{\epsilon_2}'} (a_k^{\tau_{\epsilon_2}'})^2 \leq \frac{\tau_{\epsilon_2}' (\epsilon_2)^2}{2}   \leq \frac{1}{C} \big(1+\lceil \log_2(1/\epsilon_2)\rceil\big)  \epsilon_1^{2+\alpha}\,.
\]
Now, to get rid of the multiplicative term, similarly to Theorem~\ref{thm:GreedyBoxUB}, we can apply Lemma~\ref{lem:areadivided} (which also works for StochasticGreedyBox) to replace $\xi_{\tau_{\epsilon_2}'}$ with $\xi_s$ for $s \geq \tau_{\epsilon_2}'$. The choice $\smash{s = \tau_{\epsilon_2}' \big(\frac{1}{C}\big(1+\lceil \log_2(1/\epsilon_2)\rceil\big)\big)^{1/2}}$ yields
\[
        \xi_s \leq \epsilon_1^{2+\alpha}\,.
\]
Then, choosing $\epsilon_1 = \epsilon^{\frac{2}{2+\alpha}}$ implies $\xi_s \leq \epsilon^2$. Therefore by definition of the stopping criterion
\[
        \tau_\epsilon \leq s = \Big(\frac{1}{C}\big(1+\lceil \log_2(1/\epsilon_2)\rceil\big) \Big)^{3/2} \epsilon_1^{-\alpha}  = \tilde \cO\big(\epsilon^{-\frac{1}{1/\alpha+1/2}}\big) \,.
\]
This concludes the proof.
\end{proof}

\subsection{Proof of Theorem~\ref{thm:c2upper_bound}}

\begin{customthm}{\ref{thm:c2upper_bound}}
  Let $\alpha >0$ and $\epsilon \in (0,1]$. Let $f: [0, 1] \rightarrow [0, 1]$ be a non-decreasing and piecewise-$C^2$ function with a number of $C^1$-singularities bounded by $\epsilon^{-\alpha}$ and such that $|f''(x)| \leq 1$ whenever it is defined. Then, there exists
  $$
  t_\epsilon = \left\{ \begin{array}{ll}  
  \tilde \cO\Big(\epsilon^{-1+\frac{1}{2p+2}}\Big) & \text{ if } \alpha \leq \frac{1}{2} \\
  \tilde \cO\Big(\epsilon^{-1+\big(\frac{1-\alpha}{1+p}\big)_+}\Big) & \text{ if } \alpha \geq \frac{1}{2} \\
  \end{array}\right.
  $$
  such that $\big\|\hat f_t-f\big\|_p\leq \epsilon$ for all $t\geq t_\epsilon$, where $\hat{f}_{t}$ is the approximation of $f$ returned by $\GreedyBox$ after $t$ rounds.
\end{customthm}

\begin{proof}
Let $c \in (0,1]$, and $\gamma>0$ be two constants to be fixed later by the analysis and set $\epsilon' = c \epsilon^{\gamma}$.

\emph{Step 1.} We will now establish an upper bound on the number of evaluations, denoted by $\tau_\epsilon$, required by GreedyBox to ensure that all individual areas are smaller than $\epsilon'$.  From Lemma~\ref{lem:smallboxes}, we know that
\begin{equation}
    \tau_\epsilon \leq 2 \bigl(1+\lceil p\log_2(1/\epsilon')\rceil\bigr) \cN_1'(f,\epsilon') \,,
    \label{eq:boundt}
\end{equation}
which can be further bounded from above by the use of Lemma~\ref{lem:N} in the appendix with the choice $n := c^{-p} \big\lceil \epsilon^{-\frac{\gamma p}{p+1}}\big\rceil$. Indeed, since $c\leq 1$  and by Lemma~\ref{lem:N-basic}, $  n \geq \big\lceil \epsilon^{-\frac{\gamma p}{p+1}}\big\rceil \geq \cN_p(f,\epsilon^{\frac{\gamma p}{p+1}})$. Thus, Lemma~\ref{lem:N} entails
\[
\cN_1'(f,\epsilon') = \cN_1'(f,c\epsilon^\gamma) = \cN_1'\Big(f, \frac{c \epsilon^{\frac{\gamma p}{p+1}}}{\epsilon^{-\frac{\gamma}{p+1}}}\Big) \leq \cN_1'\Big(f, \frac{ \epsilon^{\frac{\gamma p}{p+1}}}{n^{{1}/{p}}}\Big)
\leq 2n = 2  c^{-p} \big\lceil \epsilon^{-\frac{\gamma p}{p+1}}\big\rceil \,,
\]
Therefore, plugging back into Inequality~\eqref{eq:boundt}, the number of required evaluations is bounded as
\begin{equation}
    \label{eq:tau_epsilon}
\tau_\epsilon \leq 4 \bigl(1+\lceil p\log_2(1/\epsilon')\rceil\bigr)  c^{-p} \big\lceil \epsilon^{-\frac{\gamma p}{p+1}}\big\rceil = \tilde{\mathcal{O}}(c^{-p} \epsilon^{-\frac{\gamma p}{p+1}}) \,.
\end{equation}

\medskip
\emph{Step 2.} We will now fix the values of $c \in (0,1]$ and $\gamma>0$ in a way that ensures an approximation error in the $L^p$-norm smaller than $\epsilon$.

For $1 \leq i \leq t$, let us denote by $B_i = [x_i, x_{i+1}] \times [f(x_i), f(x_{i+1})]$ the $i^{\text{th}}$ box maintained by $\GreedyBox$, define $w_i := x_{i+1} - x_{i}$ to be the width of the $\text{i}^{th}$ box and let $\hat f_t$ be the piecewise-linear function returned by $\GreedyBox$ after $t$ epochs.
By the first step of this proof, for all $1\leq i\leq t$,
\[
\cA_p(B_i) := \Big((f(x_{i+1})- f(x_i))^p (x_{i+1}-x_i)\Big)^{1/p} \leq \epsilon'\,,
\]
which implies by construction of $\hat f_t$ (see Proof of Lemma~\ref{lem:GreedyBoxError}), for all $1\leq i \leq t$
\begin{equation}
 \int_{x_i}^{x_{i+1}} \big|\hat f_t(x) - f(x)\big|^p dx \leq (\epsilon')^p \,.
 \label{eq:bound_nonC2}
\end{equation}
The remaining part of the proof revolves around using the above upper bound for the non-smooth pieces and a more refined upper bound for the $C^1$ pieces.
Denote by $\cJ \subseteq \{1,\dots,t\}$ the indices of all boxes such that $f$ is $C^1$ over $[x_i,x_{i+1}]$. Because $f$ is piecewise-$C^1$ with at most $K$ pieces, $\mathrm{Card}(\cJ^c) \leq K$. Therefore, the $L^p$ error after $t$ evaluations may be decomposed as
\begin{multline}
\label{eq:bound_Lp1}
\big\|\hat f_t - f\big\|_p^p  
    = \int_{0}^1 \big|\hat f_t(x) - f(x)\big|^p dx 
    = \sum_{i=1}^t \int_{x_i}^{x_{i+1}} \big|\hat f_t(x) - f(x)\big|^p dx  \\
    \stackrel{\eqref{eq:bound_nonC2}}{\leq} \min\{K,t\} (\epsilon')^p
+ \sum_{i\in \cJ} \int_{x_i}^{x_{i+1}} \big|\hat f_t(x) - f(x)\big|^p dx
\end{multline}
We are now left with bounding from above the $L^p$ errors on the right-hand-side for all $i \in \cJ$. On one side, the bound~\eqref{eq:bound_nonC2} is valid for all $i \in \cJ$, which we bound further by $(\epsilon')^p$.
On the other side,  we can use Lemma~\ref{lem:approxLp_affine} that bounds the $L^p$ approximation error of $\hat f_t$ for any $C^1$ and piecewise-$C^2$ function to obtain
\begin{equation}
    \label{eq:integral}
    \int_{x_i}^{x_{i+1}} \big|\hat f_t(x) - f(x)\big|^p dx \leq  M^p w_i^{2p+1} \,,
\end{equation}
where $M \geq \frac{3}{2} \sup_{x \notin \cX_2} |f''(x)|$, where $\cX_2$ denotes the set of $C^2$-singularities. This prompt us to introduce the following function $\phi$, that depends on the width of the intervals of $\cJ$:
\[
\phi\left((w_i)_{i \in \cJ}\right) = \sum_{i \in \cJ} \min \left\{(\epsilon')^p, M^p w_i^{2p+1} \right\}\;.
\]
From~\eqref{eq:bound_Lp1}, the total error is thus bounded from above by
\begin{equation}
\label{eq:Lpboundphi}
\big\|\hat f_t - f\big\|_p^p  \leq \min\{K,t\} (\epsilon')^p + \phi\left((w_i)_{i \in \cJ}\right) \,.
\end{equation}
It now remains to bound the function $\phi$ for any set $(w_i)_{i \in \cJ}$ of possible widths that could arise from $\GreedyBox$.  We thus need to solve the maximization problem
\begin{align*}
\max_{(w_i)_{i \in \cJ}} & \phi( (w_i)_{i \in \cJ})
\text{ such that } \sum_{i \in \mathcal{J}} w_i \leq 1\;. 
\end{align*}
$\phi$ may be re-written in two terms:
\begin{equation}
\label{eq:phi2sums}
 \phi( (w_i)_{i \in \cJ}) =  \sum_{i \in \cJ, \epsilon' \leq M w_i^{2+\frac{1}{p}} } (\epsilon')^p +  \sum_{i \in \cJ, \epsilon' > M w_i^{2+\frac{1}{p}}  } M^p w_i^{2p+1}\;.
\end{equation}
Let us first handle the first term.
Since $ \sum_{i \in \cJ} w_i \leq 1$, we have in particular
\begin{equation*}
  \abs{ \left\{i \in \cJ \colon \epsilon' \leq M w_i^{2+\frac{1}{p}}\right\} } \cdot \left(\frac{\epsilon'}{M}\right)^{\frac{p}{2p+1}}
  \leq \sum_{i \in \cJ,  \epsilon' \leq M w_i^{2+\frac{1}{p}}} w_i \leq 1\;.
\end{equation*}
This shows that the number of intervals in the first term verifies
\begin{equation}
\label{eq:cardinality_cJ1}
\abs{\left\{i \in \cJ \colon \epsilon' \leq M w_i^{2+\frac{1}{p}}\right\}} \leq M^{\frac{p}{2p+1}}\left(\epsilon'\right)^{-\frac{p}{2p+1}}\;,
\end{equation}
which implies that the first term of \eqref{eq:phi2sums} is bounded as
\begin{equation}
\label{eq:firstterm}
 \sum_{i \in \cJ, \epsilon' \leq M w_i^{2+\frac{1}{p}} } (\epsilon')^p \leq M^{\frac{p}{2p+1}}\left(\epsilon'\right)^{\frac{2p^2}{2p+1}} \,.
\end{equation}

Now, let us bound from above the second term of the sum.
Since $x \mapsto x^{2p+1}$ is strictly convex on $[0,1]$, Lemma~\ref{lem:convex_prop} (in the appendix) applied with $\alpha = \left(\epsilon'/M\right)^{p/(2p+1)   }$ and $\beta \leq \sum_{i\in \cJ} w_i \leq 1$ states that the second term of \eqref{eq:phi2sums} is bounded in the following way:
\begin{equation}
\label{eq:secondterm}
   M^p  \sum_{i \in \cJ, \epsilon' > M w_i^{2+\frac{1}{p}}  } w_i^{2p+1} \leq  2 M^p \alpha^{2p} = 2M^p \left(\frac{\epsilon'}{M}\right)^{\frac{2p^2}{2p+1}} = 2 M^{\frac{p}{2p+1}} (\epsilon')^{\frac{2p^2}{2p+1}} \,.
\end{equation}
Substituting the two upper bounds \eqref{eq:firstterm} and \eqref{eq:secondterm} into~\eqref{eq:phi2sums}, we have
\[
\phi((w_i)_{i \in \cJ}) \leq  3 M^{\frac{p}{2p+1}} (\epsilon')^{\frac{2p^2}{2p+1}}  \,,
\]
which substituted into~\eqref{eq:Lpboundphi} yields
\begin{align}
  \big\|\hat f_t - f\big\|_p^p  
    & \leq \min\{K,t\} (\epsilon')^p + 3 M^{\frac{p}{2p+1}} (\epsilon')^{\frac{2p^2}{2p+1}} \nonumber \\
    & \leq \min\{\epsilon^{-\alpha},t\} c^p \epsilon^{\gamma p} + 3 M^{\frac{p}{2p+1}} c^{\frac{2 p^2}{2p+1}}\epsilon^{\frac{2\gamma p^2}{2p+1}} \,. \label{eq:C2_intemediateUB}
\end{align}

Now, we finalize the proof by considering three cases depending on the value of $\alpha$ and by optimizing $\gamma$ and $c$ for each situation.

\emph{$\bullet$ Case 1: $0 \leq \alpha \leq 1/2$}. Then,
$\epsilon^{-\alpha} \leq \epsilon^{-1/2}$, which substituted in~\eqref{eq:C2_intemediateUB}, implies
\[
     \big\|\hat f_t - f\big\|_p^p \leq c^p \epsilon^{\gamma p - 1/2} + 3 M^{\frac{p}{2p+1}} c^{\frac{2 p^2}{2p+1}} \epsilon^{\frac{2\gamma p^2}{2p+1}} \,.
\]
Choosing $\gamma = 1 + \frac{1}{2p}$ yields
\[
     \big\|\hat f_t - f\big\|_p^p \leq \Big(c^p  + 3 M^{\frac{p}{2p+1}} c^{\frac{2 p^2}{2p+1}}\Big) \epsilon^{p} \,.
\]
The choice $c = \min\{2^{-1/p}, (6M)^{-\frac{1}{2p}}\}$ implies for $t\geq \tau_\epsilon$
\[
    \big\|\hat f_t - f\big\|_p \leq \epsilon \,,
\]
and by~\eqref{eq:tau_epsilon}, the number of required evaluations is of order
\[
  \tau_\epsilon = \tilde \cO\Big(c^{-p} \epsilon^{-\frac{\gamma p}{p+1}}\Big) = \tilde \cO\Big(\epsilon^{-1 + \frac{1}{2p+2}} \Big) \,,
\]
which concludes the first statement of the theorem. 

\medskip
\emph{$\bullet$ Case 2: $\alpha \geq 1$}. Then, one may then use Theorem~\ref{thm:GreedyBoxUB}, which does note use the piecewise-regularity assumption of $f$, and implies that  $\|\hat f_t - f\|_p \leq \epsilon$ for $t \geq \tau_\epsilon$ with
\[
    \tau_\epsilon = \cO\big(\log(1/\epsilon)^2 \cN_p(f,\epsilon)\big) =  \cO\big(\log(1/\epsilon)^2\epsilon^{-1}\big) = \tilde \cO\Big(\epsilon^{-1 + \big(\frac{1-\alpha}{1+p}\big)_+}\Big)\,.
\]

\medskip
\emph{$\bullet$ Case 3: $1/2\leq \alpha \leq 1$}. Then,~\eqref{eq:C2_intemediateUB} yields
\[
 \big\|\hat f_t - f\big\|_p^p
\leq  c^p \epsilon^{\gamma p-\alpha} + 3 M^{\frac{p}{2p+1}} c^{\frac{2 p^2}{2p+1}}\epsilon^{\frac{2\gamma p^2}{2p+1}} \,.
\]
Substituting the choice $\gamma = 1+ \frac{\alpha}{p}$ into~\eqref{eq:C2_intemediateUB} further entails
\[
\big\|\hat f_t - f\big\|_p^p
\leq  c^p \epsilon^{p} + 3 M^{\frac{p}{2p+1}} c^{\frac{2 p^2}{2p+1}}\epsilon^{\big(\frac{2p+2\alpha}{2p+1}\big) p} \leq \Big(c^p + 3 M^{\frac{p}{2p+1}} c^{\frac{2 p^2}{2p+1}} \Big) \epsilon^{p}\,.
\]
Similary to the first case, choosing $c = \min\{2^{-1/p}, (6M)^{-\frac{1}{2p}}\}$ implies
\[
    \big\|\hat f_t - f\big\|_p \leq \epsilon \,,
\]
and for any $t\geq \tau_\epsilon$ of order
\[
  \tau_\epsilon = \tilde \cO\Big(c^{-p} \epsilon^{-\frac{\gamma p}{p+1}}\Big) = \tilde \cO\Big(\epsilon^{-1 + \frac{1-\alpha}{1+p}} \Big) \,.
\]
This concludes the proof.
\end{proof}

\subsection{Proof of Proposition~\ref{prop:counterexample_t_new}}

\begin{customprop}{\ref{prop:counterexample_t_new}}
    Let $p\geq 1$,  $\epsilon \in (0,1)$ and $\alpha >0$. Then, for any deterministic adaptive algorithm $\cA$ and for any 
    $$
        t < (2\epsilon)^{-1 + \big(\frac{1-\alpha}{1+p}\big)_+}-1\,,
    $$
    there exists a non-decreasing piecewise-affine function $f:[0,1]\to[0,1]$ with at most $\max\{2,\lceil \epsilon^{-\alpha}\rceil\}$ discontinuities, such that $\|f - \hat f_t\|_p > \epsilon$.
\end{customprop}

\begin{proof}
Let $p\geq 1$, $\epsilon > 0$ and $\alpha >0$. Let $\cA$ be an adaptive algorithm and fix $t\geq 1$ a number of evaluations. We aim to design a function $f$ with at most $K = \lceil \epsilon^{-\alpha}\rceil$ discontinuities such that $\|f - \hat f_t\|_p > \epsilon$ if $t$ is sufficiently small. 

\smallskip
Let $g:x \to x$ be the identity function on $[0,1]$. Let $x_0 = 0$ and $x_{t+1} = 1$ and denote by $0 \leq  x_1\leq \dots \leq x_t \leq 1$ the points that the algorithm would have chosen after $t$ evaluations, if it was applied on $g$ and by $\hat f_t$ its estimation. Then, we define two functions $g_-$ and $g_+$ such that 
$$
    \|g_- - g_+\|_p^p \geq \min\{ K t^{-1+p}, t^{-p} \}.
$$
Let $K_- = \min\{K, t+1\}$. For $i\in \{1,\dots,t+1\}$, we define $w_i = x_i - x_{i-1}$ the width of the $i$-th interval. Let $\cJ$ denotes the set of indexes that correspond to the $K_-$ largest intervals (i.e., such that $w_i \geq w_j$ for all $i \in \cJ$ and $j \notin \cJ$ and $|\cJ| = K_-$). Then, we define for all $x \in [0,1]$
\[
    g_-(x) = \left\{ \begin{array}{ll}
            x & \text{if } \exists i \notin \cJ, \text{ such that } x \in [x_{i-1},x_i] \\
            x_{i-1} & \text{if } x \in [x_{i-1},x_{i}) \text{ for } i \in \cJ
        \end{array}\right. 
\]
and
\[
    g_+(x) = \left\{ \begin{array}{ll}
            x & \text{if } \exists i \notin \cJ, \text{ such that } x \in [x_{i-1},x_i] \\
            x_{i} & \text{if } x \in (x_{i-1},x_{i}] \text{ for } i \in \cJ
        \end{array}\right.  \,.
\]
Then, $g_-$ and $g_+$ have at most $K$ discontinuities and are such that $g_-(x_i) = g_+(x_i) = g(x_i) = x_i$ for all $i \in \{1,\dots,t\}$. Thus, since $\cA$ is deterministic, the function estimation and the points chosen by $\cA$ on $g_-$ and $g_+$ after $t$ evaluations would also respectively be $\hat f_t$ and $x_1,\dots,x_t$. 

Furthermore, we have
\begin{align*}
    \big\|g_- - g_+\big\|_p^p 
        & = \int_0^1 \big|g_-(x) - g_+(x)\big|^p dx\\
        & = \sum_{i \in \cJ} \int_{x_{i-1}}^{x_i} \big|g_-(x) - g_+(x)\big|^p dx \\
        & = \sum_{i \in \cJ} w_i^{p+1} \\
        & \geq K_- \Big(\frac{1}{K_-} \sum_{i \in \cJ} w_i \Big)^{p+1} \qquad \leftarrow \quad \text{by Jensen's Inequality} \\
        & \geq K_- \Big(\frac{1}{t+1} \sum_{i=1}^{t+1} w_i \Big)^{p+1} \hspace*{.6cm} \leftarrow \quad \text{by Definition of $\cJ$}\\
        & = K_- (t+1)^{-(p+1)} \hspace*{1.4cm} \leftarrow \quad \text{because } \sum_{i=1}^{t+1} w_i = 1\,.
\end{align*}
By triangular inequality, this yields
\begin{align*}
    \max_{f \in \{g_-,g_+\}} \big\|\hat f_t - f\big\|_p 
        & \geq \frac{1}{2} \Big( \big\|\hat f_t - g_-\big\|_p + \big\|\hat f_t - g_+\big\|_p \Big)\\
        & \geq \frac{1}{2} \big\|g_- - g_+\big\|_p \\
        & \geq \frac{1}{2} K_-^\frac{1}{p} (t+1)^{-\frac{p+1}{p}} \,.
\end{align*}
Therefore, $\max_{f \in \{g_-,g_+\}} \big\|\hat f_t - f\big\|_p > \epsilon$ if
\[
    \frac{1}{2} K_-^\frac{1}{p} (t+1)^{-\frac{p+1}{p}} > \epsilon
\]
which, using  $K_- \geq \min\{t+1,\epsilon^{-\alpha}\}$, is satisfied for 
\[
    t < \min\big\{(2\epsilon)^{-1 + (\frac{1-\alpha}{1+p})}, (2\epsilon)^{-1} \big\} -1 \,.
\]
Noting that $g_-$ and $g_+$ have at most $K \leq \lceil\epsilon^{-\alpha}\rceil$ discontinuities and that $g''(x) = 0$ elsewhere concludes the proof. 
\end{proof}

\subsection{Proof of Proposition~\ref{prop:counterexample_t}}

\begin{customprop}{\ref{prop:counterexample_t}}
  Let $\epsilon \in (0,1/12)$. Then, there exists piecewise-$C^2$ function $f_\epsilon$ with one $C^1$-singularity, such that there exists $t \geq 2^{-7} \epsilon^{-3/4}$ with
  $
    \big\|\hat f_t - f_\epsilon \big\|_1 > \epsilon
  $
  where $\hat f_t$ is the $\GreedyBox$ approximation at $t$ evaluations.
\end{customprop}

\begin{proof}
Let $k \in \N$ that will be chosen later by the analysis.
Set $s = 2^{3k}$, $s' = 2^{2k}$ and $t = \frac{s + s'}{2}$.

\smallskip
\emph{Step 1. Design of a worst-case function} We will design a worst-case function $f^t$ that will cause GreedyBox to incur a large $L^1$-error after $t$ iterations. The function will consist of two components: one for $x \leq 1/2$ that oscillates with a second derivative $|({f^t})''(x)| = 1$, and another that is linear for $x \geq 1/2$.
$f^t$ has a continuous derivative and its second derivative is piecewise-continuous wit $K = \frac{s'}{2}$ singularities.
An illustration is given in Figure~\ref{fig:f6} for $k=2$ ($t=40$).

\smallskip
Let us first design the first oscillating part, that we call $g_{s'}$, and is defined recursively for all $x \in [0, 1]$ by
\begin{equation}
\label{eq:gfunction}
g_{s'}(x) = \left\{
  \begin{array}{ll}
    x^2 &\text{ if }  x \in \left[0, \frac{1}{s'}\right]\\
    -\left(x - \frac{1}{s'}\right)^2 + \frac{2}{s'}\times\left(x-\frac{1}{s'}\right) + \left(\frac{1}{s'}\right)^2 & \text{ if }  x \in \left[\frac{1}{s'}, \frac{2}{s'}\right]\\
    g_{s'}\left(x-\frac{2i}{s'}\right) + 2i \left(\frac{1}{s'}\right)^2 & \hspace*{-1.5cm} \text{ if }  x \in \left[\frac{2i}{s'}, \frac{2(i+1)}{s'}\right], i \in \big\{1, \ldots, \frac{s'}{2}-1\big\}\;.\\
  \end{array}
\right.
\end{equation}
Then, we define $f^t$ by: for all $x \in [0,1]$
\begin{equation}
\label{eq:bad_function}
f^t(x) = \mathds{1}_{\{x \leq 1/2\}} g_{s'}(x) + \mathds{1}_{\{x > 1/2\}} \left(x - \frac12 + \frac{1}{2s'}\right) \,.
\end{equation}

\medskip
\emph{Step 2. Expression of the approximation $\hat f_t$ provided by GreedyBox after $t$ iterations on $f^t$}. Let us understand how $\GreedyBox$ behaves during the first $t$ iterations when given this function.
Once done, we will be able to retrieve the expression of the approximation $\hat{f}_t$ of $f^t$ made by $\GreedyBox$.

Remark that for $i \in \{0, \ldots, s'/2-1\}$, (during the first oscillating part), the area of the box $B_i = \left[\frac{i}{s'}, \frac{i+1}{s'}\right] \times \left[f^t\left(\frac{i}{s'}\right), f^t\left(\frac{i+1}{s'}\right)\right]$ is
$$
\cA_p(B_i) = \frac{1}{s'} \times \frac{1}{{s'}^2}=\frac{1}{{s'}^3} = \frac{1}{s^2}.
$$
Similarly, for $j \in \left\{\frac{s}{2}, \ldots,  s-1\right\}$, (during the linear part), the area of the box $B_j = \left[\frac{j}{s}, \frac{j+1}{s}\right] \times \left[f^t\left(\frac{j}{s}\right), f^t\left(\frac{j+1}{s}\right)\right]$ is
$$
\cA_p(B_j) = \frac{1}{s^2}.
$$
Thus, because $\GreedyBox$ tends to equalize the areas of the different boxes, and because it maintains areas of the form $[i/2^j, (i+1)/2^j]$ for some $j$ in $\N^*$ and  $ i \in \{0, \ldots,  2^j-1\}$ (it can only splits intervals in $2$), the box cover we just described is a potential output of $\GreedyBox$ after $t = (s'+s)/2$ epochs.

Let us formally prove that it is precisely the case. For any $0 \leq j'\leq 3k$ and $i \in \{1, \ldots, 2^{j'-1}-1\}$, the area of the box $[i2^{-j'} , (i+1)2^{-j'}] \times [\hat{f}_t(i2^{-j'}) + \hat{f}_t((i+1)2^{-j'})]$ is $1/8^{j'}$.
Likewise, for any $0 \leq j \leq 2k$ and $i \in \{2^{j-2}, \ldots 2^{j-1}-1\}$, the area of the box $[i2^{-j} , (i+1)2^{-j}] \times [\hat{f}_t(i2^{-j}) + \hat{f}_t((i+1)2^{-j})]$ is $1/4^{j}$.
Thus, if for some epoch $t' \leq t$, one box on the left of $1/2$ has a width greater than $3k$, its area will be greater than or equal to $8/s^2$, and if one box on the right of $1/2$ has a width greater than $2k$, its area will be greater than or equal to $4/s^2$.
In both cases, the area is strictly greater than $1/s^2$, and  $\GreedyBox$ would choose at Line~\ref{line:GB_selection} to split this box rather than any other box that already has area $1/s^2$.

Splitting one by one boxes with area greater than $1/s^2$, $\GreedyBox$ obtains at time $t$ the $t$ boxes with exact area $1/s^2$ described previously.
$\hat{f}_t$ is the linear interpolation between all the points $(x_i, f^t(x_i))$ where $x_i$ is the extremity of one of the box.

\medskip
\emph{Step 3. Error of GreedyBox on $f^t$ after $t$ iterations}. Now that we exhibited the exact value of $\hat{f}_t$, we are left with computing the total error made by $\GreedyBox$ on $f^t$ after $t$ steps.
Since $f^t$ is linear on the interval $[1/2, 1]$, $\hat{f}_t$ equals $f^t$ on this segment. Then,
\begin{align*}
\big\|f^t - \hat{f}_t\big\|_1 &= \int_0^{\frac{1}{2}} \abs{f^t(x) - \hat{f}_t(x)}dx
= \sum_{i = 1}^{s'/2-1} \int_{\frac{i}{s'}}^{\frac{i+1}{s'}} \abs{f^t(x) - \hat{f}_t(x)}dx\\
&= \frac{s'}{2} \int_{0}^{\frac{1}{s'}} \abs{f^t(x) - \hat{f}_t(x)}dx = \frac{s'}{2} \int_{0}^{\frac{1}{s'}} \abs{x^2 - \frac{x}{s'}}dx\\
&= \frac{s'}{2} \left( \frac{1}{2{s'}^3} - \frac{1}{3{s'}^3} \right) = \frac{1}{12{(s')}^{2}}\;.\\
\end{align*}

\medskip
\emph{Step 4. Choice of $k$.} We are left with choosing $k$ as large as possible such that the above error is at least $\epsilon$. That is,
\[
\big\|f^t - \hat{f}_t\big\|_1 > \epsilon
\]
which can be rewritten as
\[
\frac{2^{-4k}}{12} = \frac{1}{12{s'}^{2}} > \epsilon  \qquad \Leftrightarrow \qquad k < \frac{1}{4} \log_2\big(\frac{1}{12\epsilon}\big) \,.
\]
Thus choosing $k = \big\lfloor \frac{1}{4} \log_2\big(\frac{1}{12\epsilon}\big) \big\rfloor$, yields
\[
t = \frac{2^{2k}+2^{3k}}{2} \geq 2^{3k-1} \geq \big(12\epsilon\big)^{-\frac{3}{4}}2^{-4} > 2^{-7} \epsilon^{-\frac{3}{4}} \,.
\]
Note that the designed function has a number of $C^2$-singularities
\[
K = \frac{s'}{2} = 2^{2k-1} \leq \frac{1}{2}(12\epsilon)^{-1/2} \leq \epsilon^{-1/2}\,,
\]
but only has one $C^1$-singularity at $x=1/2$. This concludes the proof.
\end{proof}

\subsection{Technical lemmas for regular functions}

\label{app:C2trapeze}
In this section, we establish two technical lemmas that are used in our analysis of GreedyBox for piecewise-$C^2$ functions. The first lemma, presented below, is a property of strictly convex functions.

\medskip
\begin{lemma}
  \label{lem:convex_prop}
  Let $n \in \N^*$, $\beta \in \R^+$, $\alpha \in [\beta/n, \beta]$ and $f$ be a strictly convex function on $[0, \beta]$.
  Then
  \[ \max \left\{\sum_{i=1}^n f(x_i) \colon \forall i, x_i \leq \alpha, \sum_{i=1}^n x_i = \beta \right\} = m f(\alpha) + f(\beta - m\alpha) + (n-m+1) f(0)\;,\]
  and the maximum is reached for $x^\star_1 = \ldots = x^\star_m = \alpha$, $x^\star_{m+1} = \beta - m\alpha$, $x^\star_{m+2} = \ldots = x^\star_n = 0$, where $m = \lfloor\beta/\alpha \rfloor$.
\end{lemma}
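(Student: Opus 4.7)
The plan is to reduce the maximization over the compact convex polytope $P=\{x\in[0,\alpha]^n : \sum_i x_i=\beta\}$ to an enumeration of its vertices, using a one-step exchange argument based on strict convexity of $f$. First I would note that $P$ is nonempty (it contains $(\beta/n,\ldots,\beta/n)$, since the hypothesis $\alpha\geq\beta/n$ guarantees this point lies in $[0,\alpha]^n$) and compact, so that $\sum_i f(x_i)$ attains a maximum on $P$; let $x^\star$ denote a maximizer.

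The core structural step is to show that at any maximizer $x^\star$, at most one coordinate lies in the open interval $(0,\alpha)$. Assume for contradiction that two indices $i\neq j$ satisfy $x_i^\star,x_j^\star\in(0,\alpha)$, and WLOG $x_i^\star\leq x_j^\star$. For $\epsilon>0$ small enough that $x_i^\star-\epsilon\geq 0$ and $x_j^\star+\epsilon\leq\alpha$, the perturbed point $(x_1^\star,\dots,x_i^\star-\epsilon,\dots,x_j^\star+\epsilon,\dots,x_n^\star)$ still belongs to $P$, and strict convexity of $f$ yields
\[
f(x_i^\star-\epsilon)+f(x_j^\star+\epsilon) \;>\; f(x_i^\star)+f(x_j^\star),
\]
either by strict monotonicity of secant slopes when $x_i^\star<x_j^\star$, or by Jensen's inequality applied to the midpoint when $x_i^\star=x_j^\star$. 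This contradicts the optimality of $x^\star$.

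It follows that the coordinates of $x^\star$ take values in $\{0,\alpha\}$ except possibly one remainder $r\in[0,\alpha]$. Writing $k$ for the number of coordinates equal to $\alpha$, the constraint $\sum_i x_i^\star=\beta$ forces $r=\beta-k\alpha$, whence $k\in\{\lfloor\beta/\alpha\rfloor,\lceil\beta/\alpha\rceil\}$; by reassigning the boundary case $r=\alpha$ into an additional coordinate equal to $\alpha$, we may take $k=m:=\lfloor\beta/\alpha\rfloor$ and $r=\beta-m\alpha\in[0,\alpha)$. The assumption $\alpha\in[\beta/n,\beta]$ guarantees $1\leq m\leq n$, so this configuration indeed lies in $P$. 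By permutation invariance of the objective, the maximum value equals the one produced by the explicit ordering in the statement, concluding the proof. The main obstacle is the careful handling of the exchange step, in particular the strict inequality in the coincident case $x_i^\star=x_j^\star$, which is resolved by the one-dimensional Jensen argument sketched above.
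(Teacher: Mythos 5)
Your proof is correct and follows essentially the same route as the paper's: a pairwise exchange (perturbation) argument based on strict convexity that forces all but at most one coordinate of a maximizer to the extremes $\{0,\alpha\}$, followed by a counting step that identifies the configuration $m=\lfloor\beta/\alpha\rfloor$; your write-up is in fact a bit tidier, since it makes the existence of a maximizer explicit via compactness and treats the coincident case $x_i^\star=x_j^\star$ separately. The only blemish is the bookkeeping claim $k\in\{\lfloor\beta/\alpha\rfloor,\lceil\beta/\alpha\rceil\}$ (the correct range is $k\in\{\lceil\beta/\alpha\rceil-1,\lfloor\beta/\alpha\rfloor\}$), which is harmless because you immediately reassign the boundary case $r=\alpha$ and land on the stated configuration.
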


\begin{proof}
Let $x^\star \in \R^n$ be as in the statement of the theorem, and let us show that it is optimal.
Let $x$ be a real-number.
Since only the sum of the $f(x_i)$ matters, we can assume without loss of generality that the $x_i$'s are sorted in decreasing order.
Now, assume that $x_m< \alpha = x^\star_m$, and let us show that $\sum_{i=1}^n f(x_i)$ is not a maximum.

Because the sum of the $x_i$'s still needs to be equal to $\beta$, either $x_{m+1} > x^\star_{m+1} = \beta - m\alpha$, or $x_{m+2}> x^\star_{m+2} = 0$.
Assume first that $x_{m+2} > x^\star_{m+2}$.
Furthermore, assume that $\min\{x^\star_m -x_m, x_{m+2} - x^\star_{m+2}\} = x^\star_m - x_m$.
This means that $x_m$ is closer to $x^\star_m$ than $x_{m+2}$ is from $x^\star_{m+2}=0$.
Let  $x = x^\star_m$, $\lambda = \frac{x^\star_m - x_{m+2}}{2x^\star_m - x_m - x_{m+2} } \in (0,1)$ and $y = x_{m+2}+x_m-x^\star_m\in [0, x_{m+2})$.
Then,
\begin{align*}
f(x_{m}) + f(x_{m+2})
&= f(\lambda x+ (1-\lambda)y) + f((1-\lambda) x + \lambda y) \\
&< \lambda f(x) + (1-\lambda)f(y) + (1-\lambda)  f(x) + \lambda f(y) \\
&\leq f(x^\star_m) + f(y) \;.
\end{align*}
Then for $x'_m = x$, $x'_{m+2} = y$ and $x'_i = x_i$ for $i \neq \{m, m+2\}$, $\sum_{i=1}^n f(x'_i) < \sum_{i=1}^n f(x_i)$, which shows that $(x_i)_{1 \leq i \leq n}$ is suboptimal.
We can do the same kind of construction when $x_{m+1} > x^\star_{m+1}$ or when $\min\{x^\star_m -x_m, x_{m+2}-x^\star_{m+2}\} = x_{m+2} - x^\star_{m+2}$.
All these different cases show that $\sum_{i=1}^n f(x_i)$ is maximized only when the largest possible amount of the $x_i$'s are at the extremity of the constraint set, that is when $x_i = 0$ or $x_i = \alpha$.
This concludes the proof.
\end{proof}

The second technical Lemma below recalls a classical result and provides an upper bound on the $L^p$-error achieved by an affine approximation of a $C^2$ function. In particular, this lemma implies a sample complexity of order $\cO(\epsilon^{-\frac{1}{2}})$ for the trapezoidal rule to provide an $\epsilon$-approximation in $L^p$ norm for $C^2$ functions.

\medskip
\begin{lemma}
\label{lem:approxLp_affine}
Let $a < b$. Assume that $f: [a,b] \to [0,1]$ is $C^1$ and piecewise-$C^2$ and such that  $|f^{(2)}(x)| \leq M$ for all $x$ where it is defined. Then,
\[
    \int_a^b \Big|\hat f(x) - f(x)\Big|^p dx \leq \Big(\frac{3M}{2}\Big)^p (b-a)^{2p+1}\,,
\]
where $\hat f$ is the affine approximation defined for all $x \in [a,b]$ by:
\[
  \hat f(x) = f(a) +  \big(f(b)-f(a)\big)\frac{x-a}{b-a} \,.
\]
\end{lemma}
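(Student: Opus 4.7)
The natural idea is to bound the interpolation error $e(x) \eqdef f(x)-\hat f(x)$ pointwise and then integrate. The classical Cauchy identity $e(x) = \tfrac{1}{2}f''(\xi_x)(x-a)(x-b)$ requires $f \in C^2$, which we do not have; but since $f$ is $C^1$ with $f'$ absolutely continuous (by the piecewise-$C^2$ assumption), a softer argument based on the mean value theorem and integration is enough.

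\textbf{Step 1: a point $c$ where $e'$ vanishes.} Since $\hat f$ is affine, $e(a)=e(b)=0$ and
\[
e'(x) = f'(x) - \frac{f(b)-f(a)}{b-a}.
\]
As $f$ is continuous on $[a,b]$ and differentiable on $(a,b)$, the mean value theorem yields some $c \in (a,b)$ with $f'(c)=(f(b)-f(a))/(b-a)$, i.e., $e'(c)=0$.

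\textbf{Step 2: bound $e'$ and then $e$.} Because $f$ is $C^1$ and piecewise-$C^2$ with $|f''|\leq M$ wherever defined, $f'$ (hence $e'$) is absolutely continuous on $[a,b]$, and for every $x\in[a,b]$,
\[
e'(x) = e'(x)-e'(c) = \int_c^x f''(t)\dif t, \qquad |e'(x)| \leq M|x-c| \leq M(b-a).
\]
Integrating from $a$ (using $e(a)=0$) and from $b$ (using $e(b)=0$) gives the two bounds $|e(x)|\leq M(b-a)(x-a)$ and $|e(x)|\leq M(b-a)(b-x)$, so
\[
|e(x)| \leq M(b-a)\min(x-a,\,b-x).
\]

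\textbf{Step 3: integrate and compare constants.} A direct computation (using symmetry around $(a+b)/2$) yields
\[
\int_a^b \min(x-a,b-x)^p \dif x = 2\int_a^{(a+b)/2}(x-a)^p\dif x = \frac{(b-a)^{p+1}}{(p+1)\,2^p},
\]
hence
\[
\int_a^b |e(x)|^p \dif x \leq \frac{M^p(b-a)^{2p+1}}{(p+1)\,2^p}.
\]
Since $(p+1)2^p \geq 2 \geq (2/3)^p$ for all $p\geq 1$, the constant $1/((p+1)2^p)$ is bounded by $(3/2)^p$, which gives the claimed inequality.

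\textbf{Main obstacle.} The only subtlety is justifying the fundamental theorem of calculus for $e'$, which requires $f'$ to be absolutely continuous; this is exactly what the piecewise-$C^2$ hypothesis provides, with the a.e.\ derivative $f''$ being bounded by $M$. The constant $3M/2$ in the statement is far from tight (the tighter Green's function argument gives $|e(x)|\leq \tfrac{M}{2}(x-a)(b-x)$), but the simpler route above already more than suffices.
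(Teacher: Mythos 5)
Your proof is correct, and it takes a genuinely different (and slightly sharper) route than the paper. The paper applies the Taylor--Lagrange formula with integral remainder at the left endpoint $a$: writing $f(x) = f(a) + f'(a)(x-a) + r(x)$ with $|r(x)| \leq \tfrac{M}{2}(x-a)^2$, evaluating at $x=b$ to compare the secant slope with $f'(a)$, and deducing the pointwise bound $|\hat f(x)-f(x)| \leq \tfrac{M}{2}\bigl[(b-a)(x-a)+(b-a)^2+(x-a)^2\bigr]$, which after integration gives exactly the constant $(3M/2)^p$. You instead exploit the boundary conditions $e(a)=e(b)=0$: the mean value theorem produces a zero $c$ of $e'$, the fundamental theorem of calculus for the absolutely continuous $f'$ gives $|e'(x)| \leq M(b-a)$, and integrating from each endpoint yields $|e(x)| \leq M(b-a)\min(x-a,b-x)$, hence the bound $\tfrac{M^p(b-a)^{2p+1}}{(p+1)2^p}$, which is strictly stronger than the stated $(3M/2)^p(b-a)^{2p+1}$ since $(p+1)3^p \geq 1$. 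Both arguments rest on the same regularity fact, which you and the paper each invoke explicitly: $f'$ is absolutely continuous (indeed $M$-Lipschitz) because $f$ is $C^1$ and piecewise-$C^2$ with $|f''|\leq M$ wherever defined, so the fundamental theorem of calculus applies with the a.e.\ derivative $f''$. Your use of a stationary point of the error and the vanishing of $|e|$ at both endpoints is what buys the better constant; the paper's one-sided Taylor expansion is slightly cruder but requires no Rolle-type step. Either way the constant is ample for the purposes of Theorem~\ref{thm:c2upper_bound}, so your proof is a valid substitute.
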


\begin{proof}
Since $f$ is piecewise-$C^2$ with bounded second derivative, $f'$ is absolutely continuous, we can thus apply the Taylor-Lagrange Theorem with the integral form of the remainder with $k = 1$. We have for all $x,y \in [a,b]$
\[
    \big|f(y) - f(x) - f'(x)(y-x)\big| = \Big|\int_x^y f''(u)(y-u) du\Big| \leq \frac{M}{2} (y-x)^2\,.
\]
We now control the $L^p$-error of $\hat f$. The above inequality yields that for all $x \in [a,b]$ there exists $r(x)$ such that $|r(x)| \leq  \frac{M}{2} (x-a)^2$ and
\[
    f(x) = f(a) + f'(a)(x-a) + r(x) \,.
\]
Thus, applying it with $x=b$ entails
\[
    \frac{f(b)-f(a)}{b-a} = f'(a) +  \frac{r(b)}{b-a} \,,
\]
which in turns implies that
\begin{align*}
    \big|\hat f(x) - f(x)\big|
        & = \big| f(a) + (x-a)\frac{\big(f(b)-f(a)\big)}{b-a} - f(x)\big| \\
        & = \big| (x-a) \frac{r(b)}{b-a} + r(b) - r(x) \big| \\
        & \leq \frac{M}{2} \big[ (b-a)(x-a) + (b-a)^2 + (x-a)^2\big] \,.
\end{align*}
Therefore, with the change of variable $x= a+ (b-a)u$, we get
\begin{align*}
    \int_a^b \big|\hat f(x) - f(x)\big|^p dx
        & \leq \Big(\frac{M}{2}\Big)^p \int_a^b \big[ (b-a)(x-a) + (b-a)^2 + (x-a)^2\big]^p dx \\
        & = \Big(\frac{M}{2}\Big)^p (b-a)^{2p+1} \int_0^1 (1+ u + u^2)^p du\\
        & \leq \Big(\frac{3M}{2}\Big)^p (b-a)^{2p+1} \,,
\end{align*}
which concludes the proof.
\end{proof}

\section{Numerical experiments: the \texorpdfstring{$L^2$}{L2}-norm case}
\label{sec:more_experiments}

In the core of the paper, we gave some plots on the error of $\GreedyBox$, and $\GreedyWB$ as compared to the trapezoidal rule in Section~\ref{sec:experiments}.
In this section, we complete the comparison by displaying some plots for the $L^2$-norm.
In this case, the area of a box $B = [c^-, c^+] \times [y^-, y^+]$ is worth $(y^+-y^-)(c^+ - c^-)^2$.

We can notice several interesting facts on the plots of Figure~\ref{fig:plot_rates_2norm}.
First, the trapezoidal rule, and $\GreedyWB$ behaves better than $\GreedyBox$ for the square function. This is the first case we could find where the trapezoidal rule has a better speed of convergence than $\GreedyBox$.
Another point is that on Figure~\ref{fig:power_rate_2norm}, the trapezoidal rule has a bound of order $t^{11/20}$.
However, in Theorem~\ref{thm:GreedyBoxUB}, we proved that $\GreedyBox$ has an error at worst of the order of $\cN(f, \epsilon)$, which we proved to be smaller than $\left\lceil\epsilon^{-1}\right\rceil$.
Yet, for this example, the trapezoidal does not converge in $\mathcal{O}(t)$, which proves that $\GreedyBox$ satisfies better worst case property than the trapezoidal rule for $L^p$-norm with $p>1$.
Figure~\ref{fig:nearly1_rate_2norm} shows results similar to the $L^1$-norm. In, Figure~\ref{fig:pathological_rate_2norm}, we consider the $L^2$ for approximating $g^t:x \mapsto \frac{1}{2} f^{t^{9/10}}(2x) \indic_{x\leq 1/2} + \indic_{x>1/2}$, where $f^s$ is the function defined in equation~\eqref{eq:bad_function} at $s = t^{9/10}$. The empirical rates seem to confirm once again the anticipated worst-case rates as determined by our analysis. 

\begin{figure}[ht]
  \subcaptionbox{Error rate in 2-norm on $f\colon x \mapsto x^2$\label{fig:square_rate_2norm}}{\includegraphics[width=.5\linewidth]{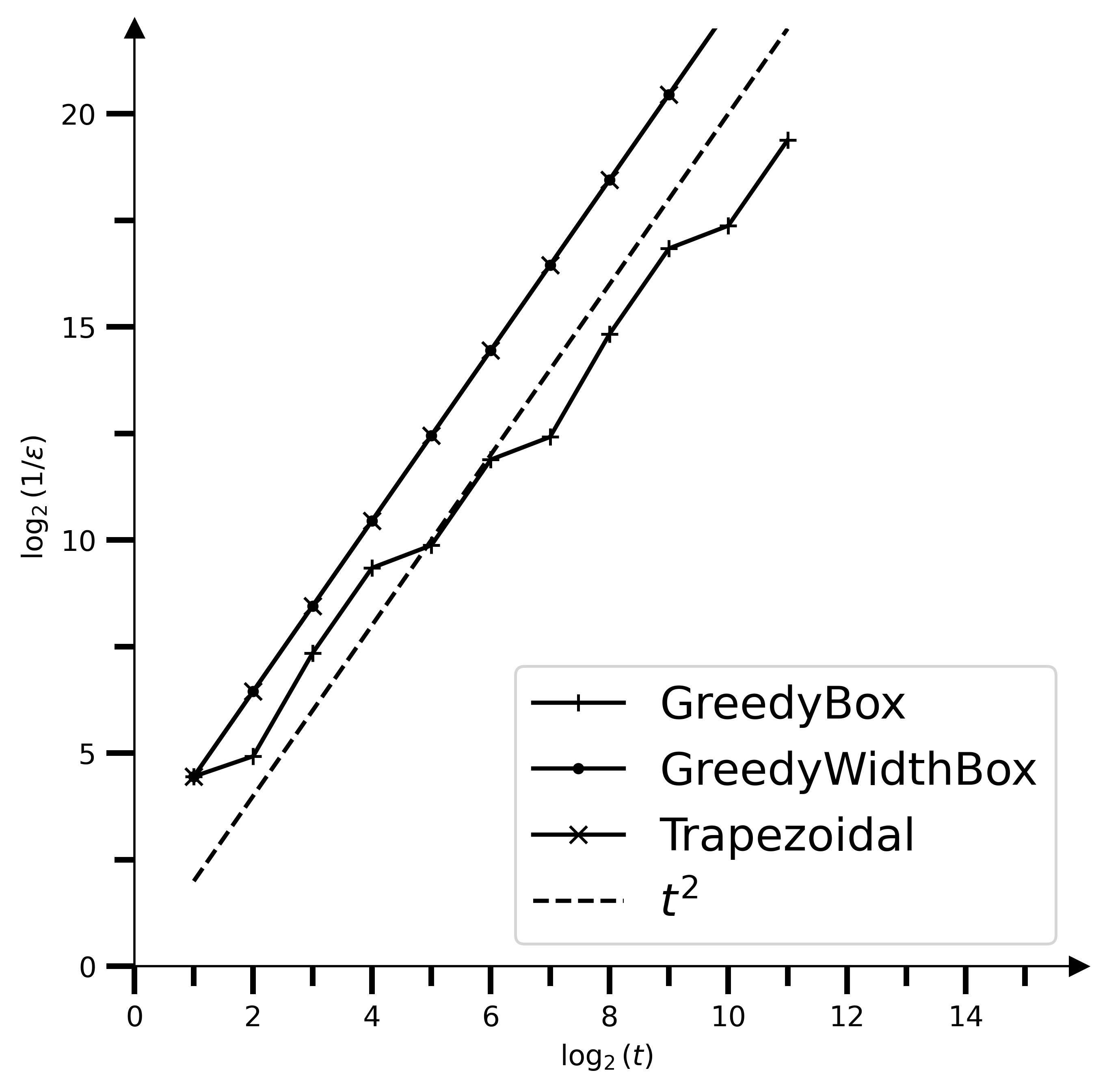}}\hfill
  \subcaptionbox{Error rate in 2-norm on $f\colon x \mapsto x^{1/10}$\label{fig:power_rate_2norm}}{\includegraphics[width=.5\linewidth]{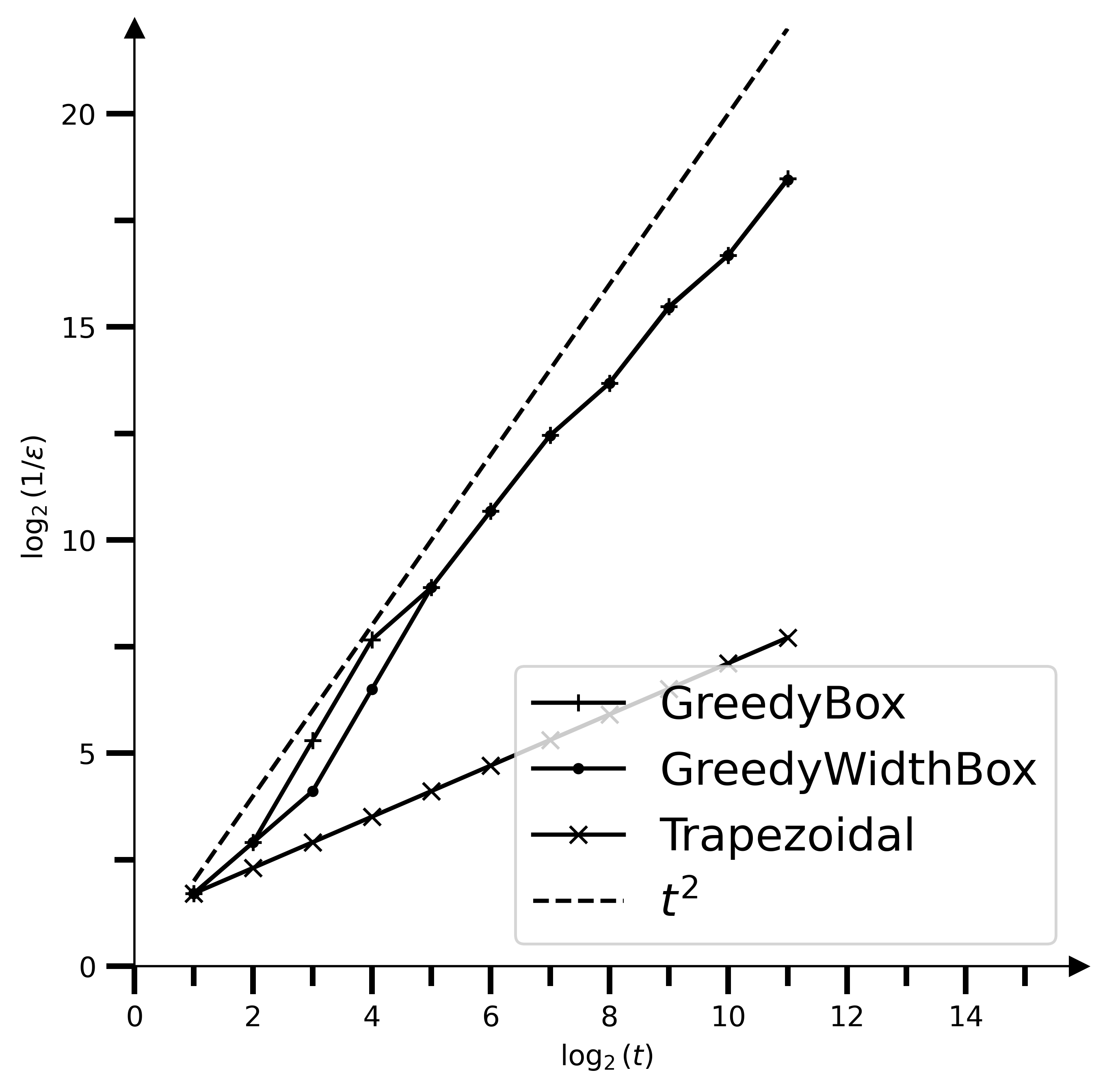}}\hfill\\
  \subcaptionbox{Error rate in 2-norm on $f(x)=\mathds{1}_{\{x\geq 0.3\}}$\label{fig:nearly1_rate_2norm}}{\includegraphics[width=.5\linewidth]{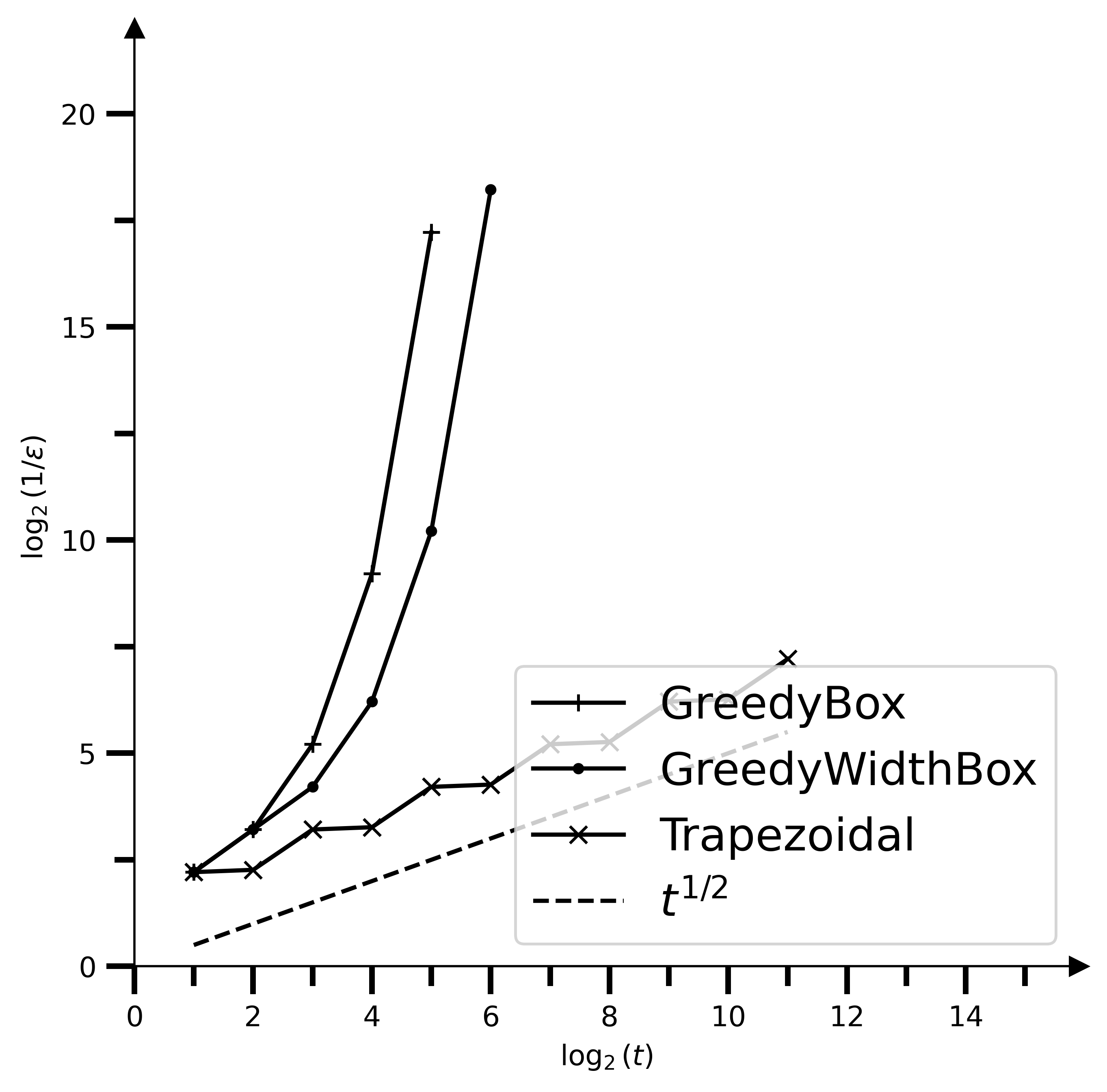}}\hfill
  \subcaptionbox{Error rate in 2-norm on $g^{t}$ \label{fig:pathological_rate_2norm}}{\includegraphics[width=.5\linewidth]{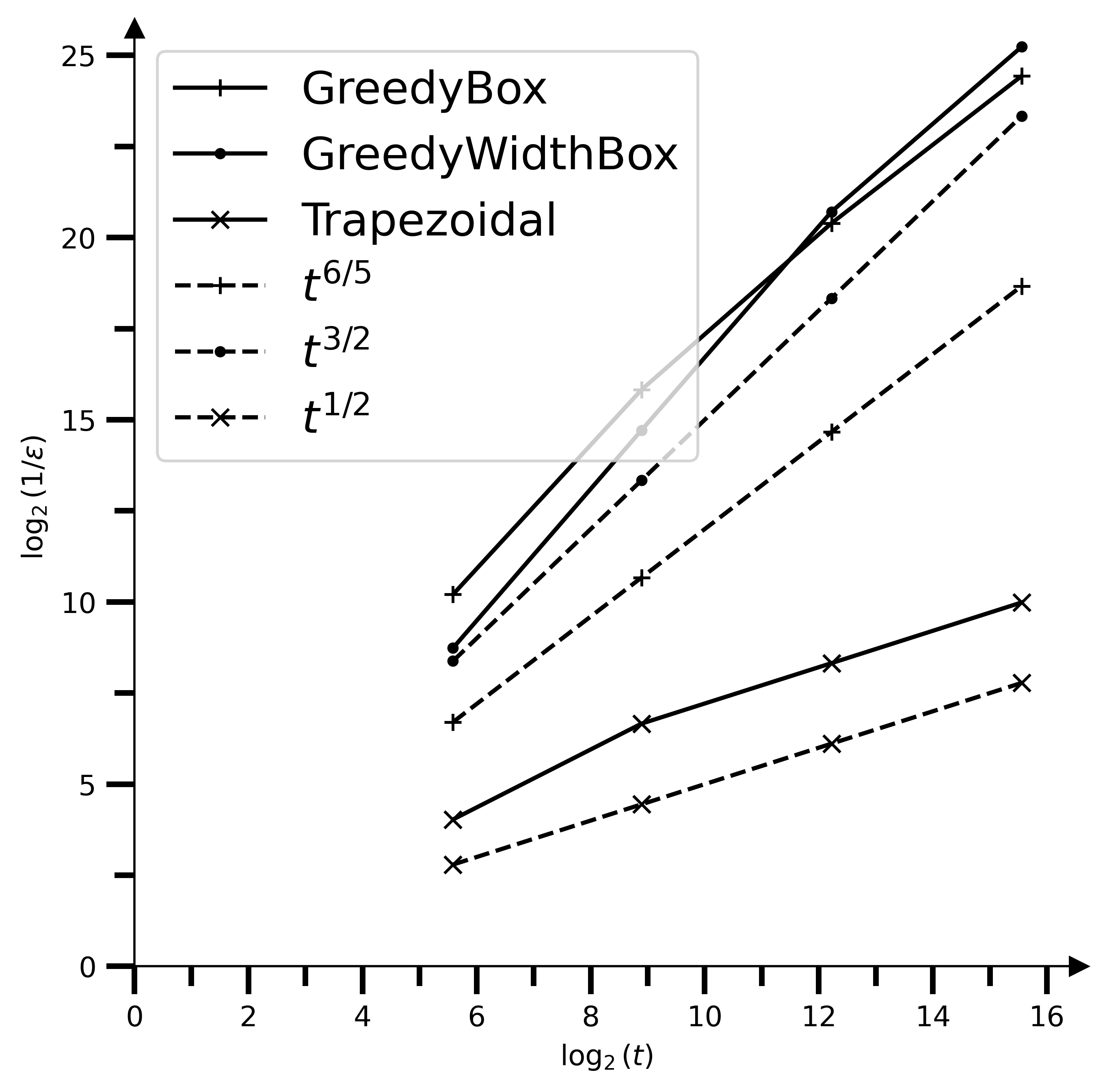}}\hfill
  \caption{Comparison of $\GreedyBox$ and $\GreedyWB$ with the trapezoidal rule for the 2-norm.}
  \label{fig:plot_rates_2norm}
\end{figure}

\end{appendices}

\bibliography{biblio}

\end{document}